%% file: paper.tex
\documentclass{article}
\usepackage{iclr2027_conference,times}
\usepackage{amsthm}
\usepackage{amsmath}
\usepackage{amssymb}
\usepackage{algorithm}
\usepackage{algorithmic}
\usepackage{graphicx}
\usepackage{natbib}
\usepackage{booktabs,makecell,pifont}

\usepackage{hyperref}
\usepackage{url}
\usepackage{comment}
\usepackage{mathtools}

\newcommand{\argmin}{\operatorname*{argmin}}
\newcommand{\argmax}{\operatorname*{argmax}}
\newcommand{\R}{\mathbb R}
\newcommand{\E}{\mathbb E}

\newcommand{\one}{\mathbf 1}
\newcommand{\op}{\mathrm{op}}
\newcommand{\softmax}{\operatorname{softmax}}

\newcommand{\triplenorm}[1]{%
  \left\lvert\!\left\lvert\!\left\lvert
  #1
  \right\rvert\!\right\rvert\!\right\rvert
}

\newcommand{\norm}[1]{\left\lVert #1\right\rVert}

\newcommand{\traceinner}[2]{%
  \left\langle\!\left\langle
  #1,\,#2
  \right\rangle\!\right\rangle
}

\newcommand{\rank}{\mathrm{rank}}
\newcommand{\tr}{\mathrm{tr}}
\newcommand{\diag}{\mathrm{diag}}
\newcommand{\Var}{\mathrm{Var}}

\newcommand{\Pp}{\mathbb P}

\newcommand{\StateSpace}{\mathcal{S}}

\newcommand{\ActionSpace}{\mathcal{A}}

\newcommand{\Transition}{\mathsf P}

\newcommand{\InitialDist}{d_0}

\newtheorem{lemma}{Lemma}
\newtheorem{theorem}{Theorem}

\newtheorem{assumption}{Assumption}
\newtheorem{proposition}{Proposition}

\theoremstyle{remark}
\newtheorem{remark}[lemma]{Remark}
\newtheorem{example}[lemma]{Example}
\newtheorem{claim}[lemma]{Claim}

\newcommand{\rankbound}{k}
\newcommand{\softvalue}{J}

\author{Allen Tran$^{1}$\thanks{Equal contribution.} \quad
Jia Wan$^{2}$\footnotemark[1] \quad
Nathan Kallus$^{1,3}$ \quad
 Aur\'elien Bibaut$^{1}$\\
$^1$Netflix \quad $^2$Massachusetts Institute of Technology \quad $^3$Cornell Tech \\
\texttt{\{allent,abibaut,nkallus\}@netflix.com}, \texttt{jiawan@mit.edu}}

\title{Scalable Multi-Task Inverse Reinforcement Learning}

\iclrfinalcopy 
\begin{document}

\maketitle
\begin{abstract}
By learning transferable rewards, inverse reinforcement learning (IRL) enables counterfactual evaluation of agents under modified environments. Such transfer places strict requirements on coverage since target environments affect agents' state occupancy. We propose a multi-task IRL method that pools data across multiple agents with different rewards in the same environment under a low-rank assumption. In addition to alleviating coverage requirements, so each task need not visit every state as long as others do, the method offers scalable evaluation of multiple tasks under new environments as computationally intensive planning scales with rank rather than the number of tasks. We provide finite sample guarantees on reward recovery and on policy learning in new environments. Experiments show our method is robust to limited coverage, recovers rewards on and off of each task's support, transfers to target environments at lower regret than baselines, with its computational advantage over per-task methods widening as tasks grow.

\end{abstract}
\section{Introduction}
\input{intro}

\section{Related Literature}
\input{related_lit}
\section{Method}\label{sec:method}
\input{method}

\section{Theoretical Guarantees}

\input{theory}

\section{Experiments}
\input{experiments}

\section{Conclusion}

\input{conclusion}

\section*{AI use statement}

In this work, we used generative AI tools for parts of coding up experiments (scaffolding for reproducibility from random seeds and tuning hyperparameters etc), figure generation and to flag mismatches in notation and typos in proofs and theorems remaining from multiple authors and the drafting process. We have reviewed all AI-assisted work: every proof, claim, and passage was checked by the authors, and we take responsibility for the final content of this work, including text, claims, and artifacts produced with the aid of generative AI.

\section*{Reproducibility Statement}
Assumptions and full proofs of Lemmas and Theorem 1 are in Appendix~\ref{sec:proofs}, and all experimental details (environments, reward construction, expert training, baselines, architectures, and the held-out tuning protocol) are in Appendix~\ref{app:experiments}. All environments are synthetic with known ground-truth rewards; every number is averaged over 10 seeds (mean and standard deviation), and anonymized code is included in the supplementary material.

\bibliography{irl}
\bibliographystyle{iclr2027_conference}

\appendix

\input{appendix}

\end{document}

%% file: intro.tex
Inverse reinforcement learning aims to recover rewards that rationalize behavior in expert demonstrations \citep{russell1998learning}. In that sense rewards are an agent's revealed preferences \citep{ziebart2008maximum}, enabling imitation learning \citep{abbeel2004apprenticeship, ho2016generative}, and support alignment from human feedback \citep{christiano2017deep}. 
Rewards are intrinsic to the agent's motives, as opposed to policies which are the result of planning given both the agent's rewards and the dynamics of the environment \citep{lucas1976econometric}. Hence we can use rewards to re-plan in perturbed environments and predict counterfactual behavior when agents face a new environment.

Many settings, such as online platforms and transportation networks, feature heterogeneous tasks. Following \cite{ng2000algorithms}, we understand a task as mapping one-to-one to a (here agent-specific) reward function. Shutting down a train line, for instance, leads commuters with distinct preferences over speed, cost, and crowding to re-route accordingly. Therefore, a planner must anticipate this re-optimization despite limited state coverage by each commuter. We consider $N$ independent tasks, that is $N$ agent-specific reward functions, for each of which we observe the induced soft-optimal behavior in a source environment. We then seek to learn these rewards so as to anticipate how the behavior of agents would transfer to a target environment.




Single-task demonstrations typically concentrate on a narrow region of the state space, so partial coverage weakens reward identification \citep{lazzati2024offline}. In particular, in the maximum-entropy RL framework, this materializes as temperature varies: high temperature improves coverage but flattens choice probabilities, while low temperature sharpens the reward signal but narrows coverage \citep{haarnoja2017reinforcement}. Fitting $N$ tasks independently inherits the single-task support problem. Here, we break this temperature-indexed tradeoff by encoding an often-natural structural relatedness of tasks by a low-rank assumption. As we prove formally, this lessens per-agent coverage requirements.

Learning under a low rank assumption pools data from multiple source tasks to learn a common-across-tasks low-dimensional representation of the tasks' input, in our context, agents' preferences. The statistical gain can be understood as stemming from needing to only ensure good coverage in that compressed representation as opposed to in a fully unrestricted per-agent, per-state, and per-action sense. In the context of multi-task reward learning for planning in a target environment, we find that a natural-to-analyze setting is not low-rankness of rewards themselves but low-rankness of policy logits.
This comes about from a representation of solutions to the maximum-entropy IRL problem in which rewards are affine in log policies \citep{cao2021identifiability}. Shared dynamics and subgoals make such low-dimensional structure plausible \citep{barreto2017successor}, and placing the assumption on choice probabilities rather than latent rewards makes it empirically testable (Section~\ref{sec:rank}). 


The combination of low rank log policies and a reward operator that is affine in log policies implies that the typically expensive planning step required for IRL reduces from being per-task to per-basis function. $\rankbound$ basis fixed points can be linearly composed to get $N$ reward functions instead of $N$ fixed points for $N$ reward functions. When $\rankbound\ll N$, as can be expected in online settings with many users, this represents a substantial computational speed-up. 

Our contributions are threefold. First, we introduce an offline, feature-free multi-task IRL method that pools coverage while retaining distinct per-task rewards and requiring only $\rankbound$ basis reward-inversion solves. Second, we prove finite-sample guarantees for logit recovery, target-policy evaluation, and policy recovery under a coverage condition that permits missing task–state blocks. Third, a causal coverage intervention validates the predicted mechanism, and experiments across three domains show stronger off-support reward recovery, robust counterfactual transfer in RecSim, and wall-clock time growing as $N^{0.21}$, up to $13\times$ faster than per-task GenPQR at $N=128$.

%% file: related_lit.tex
\noindent\textbf{Single Task IRL} 
Max-entropy IRL fits rewards by likelihood \citep{ziebart2008maximum}, while scalable variants use adversarial occupancy matching or soft $Q$-functions \citep{fu2018learning,garg2021iqlearn}. Partial coverage limits offline reward identification \citep{lazzati2024offline}. We build on the classify-then-regress view of \citet{van2025inverse}; the equivalent dynamic-discrete-choice literature estimates conditional choice probabilities and heterogeneous agent types \citep{rust1987optimal,arcidiacono2011conditional}.

\noindent\textbf{Multi-task IRL} 
Continuous latent-context methods such as PEMIRL infer task embeddings without known reward features, while hierarchical adversarial approaches learn shared multi-task structure \citep{yu2019meta,chen2023multitask}. These methods require simulator interaction and adversarial optimization and provide neither our finite-sample guarantees under missing coverage nor rank-scaled reward inversion. Multi-intention methods recover $K_{MI}$ discrete reward clusters and assign demonstrators to a mixture over them, often clustering directly on known reward features \citep{babesvroman2011apprenticeship,choi2012nonparametric,ramponi2020truly}; we instead recover $N$ distinct rewards through continuous loadings on $\rankbound$ shared bases.
  
\noindent\textbf{Low rank rewards and matrix completion} 
The closest work imposes low rank on rewards. \citet{shehab2025efficient} recover one agent's time-varying reward by nuclear-norm minimization, with rank across time and a known tabular model; \citet{guo2026provable}, extending \citet{nagaraj2023multiuser}, learn a low-rank reward matrix across tasks in forward RL, from sparsely observed rewards with online exploration and per-task planning. We instead impose low rank on log-policies and recover rewards from offline behavior alone, without reward features and with per-basis planning. Our analysis builds on nuclear-norm matrix completion \citep{negahban2012restricted}, low-rank multinomial-logit choice models \citep{kallus2016dynamic}, and completion under arbitrary sampling \citep{foygel2011learning}. We differ in allowing unobserved task--state blocks through a restricted coverage constant, in needing no sample-size condition, and in carrying the completion error through the Bellman reward operator to target-policy KL.



%% file: method.tex
\subsection{Setup}\label{sec:setup}
We consider $N$ tasks, indexed by $i\in[N]:=\{1,\ldots,N\}$. We observe demonstrations in a source environment and seek to predict how each task re-optimizes under target dynamics. Let $\StateSpace$ and $\ActionSpace$ be finite state and action spaces where $S:=|\StateSpace|$ and $A:=|\ActionSpace|$. For each task $i$, define the source $(\ell=1)$ and target $(\ell=2)$ Markov decision processes (MDPs) by $M_{\ell,i} =
(\StateSpace,\ActionSpace,\Transition_\ell,r_i,\InitialDist,\gamma_\ell)$. Here $\Transition_\ell$ is the transition kernel, $r_i(s,a)$ is task $i$'s finite mean reward at state $s$ and action $a$, $d_0$ is the initial state distribution, and $0<\gamma_\ell <1$ is the discount factor. For each task, the source and target MDPs share $\StateSpace,\ActionSpace,r_i$, and $d_0$, while their transition kernels and discount factors may differ. 

We assume that for each task $i$, the source policy $\pi^\star_{1,i}$ is soft optimal in $M_{1,i}$, that is, it maximizes entropy-regularized discounted return from every state with entropy coefficient $1$ \citep{ziebart2008maximum}. For theoretical analysis, we assume that $\Transition_\ell$, $d_0$ and $\gamma_\ell$ are known. In practice, reward recovery uses observed source transitions through fitted Q iteration (FQI) and hence does not require a model of $\Transition_1$. We assume target environment is specified by the practitioner, and re-planning requires known target dynamics. Our method first fits a shared policy model without known reward features, then recovers mean rewards by solving for a small set of shared basis functions. 



\subsection{Multi-task policy learning}
We observe $n$ tuples from the source environment $(I_t,S_t, A_t, S_t')$, generated by drawing a task-state pair $(I_t, S_t)\sim d$, an expert action $A_t\mid (I_t,S_t)\sim \pi_{1,I_t}^\star(\cdot\mid S_t)$, and a next state $S_t'\sim \Transition_1(\cdot\mid S_t,A_t)$. We let $d$ denote the joint distribution over tasks and states in the source environments, with joint probability $d_{i,s}=\mathbb{P}(I_t=i,S_t=s)$ and $p_i:=\sum_s d_{i,s}>0$. Since rational agents optimize task-specific rewards, we expect that $d_{i,\cdot}$ will concentrate in some parts of the state space and hence may leave task-state pairs unobserved. 
We seek to learn rewards for all tasks on this offline dataset alone, without access to the true source transition kernel or reward features. 

Tasks share source dynamics and may share subgoals with varying importance across tasks. Let $\Theta\in \R^{N\times SA}$ be a matrix of source policy logits, with source policies  $\pi_i(a\mid s)=\softmax(\Theta_{i,(s,\cdot)})_a$.
We parametrize these logits with  $\rankbound$ basis functions $\Phi(s,a):= (\phi_1(s,a), \dots, \phi_{\rankbound}(s,a))^\top$ and task-specific coefficients $W_i\in \R^{\rankbound}$, so that $\Theta_{i,(s,a)}=W_i^\top \Phi(s,a)$. Stacking task coefficients and basis evaluations
gives $\Theta=W\Phi\in \R^{N\times SA}$ of rank at most $\rankbound$.\footnote{This is a structural approximation, not a consequence of linear rewards: the maximum-entropy optimality map is nonlinear and can increase effective rank.}
Because softmax is invariant to a common shift of action logits, we fix a reference action $a_0\in\ActionSpace$ and enforce $\Theta_{i,(s,a_0)}=0$. We implement this constraint by setting $\Phi(s,a_0)=0$ for all $s\in \StateSpace$.

Let $D_i:=\{(S_t,A_t,S_t'):I_t=i\}$ denote task $i$'s source transitions.
Let $\triplenorm{\cdot}_F$ be the Frobenius norm and $\lambda>0$ be the regularization strength. We estimate the factors jointly by a penalized maximum likelihood objective 
\begin{equation}
    \widehat W, \widehat \Phi \in \argmin_{W , \Phi } - \frac{1}{n} \sum_{t=1}^n \log \pi_{I_t}(A_t|S_t) + \frac{\lambda}{2} \triplenorm{W}_F^2+ \frac{\lambda}{2} \triplenorm{\Phi}_F^2 
    \label{eq:low_rank_obj_fn}
\end{equation}
The fitted logit matrix is given by $\widehat\Theta:=\widehat{W}\widehat{\Phi}$, resulting in a fitted log-policy matrix, $\widehat u\in \R^{N\times SA}$ where
$\widehat{u}_{i,(s,a)} := \widehat\Theta_{i,(s,a)} - \mathrm{logsumexp}(\widehat{\Theta}_{i,(s,\cdot)})$. Sharing the basis allows tasks to pool information across their observed states. Individual tasks therefore need not be observed in every state, provided the pooled observations distinguish errors compatible with the low-rank structure. We make this coverage condition precise in Assumption~\ref{ass:curvature}, similar in spirit to task-diversity conditions in multi-task representation learning \citep{tripuraneni2020theory}. Lemma~\ref{lem:unweighted-coverage} gives a sufficient spectral condition.




\subsection{Efficient reward recovery through shared bases}\label{sec:reward_recovery}

In the max-entropy setting, different reward functions can induce the same source policy~\citep{ng1999policy,cao2021identifiability}. To identify the reward of each task, we assume that its true reward, shared across the source and target environments, satisfies the following normalization. 

\begin{assumption}
\label{ass:normalization}
For the fixed known $\mu:\StateSpace\to\Delta(\ActionSpace)$ and $g:\StateSpace\to\R$ used in the procedure, every task's true reward satisfies
\begin{equation}
    \sum_{a\in\mathcal A}\mu(a\mid s)r_i(s,a)=g(s)
    \qquad\forall i\in [N],s\in\mathcal S.
    \label{eq:reward-normalization}
\end{equation}
\end{assumption}

Following the setup in \citet{van2025inverse}, write $(\Transition_1 V)(s,a):=\sum_{s'}\Transition_1(s'\mid s,a)V(s')$ and $(\mu f)(s):=\sum_b\mu(b\mid s)f(s,b)$, and define the operators $(\Transition_1\mu\, f)(s,a) := \sum_{s',b}\Transition_1(s'\mid s,a)\,\mu(b\mid s')\,f(s',b)$ and 
$(C_\mu f)(s,a) := f(s,a)-\sum_b\mu(b\mid s)\,f(s,b)$.
\begin{equation}
L_\mu := C_\mu(I-\gamma_1 \Transition_1\mu)^{-1} \label{eq:L-mu}
\end{equation}
Note that the inverse exists because $\Transition_1\mu$ is row-stochastic and $\gamma_1<1$.
\begin{lemma}[Normalized reward is affine in the log-policy]\label{lem:linearity}
Let $u_i^\star(s,a):=\log\pi_{1, i}^\star(a\mid s)$. The unique reward satisfying the source soft Bellman equations and normalization~\eqref{eq:reward-normalization} is
\begin{equation}
r_i = L_\mu(u_i^\star-g)+g.
\label{eq:reward-map}
\end{equation}
\end{lemma}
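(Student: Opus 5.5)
We will work directly from the soft Bellman characterization of the max-entropy optimal policy with entropy coefficient $1$. For task $i$ in the source MDP, let $Q_i$ and $V_i$ denote the soft-optimal action-value and value functions. They satisfy
\[
Q_i = r_i + \gamma_1 \Transition_1 V_i,
\qquad
V_i(s)=\log\sum_a e^{Q_i(s,a)},
\qquad
u_i^\star(s,a) = Q_i(s,a) - V_i(s).
\]
Rather than carrying $V_i$ around as a free function, the plan is to eliminate it using the normalization in Assumption~\ref{ass:normalization}, which yields a linear equation in $Q_i$. The argument has three steps: derive the equation, invert it, and check uniqueness.

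\textbf{Step 1 (a linear equation for $Q_i$).} Apply $\mu$ to both sides of $u_i^\star = Q_i - V_i$. Since $V_i$ does not depend on the action and $\mu(\cdot\mid s)$ sums to one,
\[
V_i = \mu Q_i - \mu u_i^\star .
\]
Substituting this into the Bellman equation gives
\[
Q_i = r_i + \gamma_1\Transition_1\mu\,(Q_i - u_i^\star).
\]
The reward $r_i$ is still unknown, so we replace it using $r_i = Q_i - \gamma_1\Transition_1 V_i$ together with the normalization $\mu r_i = g$. This gives
\[
\mu Q_i - \gamma_1\,\mu\Transition_1 V_i = g .
\]
The key observation is that $\mu Q_i = V_i + \mu u_i^\star$, so this relation becomes a fixed-point equation for $V_i$ alone:
\[
V_i = g - \mu u_i^\star + \gamma_1\,\mu\Transition_1 V_i .
\]

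\textbf{Step 2 (solving and recovering $r_i$).} Lift the fixed-point equation to state--action functions by setting $W_i := \Transition_1 V_i$. Then
\[
W_i = \Transition_1(g-\mu u_i^\star) + \gamma_1 \Transition_1\mu\, W_i ,
\]
where $g-\mu u_i^\star$ is viewed as a state function, equal to $\mu(g-u_i^\star)$ when $g$ is treated as action-constant. Since $\Transition_1\mu$ is row-stochastic and $\gamma_1<1$, the operator $I-\gamma_1\Transition_1\mu$ is invertible via its Neumann series, so $W_i$ is uniquely determined. The reward then follows from
\[
r_i = Q_i - \gamma_1 W_i = u_i^\star + V_i - \gamma_1 W_i .
\]
To match \eqref{eq:reward-map}, we will verify the operator identity
\[
r_i - g = C_\mu (I-\gamma_1\Transition_1\mu)^{-1}(u_i^\star - g).
\]
The cleanest route is to set $f := (I-\gamma_1\Transition_1\mu)^{-1}(u_i^\star-g)$, so that $f = u_i^\star - g + \gamma_1\Transition_1\mu f$. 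We then check that $\tilde r := C_\mu f + g$ satisfies both required properties:
\begin{itemize}
\item The normalization holds because $\mu C_\mu = 0$.
\item The soft Bellman equations hold with $\tilde Q = f + g + c$ for a suitable state function $c$. Concretely, define $\tilde V := \mu f$, so that $\tilde Q := u_i^\star + \tilde V$ has log-sum-exp equal to $\tilde V$ and softmax equal to $\pi_{1,i}^\star$. It then remains to confirm that $\tilde Q - \gamma_1\Transition_1\tilde V = \tilde r$, which follows from the fixed-point identity for $f$.
\end{itemize}

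\textbf{Step 3 (uniqueness).} Suppose two rewards both induce $\pi_{1,i}^\star$ and both satisfy the normalization. Their difference $\delta$ is a potential-shaping term, $\delta = h - \gamma_1\Transition_1 h$, with the same log-policy. Applying $\mu$ gives
\[
0 = h - \gamma_1\,\mu\Transition_1 h .
\]
Since the map $h\mapsto \gamma_1\mu\Transition_1 h$ is a contraction, this forces $h=0$ and hence $\delta=0$. Alternatively, uniqueness can be read off directly from Step 1, because $V_i$ was determined uniquely there.

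\textbf{Main obstacle.} The principal difficulty is bookkeeping in Step 2: confirming that the algebra in the final check closes exactly. In particular, the $g$ terms must be handled consistently, with $g$ regarded as the constant-in-action function $(s,a)\mapsto g(s)$, so that $\mu g = g$ and $C_\mu g = 0$. We will also need the precise reading of $\Transition_1\mu$, namely that $\mu$ is applied at the next state $s'$, to establish $\mu\Transition_1 h = $ the composition appearing in the fixed point. Before writing the verification in general form, we will test the identity on one-state examples.
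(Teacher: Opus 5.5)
Your Steps 1 and 3 are correct and match the paper's argument: writing $Q_i=u_i^\star+V_i$ and imposing $\mu r_i=g$ gives $(I-\gamma_1\mu\Transition_1)V_i=g-\mu u_i^\star$, which pins down $V_i$ and hence $r_i$. The gap is in Step 2, the only place the closed form is actually established. It comes from a sign error.

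Take $f:=(I-\gamma_1\Transition_1\mu)^{-1}(u_i^\star-g)$, $\tilde r:=C_\mu f+g$, and your choice $\tilde V:=\mu f$, $\tilde Q:=u_i^\star+\mu f$. The fixed-point identity $\gamma_1\Transition_1\mu f=f-u_i^\star+g$ gives $\tilde r=u_i^\star-\mu f+\gamma_1\Transition_1\mu f$, hence
\[
\tilde Q-\gamma_1\Transition_1\tilde V-\tilde r=2\bigl(\mu f-\gamma_1\Transition_1\mu f\bigr).
\]
Its $\mu$-average is $2(\mu f-\gamma_1\mu\Transition_1\mu f)=2(\mu u_i^\star-g)$. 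So the check you say ``follows from the fixed-point identity'' is false unless $\mu u_i^\star=g$. Under the paper's own normalization ($\mu$ uniform, $g=0$), Jensen gives $\mu u_i^\star\leq-\log A<0$ for $A\geq2$, so the check fails for every task. Your preliminary claim that the soft $Q$ has the form $f+g+c$ with $c$ a state function is also false in general. The correct $\tilde Q$ differs from $f+g$ by $-\mu f-\gamma_1\Transition_1\mu f$, which depends on the action through $\Transition_1(\cdot\mid s,a)$.

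The fix is $\tilde V:=-\mu f$. Applying $\mu$ to the fixed-point equation for $f$ gives $(I-\gamma_1\mu\Transition_1)\mu f=\mu u_i^\star-g$, so $-\mu f$ is exactly the $V_i$ of Step 1. Then $\tilde Q:=u_i^\star-\mu f$ has log-sum-exp $-\mu f$ and softmax $\pi_{1,i}^\star$, and
\[
\tilde Q-\gamma_1\Transition_1\tilde V=u_i^\star-\mu f+(f-u_i^\star+g)=C_\mu f+g=\tilde r.
\]
Together with $\mu C_\mu=0$ and your Step 3 (applied to $\tilde r$ and the true $r_i$), this proves the lemma.

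Once repaired, your guess-and-verify route is a valid alternative to the paper's. The paper substitutes $V_{1,i}=(I-\gamma_1\mu\Transition_1)^{-1}(g-\mu u_i^\star)$ into $r_i=g+C_\mu r_i$ and then needs the identity $(I-\gamma_1\Transition_1\mu)\bigl[I+\gamma_1\Transition_1(I-\gamma_1\mu\Transition_1)^{-1}\mu\bigr]=I$. Your route avoids that operator identity, but it relies on a separate uniqueness argument and on guessing the right candidate value function, which is exactly where the sign went wrong. The lift $W_i:=\Transition_1V_i$ in Step 2 is unnecessary.
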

Since $L_\mu$ is bounded and linear, $r_i$ is affine in $u_i^\star$.
Appendix Lemma~\ref{dn:lm:irl_reward} proves uniqueness.
The statistical theory below analyzes the exact normalized plug-in reward. Given fitted anchored logits
\[
\widehat \Theta_{i,(\cdot,\cdot)}=\sum_{j=1}^{\rankbound}\widehat W_{ij}\widehat\phi_j,
\qquad
\widehat c_i(s):=\log\sum_a\exp\{\widehat \Theta_{i,(s,a)}\},
\]
the fitted log-policy is $\widehat u_i=\widehat \Theta_{i}-\widehat c_i$, where
$\widehat c_i$ is broadcast across actions. Its exact plug-in reward is therefore
$
\widehat r_i^{\mathrm{plug}}
=L_\mu(\widehat \Theta_i-\widehat c_i-g)+g.
$
The normalizer $\widehat c_i$ depends nonlinearly on the task loadings, so the
matrix of normalized log-policies need not remain rank $\rankbound$. Applying the reward operator exactly can consequently require one inversion per task, forfeiting the principal computational benefit of the low-rank representation.

Our implementation makes a deliberate basis-planning approximation: it applies
$L_\mu$ directly to the low-rank anchored logits $\widehat r_i^{\mathrm{approx}}
:=L_\mu(\widehat \Theta_i-g)+g
=\sum_{j=1}^{\rankbound}\widehat W_{ij}L_\mu\widehat\phi_j+(I-L_\mu)g$. This requires only $\rankbound$ basis reward-inversion solves, rather than $N$ task-specific solves. The approximation
gap is explicit $\widehat r_i^{\mathrm{approx}}-\widehat r_i^{\mathrm{plug}}
=L_\mu\widehat c_i$, which need not vanish under action-dependent transitions. 
Thus the theory isolates the statistical error of the normalized plug-in
estimator, while the practical method adds this separately characterized
computational approximation. All reported experiments use
$\widehat r_i^{\mathrm{approx}}$ and demonstrate that the shortcut retains strong off-support recovery and transfer performance while delivering the intended
rank-scaled computation. Appendix~\ref{app:raw_logit_xp} directly compares reward recovery between the exact method and its approximation and finds mild to null effects, with wall-clock savings that grow with $N$.

For a basis function $\phi$ and a state-action function $v$, define the affine Bellman operator
\begin{equation}
(T_\phi v)(s,a) := \big(\Transition_1\mu\big)(\gamma_1 v - \phi)(s,a)
= \mathbb{E}_{s'\sim \mathsf P_1(\cdot\mid s,a)}\!\Big[\textstyle\sum_b \mu(b\mid s')\big(\gamma_1 v(s',b) - \phi(s',b)\big)\Big].
\label{eq:basis-bellman}
\end{equation}
Since $\mathsf P_1\mu$ is stochastic and $\gamma_1<1$, $T_\phi$ is a $\gamma_1$-contraction with a unique fixed point $v_\phi$, and the basis reward of Lemma~\ref{lem:linearity} is recovered as $L_\mu\phi = C_\mu(\phi - \gamma_1 v_\phi)$. We compute $v_\phi = T_\phi v_\phi$ by fitted iteration with a value network, regressing at each step onto the targets $T_\phi v$ evaluated at the observed next-states; we use the same neural evaluation across all environments.

\subsection{Algorithm}
\label{sec:algorithm}
\begin{algorithm}[t]
\caption{Low-Rank Multi-Task IRL}
\label{alg:main}
\begin{algorithmic}[1]
\REQUIRE Demonstrations $D_{\mathrm{total}}=\{D_i\}_{i=1}^N$, rank $\rankbound$, discount $\gamma_1$, reward normalization $(\mu, g)$.  
\STATE \textbf{Initialize:} $\rankbound$ basis networks for $v_{\phi_j}$ and $v_{g}$
\STATE \textbf{Classify:} Solve the joint softmax MLE \eqref{eq:low_rank_obj_fn} on $D_{\mathrm{total}}$ $\to\; \hat
W, \hat\Phi$
\FOR{$j = 1,\ldots,\rankbound$}
\STATE \textbf{Regress:} Solve $\hat v_{\hat\phi_j} = T_{\hat\phi_j}\hat v_{\hat\phi_j}$ via fitted iteration on $D_{\mathrm{total}}$
\STATE \textbf{Basis reward:} $\hat r_{\hat\phi_j} \leftarrow \hat\phi_j - \gamma_1\hat v_{\hat\phi_j} + \mu(\gamma_1\hat v_{\hat\phi_j} - \hat\phi_j)$
\ENDFOR
\STATE \label{alg1:offset} \textbf{Offset:} Solve $\hat v_g = T_g\hat v_g$ via fitted iteration on $D_{\mathrm{total}}$; set $\ \hat b_g \leftarrow g + \gamma_1(\hat v_g - \mu\hat v_g)$ 
\FOR{$i = 1,\ldots,N$}
\STATE $\hat r_i \leftarrow \sum_{j=1}^{\rankbound} \hat W_{ij}\,\hat r_{\hat\phi_j} + \hat b_g$ \hfill $\triangleright$ closed-form linear combination
\ENDFOR
\RETURN $\{\hat r_i\}_{i=1}^N$
\end{algorithmic}
\end{algorithm}
Algorithm~\ref{alg:main} has two stages: jointly fit rank-$\rankbound$ anchored logits, and then run $\rankbound$ FQI regressions on the learned basis functions and linearly recombine their rewards. A new task adds only $\rankbound$ task-specific loadings and additional classification data; it does not add a reward-inversion solve, so the expensive stage scales with $\rankbound$ not $N$. Section~\ref{sec:comp_efficiency} verifies this computational advantage empirically. 

For continuous-state environments, each basis function is represented by a neural network mapping a state to $\mathbb{R} ^{|\ActionSpace|}$ logits, with task logits formed as $\widehat W_i^\top\widehat\Phi(s,\cdot)$ before the softmax. Architecture, optimization, and regularization details are provided in Appendix~\ref{sec:app-tuning}.

Note that if $g=0$ then we can skip line \ref{alg1:offset} of the algorithm and simply set $\hat b_g=0$.

%% file: theory.tex
We first bound the estimation error of the shared source logit matrix under a pooled source coverage condition, and then show simultaneous bounds on target-policy value error, regret, and occupancy-weighted KL divergence from each task's soft-optimal target policy. Appendix~\ref{sec:proofs} gives the full details.

\subsection{Statistical assumptions and estimator}\label{sec:theory-setting}
Following the setting of Section~\ref{sec:method}, let $\Theta^\star$ be the anchored logit matrix of the true source policies, given by $\Theta^\star_{i,(s,a)} :=\log\pi_{1,i}^\star(a\mid s)-\log(\pi_{1,i}^\star(a_0\mid s))$.
For analysis, the observations are i.i.d., and the source design $d$ is known.

\begin{assumption}
\label{ass:low-rank}
Assume that the true source logit matrix $\Theta^\star$ satisfies $\Theta^\star\neq 0$\footnote{The uniform-policy case $\Theta^\star=0$ is treated separately in Proposition~\ref{dn:prop:zero-truth} in Appendix~\ref{dn:app:zero-truth}.}, $\rank(\Theta^\star)\leq \rankbound$ and $\norm{\Theta^\star}_\infty\leq B$\footnote{$\norm{\Theta}_\infty:=\max_{i,s,a}|\Theta_{i,(s,a)}|$ denotes the entrywise maximum norm.} where $\rankbound \leq \min \{N, SA\}$ and $B>0$ is fixed independent of $N, S$ and $A$.
\end{assumption}

We analyze the bounded convex nuclear-norm formulation of the likelihood objective~\eqref{eq:low_rank_obj_fn}. For observation $t$, the negative log likelihood is $l_t(\Theta)=\log\left(\sum_{a\in\ActionSpace}e^{\Theta_{I_t,(S_t,a)}}\right)-\Theta_{I_t,(S_t,A_t)}$, and the empirical loss is $L_n(\Theta):=\frac1n\sum_{t=1}^n l_t(\Theta)$.
Fix a failure probability $0<\delta<1$, define $\nu 
:=\max\left\{
\left(1-\frac1A\right)\max_i p_i,\;
\frac12\max_s\sum_i d_{i,s}
\right\}$ and set the regularization parameter to
\begin{equation}
\lambda_n
:=2\left\{
\sqrt{\frac{2\nu }{n}\log\frac{2(N+SA)}{\delta}}
+\frac{2\sqrt2}{3n}\log\frac{2(N+SA)}{\delta}
\right\}.
\label{eq:unweighted-lambda}
\end{equation}
We estimate the source logit matrix by the nuclear norm regularized maximum likelihood
\begin{equation}
\widehat\Theta \in\argmin_{\Theta\in\mathcal M_B} \left\{L_n(\Theta)+\lambda_{n}
    \triplenorm{\Theta}_*\right\}\footnote{Here $\triplenorm{\cdot}_*$ is the nuclear norm. For $\Theta\in\R^{N\times SA}$ and $\rankbound\geq\rank(\Theta)$, $\triplenorm{\Theta}_*=\min_{W\Phi=\Theta}\frac12(\triplenorm{W}_F^2+\triplenorm{\Phi}_F^2)$, where the minimum is over $W\in\R^{N\times\rankbound}$ and $\Phi\in\R^{\rankbound\times SA}$. Thus the Frobenius penalties in~\eqref{eq:low_rank_obj_fn} give a variational form of nuclear-norm regularization.},
    \label{eq:unweighted-estimator}
\end{equation}
where the feasible set is $\mathcal M_B:=\left\{\Theta\in\R^{N\times SA}:
    \norm{\Theta}_\infty\leq B,
    \Theta_{i,(s,a_0)}=0\ \text{for every }i,s\right\}$.

We set $\widehat u_{i,(s,a)}
:=\widehat\Theta_{i,(s,a)}
-\log\sum_{b\in\ActionSpace}
\exp\{\widehat\Theta_{i,(s,b)}\}$ and recover $\widehat r_i:=L_\mu(\widehat u_i-g)+g$ via~\eqref{eq:reward-map}.

\subsection{Pooled source coverage}
Let $q:=\rank(\Theta^\star)\leq\rankbound$ and let $\Theta^\star=U\Sigma V^\top$ be a singular value decomposition. Its tangent space is $T:=\{UX^\top+YV^\top:X\in\R^{SA\times q},\ Y\in\R^{N\times q}\}$. Write $P_T$ and $P_{T^\perp}$ for the Frobenius-orthogonal projections onto $T$ and its complement. The regularized estimator's error lies, with high probability, in the anchored cone
\begin{equation}
\mathcal C(\Theta^\star):=\left\{\Delta:
\Delta_{i,(s,a_0)}=0\ \forall i,s,\quad
\triplenorm{P_{T^\perp}\Delta}_*\leq3\triplenorm{P_T\Delta}_*\right\}.
\label{eq:unweighted-cone}
\end{equation}
We measure source coverage on these possible errors by
\begin{equation}
c_0(d,\Theta^\star):=
\inf_{\Delta\in\mathcal C(\Theta^\star)\setminus\{0\}}
\frac{\sum_{i,s}d_{i,s}\norm{\Delta_{i,(s,\cdot)}}_2^2}
{\triplenorm{\Delta}_F^2}.
\label{eq:unweighted-restricted-coverage}
\end{equation}
\begin{assumption}
\label{ass:curvature}
$c_0(d,\Theta^\star)>0$.
\end{assumption}
Intuitively, this condition holds exactly when every nonzero error in the cone changes a task--state block with $d_{i,s}>0$ (Lemma~\ref{dn:lem:general-coverage-nullspace} in Appendix~\ref{dn:app:coverage}). This is a population coverage condition determined by the source design and the shared logit geometry. In task--state--action order, define $\mathbf D_d:=\operatorname{diag}((d_{i,s})_{i,s,a})$ and define $\mathbf P_T\in\R^{NSA\times NSA}$ as the unique matrix satisfying $\mathbf P_T\operatorname{vec}(Z)=\operatorname{vec}(P_TZ)$ for every $Z\in\R^{N\times SA}$.
\begin{lemma}
\label{lem:unweighted-coverage}
Under Assumption~\ref{ass:low-rank}, let $\lambda_{\min}$ denote the smallest eigenvalue, then
\[{c_0(d,\Theta^\star)\geq
\sup_{\tau\geq0}\left\{
\lambda_{\min}(\mathbf D_d-\tau\mathbf P_T)
+\frac{\tau}{1+18\rankbound}\right\}
\geq\min_{i,s}d_{i,s}}.
\]
\end{lemma}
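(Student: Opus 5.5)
The plan is to lower-bound the Rayleigh quotient defining $c_0$ on the cone by a quadratic-form inequality that holds for every $\tau\ge0$, and then take the supremum. Fix $\tau\ge 0$ and a nonzero $\Delta\in\mathcal C(\Theta^\star)$, and write $\delta:=\operatorname{vec}(\Delta)$. The numerator of \eqref{eq:unweighted-restricted-coverage} is exactly $\delta^\top\mathbf D_d\delta$. Because $\mathbf P_T$ is an orthogonal projection, $\delta^\top\mathbf P_T\delta=\triplenorm{P_T\Delta}_F^2$. The starting point is the identity
\[
\delta^\top\mathbf D_d\delta=\delta^\top(\mathbf D_d-\tau\mathbf P_T)\delta+\tau\triplenorm{P_T\Delta}_F^2
\ge\lambda_{\min}(\mathbf D_d-\tau\mathbf P_T)\triplenorm{\Delta}_F^2+\tau\triplenorm{P_T\Delta}_F^2 .
\]
It therefore suffices to show that on the cone $\triplenorm{P_T\Delta}_F^2\ge\triplenorm{\Delta}_F^2/(1+18\rankbound)$.

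The key step, and the only place the cone geometry enters, is this cone inequality. Every element of $T$ has rank at most $2q\le2\rankbound$, so $\triplenorm{P_T\Delta}_*\le\sqrt{2\rankbound}\,\triplenorm{P_T\Delta}_F$. On the cone, bounding the Frobenius norm by the nuclear norm gives
\[
\triplenorm{P_{T^\perp}\Delta}_F\le\triplenorm{P_{T^\perp}\Delta}_*\le3\triplenorm{P_T\Delta}_*\le3\sqrt{2\rankbound}\,\triplenorm{P_T\Delta}_F .
\]
Squaring yields $\triplenorm{P_{T^\perp}\Delta}_F^2\le18\rankbound\,\triplenorm{P_T\Delta}_F^2$. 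Pythagoras, $\triplenorm{\Delta}_F^2=\triplenorm{P_T\Delta}_F^2+\triplenorm{P_{T^\perp}\Delta}_F^2$, then gives the claimed factor $1+18\rankbound$. Since $\tau\ge0$, substituting into the identity gives
\[
\delta^\top\mathbf D_d\delta\ge\Big(\lambda_{\min}(\mathbf D_d-\tau\mathbf P_T)+\tfrac{\tau}{1+18\rankbound}\Big)\triplenorm{\Delta}_F^2 .
\]
Dividing by $\triplenorm{\Delta}_F^2$, taking the infimum over the cone, and then the supremum over $\tau$ proves the first inequality. The anchoring constraint is not needed, because enlarging the set only weakens the infimum.

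For the second inequality, take $\tau=0$. Then $\lambda_{\min}(\mathbf D_d)=\min_{i,s,a}d_{i,s}=\min_{i,s}d_{i,s}$, since $\mathbf D_d$ is diagonal with entries $d_{i,s}$ repeated over $a$. The supremum is at least this value.

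The main obstacle is bookkeeping in the nuclear-versus-Frobenius step: one has to confirm the rank bound $2q$ on $T$ and ensure the constant comes out as $18\rankbound$ rather than $18q$. Using $q\le\rankbound$ from Assumption~\ref{ass:low-rank} handles the latter, and it is the only role of that assumption here. It is also worth checking that $\Theta^\star\neq0$ makes $T$ well defined ($q\ge1$), although the inequality holds regardless.
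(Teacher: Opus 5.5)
Your proposal is correct and follows the paper's proof essentially step for step. You use the same split $\mathbf D_d=(\mathbf D_d-\tau\mathbf P_T)+\tau\mathbf P_T$, the same cone bound $\triplenorm{P_{T^\perp}\Delta}_F^2\le 18\rankbound\,\triplenorm{P_T\Delta}_F^2$ (from the rank-$2\rankbound$ tangent space and Pythagoras), and the same choice $\tau=0$ for the second inequality.
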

Appendix Lemma~\ref{dn:lem:unweighted-coverage} proves this bound. At minimum, the tasks visiting each state must collectively span the latent directions; otherwise $c_0=0$. Example~\ref{dn:ex:missing-block} constructs an instance with unobserved task--state block yet positive $c_0$, permitting full reward recovery and vanishing target regret as $n\rightarrow \infty$, even when target policies visit the unobserved source block (Theorem~\ref{thm:unweighted-regret}). Section~\ref{sec:coverage_xp} tests this directional coverage mechanism empirically.


\subsection{Target values and finite-sample guarantees}
Let $\Pi$ be the set of all stationary target policies. 
For $\pi\in\Pi$, denote $d_2^\pi(s):=(1-\gamma_2)\E_{\mathsf P_2,\pi,d_0} \left[\sum_{t\geq0}\gamma_2^t\one\{S_t=s\}\right]$ the normalized discounted target state occupancy, and $\rho_2^\pi(s,a):=d_2^\pi(s)\pi(a\mid s)$ the normalized discounted target state-action occupancy.\footnote{Both $d_2^\pi$ and $\rho_2^\pi$ are known because the target dynamics and initial distribution are known.} 

For each candidate $\pi\in\Pi$, we estimate its target soft value using the recovered reward $\widehat r_i$:
\begin{equation}
\widehat\softvalue_{2,i}(\pi)
:=\softvalue_{2,i}(\pi;\widehat r_i)
=\frac1{1-\gamma_2}\sum_{s,a}\rho_2^\pi(s,a)
\{\widehat r_i(s,a)-\log\pi(a\mid s)\},
\label{eq:target-soft-value}
\end{equation}
The true value $\softvalue_{2,i}(\pi;r_i)$ uses $r_i$ in place of $\widehat r_i$. For each task $i$, maximize the estimated target soft value:
\begin{equation}
\widehat\pi_i\in\argmax_{\pi\in\Pi}\widehat\softvalue_{2,i}(\pi).
\label{eq:plugin-selector}
\end{equation}




Let $\pi_{2,i}^*$ be the soft-optimal policy for $M_{2,i}$, characterized by its soft Bellman equations. In particular, $\pi_{2,i}^* \in \arg\max_{\pi'}\softvalue_{2,i}(\pi'; r_i)$. The logit-error radius $\varepsilon_n$ and target-value radius $R_n(\pi)$ satisfy
\begin{align*}
\varepsilon_n &\lesssim
\frac{\sqrt{\rankbound\nu\log(2(N+SA)/\delta)/n}+\sqrt{\rankbound}\,\log(2(N+SA)/\delta)/n}
{c_0(d,\Theta^\star)},\\
R_n(\pi) &\lesssim
\frac{\varepsilon_n}{1-\gamma_2}\norm{L_\mu^\top\rho_2^\pi}_2.
\end{align*}
Here $\lesssim$ hides constants depending only on $A$ and $B$. We give their exact definitions in~\eqref{dn:eq:unweighted-logit-radius} and~\eqref{dn:eq:unweighted-value-radius}. 

\begin{theorem}
\label{thm:unweighted-regret}
Under the setting of Section~\ref{sec:theory-setting} and Assumptions~\ref{ass:normalization},~\ref{ass:low-rank}, and~\ref{ass:curvature}, use the estimator~\eqref{eq:unweighted-estimator} with tuning~\eqref{eq:unweighted-lambda}, exact plug-in reward recovery from Section~\ref{sec:reward_recovery}, and the selector~\eqref{eq:plugin-selector}. With probability at least $1-\delta$, the following hold simultaneously for every logit minimizer, every task $i$, every $\pi\in\Pi$, and every maximizing choice $\widehat\pi_i$.

\emph{(i) Estimation and value:} $\triplenorm{\widehat\Theta-\Theta^\star}_F\leq\varepsilon_n$, and $\left|\widehat\softvalue_{2,i}(\pi)-\softvalue_{2,i}(\pi;r_i)\right|
\leq R_n(\pi)$.

\emph{(ii) Target regret:} $\softvalue_{2,i}(\pi_{2,i}^\star;r_i)-\softvalue_{2,i}(\widehat\pi_i;r_i)
\leq R_n(\pi_{2,i}^\star)+R_n(\widehat\pi_i)$.

\emph{(iii) Policy error:} $\sum_s d_2^{\widehat\pi_i}(s)\,
\operatorname{KL}\!\left(\widehat\pi_i(\cdot\mid s)\,\middle\|\,
\pi_{2,i}^\star(\cdot\mid s)\right)
\leq(1-\gamma_2)\{R_n(\pi_{2,i}^\star)+R_n(\widehat\pi_i)\}$.
\end{theorem}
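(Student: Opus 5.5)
The proof has three layers. A nuclear-norm completion bound for the logits gives part (i) first half. A linear-propagation argument through $L_\mu$ turns that into the value bound. A standard soft-RL performance-difference identity then yields (ii) and (iii).

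\textbf{Step 1 (logit error).} I would follow the Negahban--Wainwright template for $L_n$ on $\mathcal M_B$.

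First, the gradient is $\nabla L_n(\Theta^\star)=\frac1n\sum_t Z_t$. Here $Z_t$ is supported on row $I_t$ and the block $(S_t,\cdot)$, with entries $\pi^\star(a\mid S_t)-\one\{a=A_t\}$ restricted to $a\neq a_0$. Each $Z_t$ has mean zero and operator norm at most $\sqrt2$. Its matrix variances are bounded by $n\nu$: the two terms in $\nu$ are exactly the row and column second-moment computations. Matrix Bernstein then gives $\|\nabla L_n(\Theta^\star)\|_{\op}\le\lambda_n/2$ with probability $1-\delta$; this is how $\lambda_n$ was chosen.

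Next, optimality of $\widehat\Theta$, together with decomposability of the nuclear norm over $T,T^\perp$, places $\Delta=\widehat\Theta-\Theta^\star$ in $\mathcal C(\Theta^\star)$. This holds for every minimizer, since the argument uses only optimality.

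For curvature, softmax log-partition is strongly convex on bounded logits along anchored directions. The Hessian at any point of $\mathcal M_B$ dominates $\kappa_{A,B}(I-\text{const})$ on anchored coordinates. Averaging over $(I_t,S_t)$ gives curvature in $\sum d_{i,s}\|\Delta_{i,(s,\cdot)}\|^2$, not in the empirical design. Because $d$ is known and the loss is conditionally on $(I,S)$ random only via $A$, I would bound the empirical quadratic term by its $d$-expectation. A cleaner route is to use the population form of the first-order expansion plus the empirical gradient deviation; I expect this to be the main obstacle, because the ``no sample-size condition'' claim forbids a uniform RSC concentration argument.

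My concrete attempt: write $L_n(\widehat\Theta)-L_n(\Theta^\star)=\langle\nabla L_n(\Theta^\star),\Delta\rangle+$ a remainder. I would then split the remainder into its conditional expectation given $(I_t,S_t)$, which equals a Bregman term in the observed states. Finally I would compare the empirical state frequencies to $d$, possibly absorbing the deviation into the linear $\lambda_n$-term via the same Bernstein event.

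Combining with $c_0$ and $\triplenorm{\Delta}_*\le4\sqrt{2q}\triplenorm{\Delta}_F$ on the cone yields $\triplenorm{\Delta}_F\lesssim\sqrt{\rankbound}\lambda_n/(\kappa c_0)=\varepsilon_n$.

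\textbf{Step 2 (value).} Let $u^\star_i$ be the true log-policy. The map $\Theta\mapsto u$ (subtracting log-sum-exp) is $C_A$-Lipschitz per block, so $\|\widehat u_i-u^\star_i\|_2\le C\,\|\widehat\Theta_i-\Theta^\star_i\|_2\le C\varepsilon_n$. By Lemma~\ref{lem:linearity}, $\widehat r_i-r_i=L_\mu(\widehat u_i-u_i^\star)$, and the $\log\pi$ terms cancel in \eqref{eq:target-soft-value}. Hence
\[
\widehat\softvalue_{2,i}(\pi)-\softvalue_{2,i}(\pi;r_i)=\tfrac1{1-\gamma_2}\langle L_\mu^\top\rho_2^\pi,\widehat u_i-u_i^\star\rangle .
\]
Cauchy--Schwarz then gives $R_n(\pi)$ uniformly over $i,\pi$ on the same event.

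\textbf{Step 3 (regret and KL).} Because $\widehat\pi_i$ maximizes $\widehat\softvalue_{2,i}$, we have $\softvalue(\pi^\star)-\softvalue(\widehat\pi)\le[\softvalue(\pi^\star)-\widehat\softvalue(\pi^\star)]+[\widehat\softvalue(\widehat\pi)-\softvalue(\widehat\pi)]\le R_n(\pi^\star)+R_n(\widehat\pi)$, which is (ii).

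For (iii), the soft performance-difference lemma gives
\[
\softvalue_{2,i}(\pi^\star)-\softvalue_{2,i}(\pi)=\tfrac1{1-\gamma_2}\sum_s d_2^\pi(s)\,\mathrm{KL}(\pi(\cdot\mid s)\|\pi^\star_{2,i}(\cdot\mid s)).
\]
This follows since $\pi^\star_{2,i}\propto\exp Q^\star$ and the advantage of $\pi$ equals $-\mathrm{KL}$. Multiplying (ii) by $1-\gamma_2$ gives (iii).
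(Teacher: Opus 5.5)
Your Steps 2 and 3, and the score/cone part of Step 1, follow the paper's argument essentially line by line: the log-softmax Lipschitz constant $L_B$, the identity $\widehat r_i-r_i=L_\mu(\widehat u_i-u_i^\star)$, plug-in selection, and the soft regret--KL identity. The genuine gap is the curvature part of Step 1, which you flag yourself, and your concrete attempt does not close it.

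The Bregman remainder $D_{L_n}(\Theta^\star+\Delta,\Theta^\star)$ does not involve $A_t$ at all. The term $-\Theta_{I_t,(S_t,A_t)}$ is linear and cancels against the gradient term, so the remainder is already a function of $(I_t,S_t)$ alone, and conditioning on $(I_t,S_t)$ changes nothing. Your Hessian domination on $\mathcal M_B$ is fine, but it only gives curvature in the empirical design, $\frac1n\sum_t\norm{\Delta_{I_t,(S_t,\cdot)}}_2^2$. By contrast, $c_0$ controls $\sum_{i,s}d_{i,s}\norm{\Delta_{i,(s,\cdot)}}_2^2$. Because $\Delta=\widehat\Theta-\Theta^\star$ is data-dependent, the comparison between these must hold uniformly over the cone. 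The score Bernstein event concerns only the action noise $e_{A_t}-\pi^\star_{1,I_t}(\cdot\mid S_t)$. It says nothing about how the empirical frequencies of $(I_t,S_t)$ deviate from $d$, so it cannot absorb this deviation. One symptom: your final radius reproduces only the $6/h_B$ part of $\varepsilon_n$, while the $260B$ summand is exactly the price of the missing step.

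Your premise that avoiding a sample-size condition rules out a uniform restricted-strong-convexity concentration argument is also wrong. The paper uses one, localized at a single deterministic radius. With probability $1-\delta/2$ it bounds $\sup\{|Q_n(\Delta)-Q(\Delta)|:\Delta\in\mathcal C(\Theta^\star),\ \triplenorm{\Delta}_F\le t,\ \norm{\Delta}_\infty\le 2B\}$ by $65B\sqrt{\rankbound}\lambda_n t+28B^2\log(4/\delta)/(3n)$. The ingredients are:
\begin{itemize}
\item Talagrand's inequality and symmetrization;
\item Maurer's vector contraction, where the sup-norm constraint makes $f_\Delta$ $4B$-Lipschitz in $H_{i,s}^{1/2}\Delta_{i,(s,\cdot)}$;
\item the cone bound $\triplenorm{\Delta}_*\le 4\sqrt{2\rankbound}\triplenorm{\Delta}_F$;
\item the expectation form of matrix Bernstein for the Rademacher--Fisher matrix $\frac1n\sum_t\Xi_t$.
\end{itemize}

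Taking $t=t_0=\varepsilon_n$ makes the relative deviation at most $1/2$ at that radius. Quadratic homogeneity of $Q$ and $Q_n$ then extends $Q_n\ge Q/2$ to all feasible cone directions with $\triplenorm{\Gamma}_F\ge t_0$, since the cone and the sup-norm bound survive scaling down. So either $\triplenorm{\Delta}_F<t_0$ already, or the Bregman sandwich gives $\triplenorm{\Delta}_F\le t_0$.

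The $1/n$ remainder never forces a sample-size condition, because $\lambda_n^2\ge 8\nu\log(2(N+SA)/\delta)/n$ and $e^{-2B}c_0/A^2\le\nu$. This second event is what the other $\delta/2$ pays for. With the stated $\lambda_n$, the score event fails with probability $\delta/2$, not $\delta$ as you wrote.
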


The KL bound controls action discrepancies at states visited by the learned policy, weighted by its discounted occupancy. When
$n\nu\geq\log(2(N+SA)/\delta)$, it simplifies up to constants to $\frac{1}{c_0(d,\Theta^\star)}\,
\bigl(\|L_\mu^\top\rho_2^{\pi_{2,i}^\star}\|_2 + \|L_\mu^\top\rho_2^{\widehat\pi_i}\|_2\bigr)\,\sqrt{\frac{\rankbound \nu}{n}\log\frac{2(N+SA)}{\delta}}$.

The bound separates three effects: $c_0^{-1}$ penalizes latent directions weakly identified by the source demonstrations, the norm terms capture how reward recovery and target visitation amplify source logit estimation error, and the leading statistical term is of order $\sqrt{\frac{\rankbound\nu}{n}\log\frac{2(N+SA)}{\delta}}$.

%% file: experiments.tex
\label{sec:experiments}
We evaluate whether LowRank (i) exploits pooled coverage, (ii) recovers rewards sample-efficiently on and off task support, (iii) transfers under target dynamics shift, and (iv) exhibits sublinear wall-clock growth as the task population increases fixing basis rank. The three domains are FourRooms, a partially observed sparse-reward navigation task; Highway, a continuous-state driving task; and RecSim, a tabular recommender with nonlinear topic fatigue. Rewards use four hidden task-preference dimensions; full environment and expert-training details appear in Appendix~\ref{app:experiments}.

We report Pearson correlation with $\mu$-normalized ground-truth rewards on and off-support evaluation states. For continuous domains, these sets are defined over the pooled empirical evaluation states: a state is on-support if observed for that task and off-support if observed only for other tasks. This does not claim recovery at arbitrary unseen points in the continuous state
space.

Baselines are per-task and pooled GenPQR and IQ-Learn, plus a feature-free four-cluster Multi-intent adaptation. Continuous latent meta-IRL and hierarchical adversarial methods require interactive simulator access and are therefore outside
our offline setting. All applicable methods share architectures and held-out tuning; Appendix~\ref{sec:app-tuning} gives details.



\subsection{Coverage drives off-support recovery}\label{sec:coverage_xp}
Our coverage condition is directional rather than a visit count: at state $s$, error along latent direction $v_k(s)$ enters the training loss only through the $k$-th eigenvalue $\lambda_k(s)$ of the visiting tasks' Gram matrix $\sum_i d_{i,s}(U^\star_{i,:})^\top U^\star_{i,:}$, so errors along directions the visiting tasks do not span, never appear in the training loss yet persist off-support; a singular Gram at any $s$ forces $c_0=0$ (Appendix~\ref{app:coverage}).

We test this mechanism by progressively deleting observations from the tasks
contributing most to selected states' leading eigendirection. Matched controls
lose the same number of observations without targeting task alignment, while
placebo states lose none. After refitting, the left panel reports the change
from zero deletion in absolute error projected onto the targeted direction;
the right reports treated-minus-control difference-in-differences for the
targeted versus remaining $\rankbound-1$ directions.

\begin{figure}[t!]
\centering
\includegraphics[width=\linewidth]{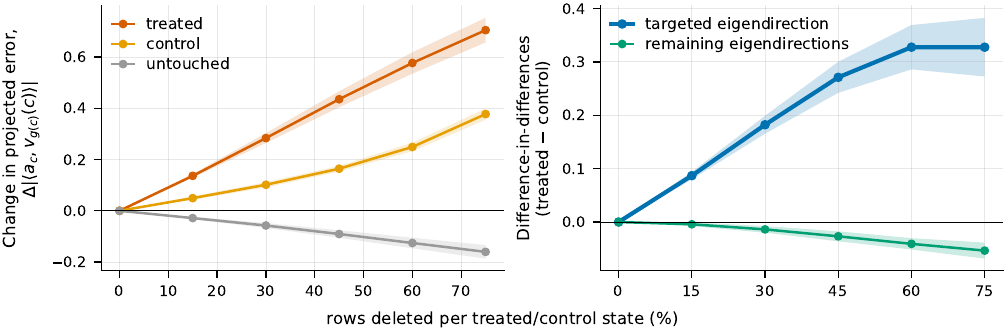}
\caption{Causal test of directional coverage. Left: change in error projected
onto the targeted eigendirection for treated, volume-matched control, and
placebo cells. Right: treated-minus-control difference-in-differences for the
targeted and remaining directions.}
\label{fig:dosage_coverage}
\end{figure}

Targeted projected error grows monotonically with deletion dose, and the
difference-in-differences is concentrated in that direction. This isolates
loss of directional coverage, rather than sample volume, as the source of the
additional off-support error.



\subsection{Sample efficient reward recovery}\label{sec:sample_efficiency}
We vary the number of decisions per task from $100$ to $10{,}000$ while holding
the task population and model architecture fixed. Figure~\ref{fig:data_ablation}
reports reward correlation separately on states observed for each task (on-support,
dotted) and states supplied only by other tasks (off-support, solid). This isolates
whether additional within-task data can substitute for coverage pooled across
tasks.

LowRank achieves the highest off-support correlation at every data budget in all three
domains, with the largest gains in the low-data regime. Per-task methods improve
substantially with $n$ and approach LowRank on Highway at $10{,}000$ decisions,
where sufficient within-task data eventually supplies broad coverage. The same
gap persists in FourRooms and RecSim, whose partial observability and fatigue
dynamics leave structured support gaps even at larger budgets. Naively pooled
methods remain poor because they gain coverage only by discarding task
heterogeneity, while discrete Multi-intent sharing remains less expressive than
LowRank's continuous task loadings.
\begin{figure}[t!]
\centering
\includegraphics[width=1\textwidth]{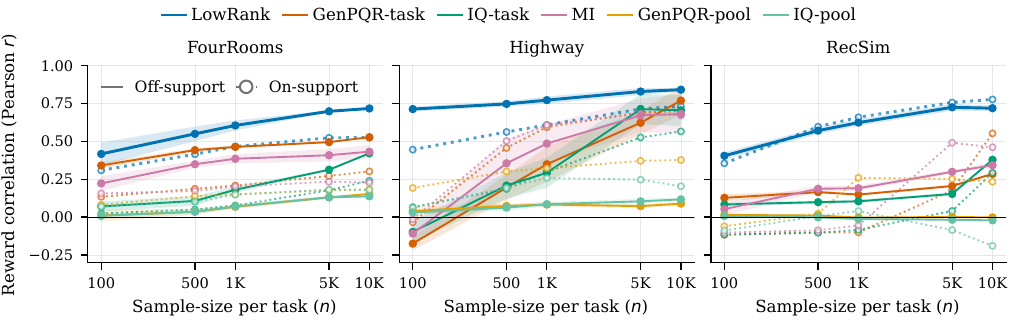}
\caption{Reward recovery versus samples per task (mean and standard deviation
over 10 seeds). LowRank leads off-support recovery at every budget; per-task methods narrow the gap only
on Highway at high sample sizes.}
\label{fig:data_ablation}
\end{figure}

\subsection{Policy recovery in perturbed environments}\label{sec:transfer_xp}
We evaluate counterfactual policy transfer in RecSim, where fatigue concentrates
source behavior and target perturbations naturally push occupancy into rarely
visited states. In our FourRooms and Highway configurations, ordinary trajectories
already expose the reward-relevant features, so plausible dynamics changes did not
produce a comparably clean off-support shift. Evaluating target return there would
also conflate reward-transfer error with approximate neural planning, whereas
RecSim's tabular dynamics permit exact target planning and oracle comparison.
We therefore use RecSim to isolate the question addressed by our theory: whether
recovered rewards remain valid when target occupancy moves beyond a task's source
support. We increasingly suppress each task's preferred genre in target slates and
compare the resulting policies with the oracle optimum.
  
\begin{figure}[t!]
\centering
\includegraphics[width=1\textwidth]{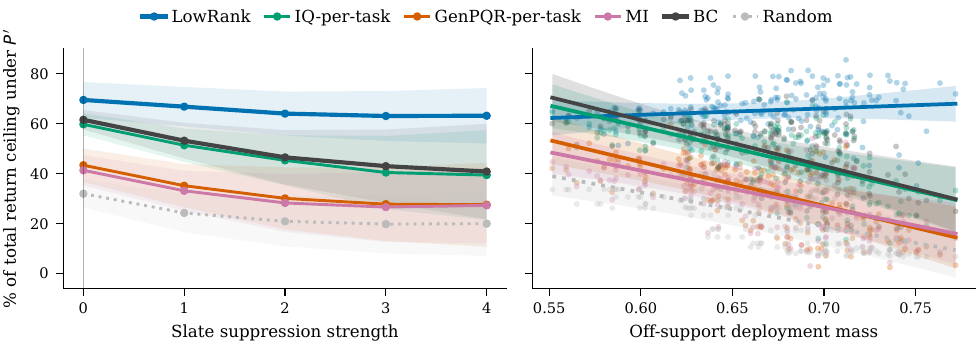}
\caption{Policy recovery in target perturbed environments. Error bands are $\pm$1 s.d over ten seeds.}
\label{fig:transfer}
\end{figure}

Figure~\ref{fig:transfer} shows that LowRank achieves the highest normalized
target return across perturbations. Behavior cloning and IQ-Learn are competitive
at zero shift, but behavior cloning deteriorates as the target diverges from the
source. In the right panel, LowRank's return is flat in off-support occupancy (slope $\hat\beta = +2.6$ percentage points per 0.1 off-support mass, 95\% CI $[-3.2, 8.3]$), whereas IQ-Learn and GenPQR fall by $17.1$ and $17.6$ points per $0.1$ of off-support mass. By Lemma~\ref{lem:kl_regret_equiv}, this return regret is equivalent to
occupancy-weighted policy KL.

\subsection{Computational Efficiency}\label{sec:comp_efficiency}
At fixed basis rank $\rankbound$, LowRank should depend only weakly on the task population because its expensive reward-inversion step uses $\rankbound$ rather than $N$ fitted evaluations. We test this by increasing $N$ from 8 to 128 with randomly sampled task coefficients. Figure~\ref{fig:scaling} reports wall-clock time on log--log axes: the per-task baselines scale approximately linearly, whereas LowRank grows as $N^{0.21\pm0.025}$ across environments; Multi-intent is also sublinear but noisier at $N^{0.34\pm0.20}$. Relative to per-task GenPQR, which uses the same reward operator, LowRank is roughly $13\times$, $6\times$, and $4\times$ faster at $N=128$ on Highway, FourRooms, and RecSim. IQ-Learn's cheaper per-task update makes it faster at small $N$, but it is overtaken by $N=128$; at that scale no baseline is both faster and more accurate than LowRank (Table~\ref{tab:scaling-quality}). Accuracy does not degrade with scale: at $N=128$ ($n=500$), LowRank's off-support reward correlation is $0.88$, $0.53$, and $0.78$ on Highway, FourRooms, and RecSim, versus $0.74$, $0.39$, and $0.18$ for per-task GenPQR. Multi-intent matches LowRank only on Highway ($0.88$) and trails on FourRooms and RecSim ($0.24$, $0.35$; Table~\ref{tab:scaling-quality}). 

\begin{figure}[t!]
\centering
\includegraphics[width=1\textwidth]{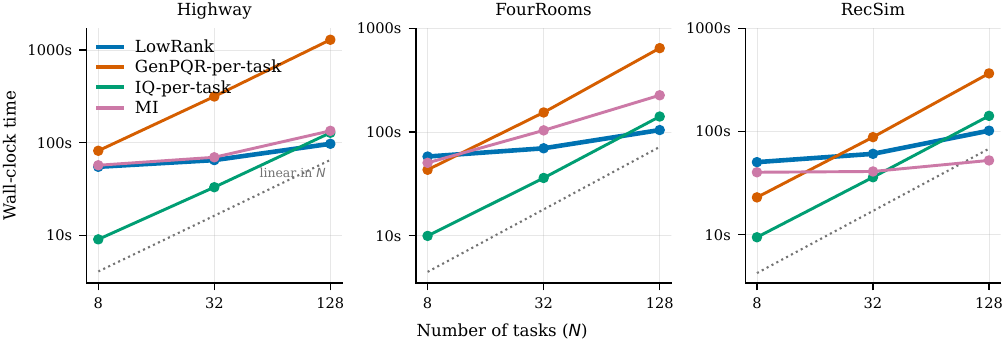}
\caption{Wall-clock scaling with task population $N$ at fixed basis rank $\rankbound$. LowRank exhibits substantially sublinear growth, while per-task methods scale approximately linearly}
\label{fig:scaling}
\end{figure}

%% file: conclusion.tex
We develop a multi-task IRL method that assumes a low-rank structure on anchored source logits. This transforms multi-task IRL from $N$ separate estimation problems into one shared problem, where identifiability does not require full  per-task coverage of states. 
Instead, we require a mild restricted-eigenvalue condition on the source-data design rather than conditions on density ratios: demonstrations must span the low-rank cone. 
At a minimum, the tasks visiting each state must span the latent task directions and each task must visit states spanning the latent basis directions. Experiments demonstrate that pooling can recover the correct direction of reward variation where single-task estimates reverse it outside of their support. Pooling also reduces planning from per-task to per-basis, yielding sub-linear wall clock scaling in the number of tasks.

Our work suggests some avenues of further investigation. Our method and experiments used generic function approximation, but our finite-sample guarantees were limited to tabular MDPs with a known source kernel. Extending the theory to cover this approximation error, and the log-partition residual of the rank-$\rankbound$ approximation, is a natural next step, as is relaxing the assumption of a shared reward normalization. Additionally, our environments are synthetic so that ground-truth rewards are available for clear comparisons between methods. Validation on real behavioral data remains open.

%% file: appendix.tex
\input{theory_appendix}

\section{Experiments}\label{app:experiments}

\subsection{Environments}
\label{app:environments}

All three environments share the linear-reward structure $r_i(s,a)=W_i^\top\psi(s,a)$
with $K=4$ features $\psi$, and the feature map $\psi$ is unavailable to every
learning method.

\paragraph{FourRooms.} A $19\times19$ grid of four rooms joined by single-cell
doorways, with a fixed wall layout, implemented in NAVIX \citep{pignatelli2025navix},
a JAX reimplementation of MiniGrid \citep{chevalier2023minigrid}.
Colored objects of $K=4$ types (12 per color, 48 total) are scattered on reachable
cells; the agent auto-collects an object by entering its cell, and collected objects
respawn at random empty cells. The per-step feature $\psi\in\{0,1\}^4$ is the color
one-hot of any object collected that step (zero otherwise), so rewards are sparse.
Observability is local: the agent sees an $11\times11\times4$ object window centered
on its position plus its normalized coordinates, and we stack the last three
(observation, action) frames for trajectory-aware classification. There are four
cardinal navigation actions and episodes last 100 steps. We use $N=8$ tasks (four
pure color-seekers and four mixtures). Expert policies are trained by PPO with a CNN
encoder.

\paragraph{Highway.} A multi-lane driving environment \citep{rachum2026bxrl} with
continuous kinematic state. The observation is the five nearest vehicles, each with
five features (presence, longitudinal and lateral position, and velocities), for a
$25$-dimensional ego-centric state; the action set is five discrete meta-actions
(change lane left/right, idle, accelerate, decelerate). The $K=4$ features are speed,
collision, headway (proximity to the lead vehicle), and lane change, and tasks differ
in their signed weights over these (e.g., preferring speed, tailgating, or penalizing
lane changes). NPC traffic is stochastic. We use $N=5$ tasks, with expert policies
trained by PPO with an MLP encoder.

\paragraph{RecSim.} A tabular sequential recommender \citep{ie2019recsim} over $T=4$
topics ($K=4$ features). The state is the vector of cumulative per-topic click counts,
which is monotone within an episode and takes $9^4=6561$ discrete values. At each of
the steps the user is shown 10 candidate items plus a skip action (11 actions in
total) and chooses via a forward-looking soft-$Q$ policy at temperature $0.1$. Each
topic has a burnout threshold (we use $[5,6,7,8]$); once a topic's cumulative clicks
reach its threshold, its appeal drops to zero for the remainder of the episode. The
feature $\psi$ is the clicked item's quality times its topic one-hot times the current
topic appeal. We use $N=8$ users (four pure and four 50/50 mixtures). Because the
state space is small, expert policies are computed exactly by tabular soft value
iteration, one per (user, discount) pair; this makes RecSim the setting where we can
evaluate transfer against oracle policies without approximation error
(Section~\ref{sec:transfer_xp}).

\subsection{Reward Construction and Expert Training}
\label{app:rewards}

For each environment we set task weights $W\in\mathbb R^{N\times 4}$ as a combination
of pure tasks (weight on a single feature) and mixed tasks (weight spread across
features), chosen so that the rows span $\mathbb R^4$. Choosing four reward features
does not force the rank $\rankbound$ of the log-policy matrix to be four; the forward soft-RL
map is nonlinear, so we use $\rankbound=4$ throughout and probe misspecification in Section~\ref{sec:rank}. Given each task's reward, we train an expert (PPO for
FourRooms and Highway, exact soft value iteration for RecSim), roll out trajectories
for offline training, and $\mu$-normalize the ground-truth rewards to match our
identification constraint (with $\mu$ uniform and $g=0$).

\subsection{Baselines}
\label{app:baselines}

We compare against two single-agent methods and one multi-task method.

\paragraph{GenPQR \citep{van2025inverse}.} The single-agent classify-then-regress
method our approach extends: a softmax classifier estimates the log-policy, which the
reward operator maps to a reward. We run it in two multi-task variants: \emph{per-task}
(fit independently on each task's data) and \emph{pooled} (fit a single reward on all
tasks' data, ignoring task identity).

\paragraph{IQ-Learn \citep{garg2021iqlearn}.} A non-adversarial occupancy-matching
method that learns a soft $Q$-function directly, from which a reward is extracted by
the inverse soft-Bellman operator. Run in the same \emph{per-task} and \emph{pooled}
variants as GenPQR.

\paragraph{Multi-intent IRL.} A feature-free, model-free adaptation of
\citet{babesvroman2011apprenticeship}. The original clusters demonstrations using
known reward features and a model-based inner loop, neither of which is available in
our setting. We keep the core structure: a soft-clustering EM over log-policies with
$K_{MI}=4$ clusters, where each cluster's reward is recovered from the data of the tasks
assigned to it (the same model-free operator as our method) and planning is performed
once per cluster. 
Each task's reward is its responsibility-weighted combination of the $K_{\mathrm{MI}}$ cluster rewards.

\subsection{Architectures, hyperparameters, and tuning}
\label{sec:app-tuning}

Within an environment all methods share the same encoder and width, so differences
reflect the estimation strategy rather than model capacity: a CNN over the local object view for FourRooms, an MLP for Highway, and an MLP over the tabular click state for RecSim. The classifier is trained with Adam.

Regularization differs by method, and each was given a matched opportunity to be tuned. Our classifier uses the variational nuclear-norm penalty with a scalar $\lambda_c/\sqrt{n}$, so the penalty weakens as the per-agent budget grows; the schedule form follows the penalised-MLE rate $\lambda \sim \sqrt{\log d/n}$ and the constant was selected on Highway over $c \in \{0, 0.1, 0.2, 0.4\}$, with $c = 0.2$ winning at every budget. We transfer $c = 0.2$ unchanged to FourRooms and RecSim rather than retuning per environment. The per-task and pooled GenPQR baselines were swept over the same schedule ($c \in \{0, 0.05, 0.2, 0.5\}$) and no value improved them, so they are reported at $c = 0$; their off-support failure is structural rather than a regularization artifact. The IQ-Learn baselines do not use the classifier, so the schedule does not apply; their free parameter is the soft-Q temperature $\tau$, which we fix at $1.0$, the temperature at which the expert log-policy is defined.

\begin{table}[t]
\centering
\small
\begin{tabular}{lccc}
\toprule
 & FourRooms & Highway & RecSim \\
\midrule
Encoder                  & CNN        & MLP       & MLP        \\
Hidden width             & 512        & 128       & 256        \\
Residual blocks          & 0          & 0         & 2          \\
FQI iterations       & 16         & 16        & 4          \\
FQI regression epochs    & 50         & 50        & 40         \\
Basis rank $\rankbound$           & 4          & 4         & 4          \\
Nuclear-norm scalar $c$  & 0.2        & 0.2       & 0.2        \\
\midrule
MI clusters $K_{\mathrm{MI}}$  & 4    & 4         & 4          \\
MI E-step holdout        & 0.2        & 0.2       & 0.2        \\
\bottomrule
\end{tabular}
\caption{IRL pipeline settings per environment. All methods within an environment use
the same encoder and width, so the comparison isolates the estimation strategy.}
\label{tab:hparams}
\end{table}

\subsection{Metrics}
\label{app:metrics}

We report Pearson correlation between recovered and $\mu$-normalized ground-truth
rewards, which is invariant to the global scale and shift that IRL cannot identify. For each task, on-support states are those the task visits and off-support states are the remainder, where the state space is the union of states visited across all tasks. 

For transfer (Section~\ref{sec:transfer_xp}) we report return in the perturbed target as a fraction of the oracle ceiling, where the ceiling is the return of the optimal tabular policy for the true reward under the target dynamics. Perturbations lower the probability that a task's preferred topic appears in the candidate slate, swept over the degree of suppression; a random-uniform policy and a behavior-cloning policy bound the baselines.

\subsection{Computational efficiency}\label{sec:app_comp_efficiency}
Reward weights are sampled independently per agent. In Highway we draw $w \sim \mathcal{N}(0, I)$ over the four features (speed, collision, headway, lane change), constrain the collision weight to be non-positive since a positive one encodes seeking crashes, and normalize each row to unit norm. In FourRooms we draw $|\mathcal{N}(0, I)|$ over the four object colors and scale each row so its largest entry is one, confining the population to the positive orthant, RecSim samples mixing non-negative coefficients over a fixed rank-4 basis.

\begin{table}[t]
\centering
\small
\begin{tabular}{llrrr}
\toprule
Domain & Method & $N{=}8$ & $N{=}32$ & $N{=}128$ \\
\midrule
Highway, $n{=}500$ & \textbf{LowRank} & $+0.912 \pm 0.010$ & $+0.885 \pm 0.011$ & $+0.884 \pm 0.010$ \\
 & GenPQR-per-task & $+0.787 \pm 0.039$ & $+0.742 \pm 0.052$ & $+0.739 \pm 0.026$ \\
 & IQ-per-task & $+0.677 \pm 0.040$ & $+0.649 \pm 0.042$ & $+0.645 \pm 0.024$ \\
 & MI & $+0.849 \pm 0.014$ & $+0.846 \pm 0.082$ & $+0.878 \pm 0.003$ \\
\midrule
Highway, $n{=}100$ & \textbf{LowRank} & $+0.879 \pm 0.018$ & $+0.847 \pm 0.017$ & $+0.844 \pm 0.016$ \\
 & GenPQR-per-task & $-0.109 \pm 0.082$ & $-0.113 \pm 0.029$ & $-0.097 \pm 0.020$ \\
 & IQ-per-task & $-0.112 \pm 0.058$ & $-0.085 \pm 0.028$ & $-0.072 \pm 0.020$ \\
 & MI & $+0.304 \pm 0.204$ & $+0.645 \pm 0.057$ & $+0.815 \pm 0.028$ \\
\midrule
FourRooms, $n{=}500$ & \textbf{LowRank} & $+0.539 \pm 0.033$ & $+0.529 \pm 0.027$ & $+0.528 \pm 0.026$ \\
 & GenPQR-per-task & $+0.403 \pm 0.009$ & $+0.384 \pm 0.007$ & $+0.387 \pm 0.003$ \\
 & IQ-per-task & $+0.127 \pm 0.030$ & $+0.122 \pm 0.010$ & $+0.117 \pm 0.007$ \\
 & MI & $+0.323 \pm 0.032$ & $+0.259 \pm 0.030$ & $+0.243 \pm 0.021$ \\
\midrule
FourRooms, $n{=}100$ & \textbf{LowRank} & $+0.396 \pm 0.037$ & $+0.394 \pm 0.037$ & $+0.397 \pm 0.035$ \\
 & GenPQR-per-task & $+0.307 \pm 0.011$ & $+0.275 \pm 0.010$ & $+0.283 \pm 0.005$ \\
 & IQ-per-task & $+0.069 \pm 0.032$ & $+0.064 \pm 0.010$ & $+0.055 \pm 0.006$ \\
 & MI & $+0.241 \pm 0.076$ & $+0.198 \pm 0.021$ & $+0.198 \pm 0.010$ \\
\midrule
RecSim, $n{=}500$ & \textbf{LowRank} & $+0.776 \pm 0.015$ & $+0.775 \pm 0.018$ & $+0.776 \pm 0.015$ \\
 & GenPQR-per-task & $+0.196 \pm 0.010$ & $+0.190 \pm 0.004$ & $+0.183 \pm 0.002$ \\
 & IQ-per-task & $+0.134 \pm 0.010$ & $+0.124 \pm 0.006$ & $+0.117 \pm 0.002$ \\
 & MI & $+0.351 \pm 0.025$ & $+0.348 \pm 0.015$ & $+0.348 \pm 0.016$ \\
\midrule
RecSim, $n{=}100$ & \textbf{LowRank} & $+0.744 \pm 0.027$ & $+0.746 \pm 0.022$ & $+0.743 \pm 0.022$ \\
 & GenPQR-per-task & $+0.136 \pm 0.013$ & $+0.138 \pm 0.006$ & $+0.134 \pm 0.003$ \\
 & IQ-per-task & $+0.099 \pm 0.011$ & $+0.100 \pm 0.003$ & $+0.096 \pm 0.002$ \\
 & MI & $+0.336 \pm 0.028$ & $+0.337 \pm 0.021$ & $+0.343 \pm 0.015$ \\
\bottomrule
\end{tabular}
\caption{Reward-recovery quality (off-support global Pearson $r$, mean $\pm$ s.d.\ over 10 seeds) against population size $N$. LowRank and both per-task baselines are flat in $N$ (largest change is $0.048$); the small step from $N{=}8$ to $N{=}32$ on Highway and FourRooms is taken by all methods alike and reflects those populations sampling reward weights per agent, so agents $8$--$31$ differ from the first $8$. MI is the exception: it gains $+0.51$ on Highway at $n{=}100$, where its clusters are starved at $N{=}8$ (s.d. $0.204$, falling to $0.028$ by $N{=}128$), but loses $0.08$ on FourRooms at $n{=}500$ with tight error bars.}
\label{tab:scaling-quality}
\end{table}

\subsection{Coverage mechanism: derivation of the weighted-norm decomposition}
\label{app:coverage}

This appendix derives the eigenvalue decomposition of the weighted-norm penalty stated inline in Section~\ref{sec:coverage_xp}.

Let $\Theta^*$ be the true logits governing choices and $U^*$ its left singular vectors from the SVD $\Theta^* = U^*\Sigma V^{*\prime}$. For each state $s$, define the ground-truth design Gram matrix
\[
\Sigma_{col}(s) = \sum_i d_{i,s}\,(U^\star_{i,:})^\top U^\star_{i,:},
\]
with eigenpairs $\{\lambda_k(s), v_k(s)\}_{k=1}^{\rankbound}$ from its eigendecomposition, where $U^\star_{i,:}$ is the $i$-th row of $U^*$ and $w_i(s)$ is task $i$'s visit weight at state $s$.

Let $\Delta$ be the estimation error after the classification step, and let $U^*U^{*\prime}\Delta \equiv U^*A'$ be its projection onto the column space of $\Theta^*$, with coordinates $A' = U^{*\prime}\Delta$. Writing $a_{s,a}$ for the $(s,a)$ column of $A'$, the per-agent error is $\Delta_{i,(s,a)} = U^\star_{i,:} a_{s,a}$.
A state's contribution to the weighted norm is then
\begin{align*}
\sum_{i,a} d_i(s)\, \Delta_{i,(s,a)}^2
&= \sum_{i,a} d_i(s)\left(U^\star_{i,:} a_{s,a}\right)^2 \\
&= \sum_a a_{s,a}^{\prime}\left(\sum_i d_i(s)\, (U^\star_{i,:})^\top U^\star_{i,:}\right) a_{s,a} \\
&= \sum_a a_{s,a}^{\prime}\, \Sigma_{col}(s)\, a_{s,a} \\
&= \sum_j \lambda_j(s) \sum_a \langle a_{s,a}, v_j(s)\rangle^2,
\end{align*}
where the last equality applies the spectral theorem to $\Sigma_{col}(s)$. Low coverage along a direction is a small eigenvalue $\lambda_j(s) = \sum_i w_i(s)\left\langle (U_{i,:})^\top, v_j(s)\right\rangle^2$: when $\lambda_j(s)$ is small, a large error component $\langle a_{s,a}, v_j(s)\rangle$ contributes little to the penalty and hides in the weighted norm, though it can be large in the unweighted Frobenius norm that governs off-support recovery.

\subsection{Scaling Tasks}
Another possible strategy to scale across many tasks is a transfer learning approach: learning the basis, $\Phi$, for a relatively small pool of tasks (phase 1) and then for remainder or newly arriving tasks (phase 2), fitting the basis loadings $W$ via cross entropy loss for a frozen basis. The key is to curate the tasks for phase 1 carefully: choosing tasks that span the basis space for identification and have maximal data for statistical efficiency. 

\begin{figure}
\centering
\includegraphics[width=0.85\textwidth]{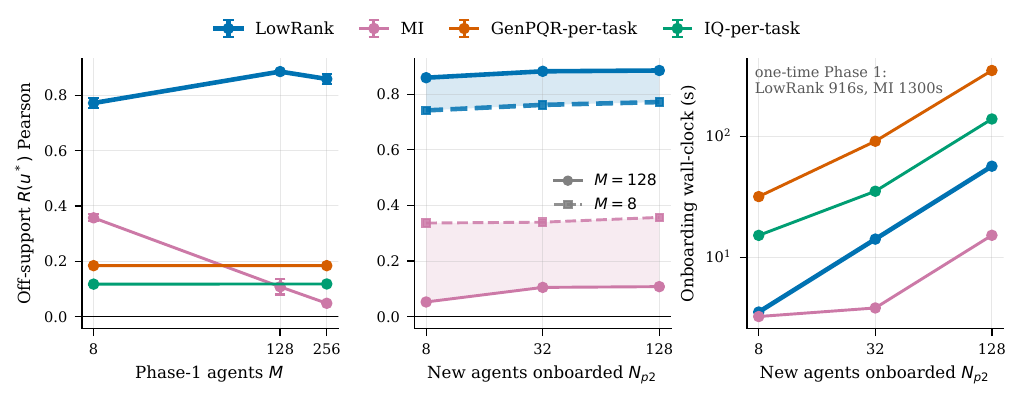}
\caption{Transfer learning is effective: training basis on $M$ agents in Phase 1 and transferring to $N_{p2}$ Phase 2 agents }
\label{fig:transfer_learning}
\end{figure}
Figure \ref{fig:transfer_learning} shows results from an experiment where we vary both the number of tasks in Phase 1, $M$, and the number of tasks in Phase 2, $N_{p2}$. Increasing $M$ by definition has no effect on per-task variants of GenPQR and IQ but has sizeable effects on LowRank and MI. For LowRank, increasing $M$ from 8 to 128 leads to a $\approx0.15$ increase in off-support Pearson correlation for new tasks in Phase 2 whereas the opposite is true of MI leading to a -0.25 decrease.

\subsection{Logit finite sample bound rate scaling}\label{app:rate-scaling}
To isolate each predicted scaling dependence in Theorem~\ref{thm:unweighted-regret}, we vary one dimension at a time holding fixed the others around the reference configuration: $N=8, S=81, A=5, \rankbound=2$, and $n=2^{22}$, at a regularization parameter chosen on a held-out dataset, $\lambda/\lambda_{\mathrm{theorem}}=1/32$. In the rank $\rankbound$ sweep, the truth is ($\Theta_{\rankbound}^\star=\rankbound^{-1/2}\sum_{j=1}^{\rankbound} U_{:,j}(V_{:,j})^\top$), with $\rankbound$ mutually orthogonal rank-one components, giving exact rank $\rankbound$ while preserving its Frobenius norm. The state-space sweep uses finer or coarser discretizations of the same fixed state domain, normalized to preserve signal strength. In the action-space sweep, we vary the number of slate positions and assign additional positions a balanced set of latent action profiles, rescaled to preserve total signal strength.

\begin{figure}
\centering
\includegraphics[width=0.85\textwidth]{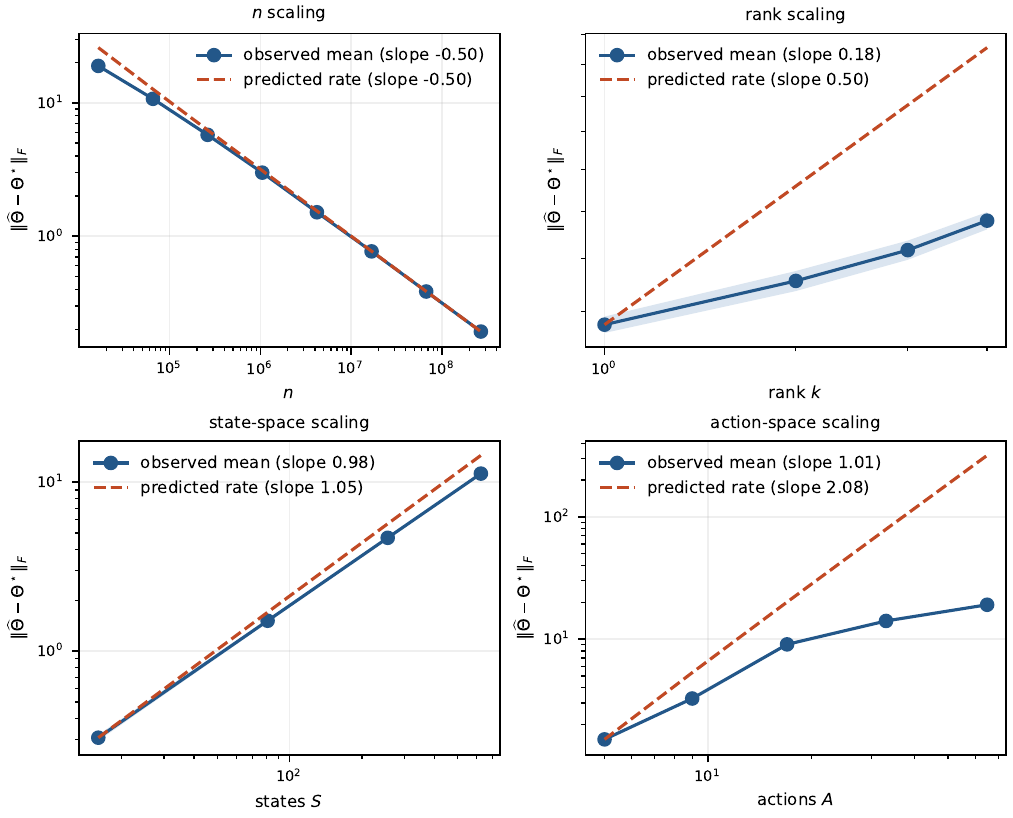}
\caption{Empirical scaling in RecSim matches the theory in $n$ and $S$ and is more conservative in $\rankbound$ and $A$.}
\label{fig:thm_scaling_xp}
\end{figure}

Figure~\ref{fig:thm_scaling_xp} shows the empirical scaling rate against that predicted by the theory. We match on sample-size and state-space cardinality while scaling more conservatively than the theory suggests for action-space cardinality and rank.

\subsection{Raw logit approximation}\label{app:raw_logit_xp}

In Section~\ref{sec:reward_recovery}, we make an approximation by applying the reward operator to the raw unnormalized logits, which represents large computational savings by allowing us to perform $\rankbound$ per-basis function versus $N$ per-task planning steps. To measure the effect of this approximation, we compare performance against the exact method, where planning is performed on the true log-policies per task.  

\begin{figure}
\centering
\includegraphics[width=\textwidth]{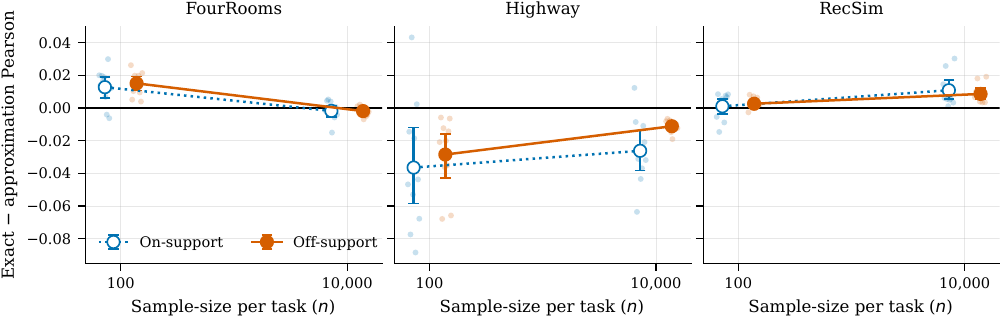}
\caption{Difference between reward recovery between reward operator applied to exact log normalized policies versus approximation}
\label{fig:exact_vs_approximation_ablation}
\end{figure}

Figure~\ref{fig:exact_vs_approximation_ablation} shows mild effects of using the approximation instead of the per-task normalized logits exactly. Using paired-bootstrap 95\% confidence intervals, for the 12 pairs of comparisons across data budget, environment and on vs off-support, 5 favor the exact method, 5 favor the approximation and 2 are indistinguishable from zero. The exact method is also slower: it takes approximately $1.1\times$ the wall-clock of the approximation for Highway ($\rankbound=4,N=5$) and $1.9\times$ and $1.8\times$ for FourRooms and RecSim ($\rankbound=4, N=8$). Wall-clock grows almost exactly with $N$, consistent with the exact path scaling per task ($N$ inversions) versus per basis column $\rankbound$ for the approximation.

\subsection{Rank Misspecification}\label{sec:rank}

A key assumption is that the log-policy matrix is approximately rank $\rankbound$. Because we learn $\rankbound$ basis functions and run FQI on each, we study robustness to rank misspecification: how recovery behaves when the true rank differs from $\rankbound$. This also probes the linear-reward assumption, since a reward that is nonlinear in the underlying features raises the true log-policy rank, one way the assumed $\rankbound$ can be too low. We expect graceful degradation, under-ranking underfits smoothly while over-ranking is benign, rather than a sharp failure.

\begin{figure}
\centering
\includegraphics[width=\textwidth]{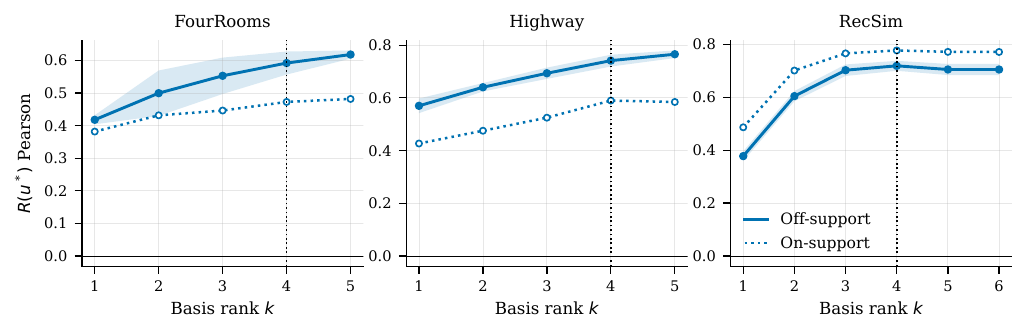}
\caption{Misspecification of rank leads to graceful degradation}
\label{fig:rank_ablation}
\end{figure}

Figure \ref{fig:rank_ablation} shows that as we both under and over-specify rank, recovery performance is smooth. Choosing $\rankbound$ lower than true rank leads to graceful degradation while $\rankbound$ higher leads to a plateau in performance. RecSim plateaus at $\rankbound=4$ matching the rank of the reward features whereas both FourRooms and Highway performance increases with $\rankbound$, related to the use of continuous states in those environments and hence greater rank of true log policies. This also demonstrates the value of applying low rank to log policies, instead of rewards, as we can check robustness to rank.

\subsection{Multi-intent scaling number of clusters}\label{sec:appendix_mi}
Table~\ref{tab:mi-ksweep} tests how scaling the number of clusters affects the multi-intent baseline, at a sample-size of $100$ and $10K$. The effect on scaling is mixed across environments and sample-sizes. Scaling clusters to the same number of tasks sometimes has the best performance (e.g FourRooms) but there are instances when an intermediate $K$ performs best (e.g Highway at $n=100$ and RecSim at $N=10^4$). Performance is lower than LowRank even for the best choice of $K_{MI}$ per cell.

\begin{table}[t]
\centering
\small
\setlength{\tabcolsep}{3pt}
\begin{tabular}{l *{6}{r@{\,$\pm$\,}l}}
\toprule
& \multicolumn{6}{c}{$n = 100$} & \multicolumn{6}{c}{$n = 10^4$} \\
\cmidrule(lr){2-7}\cmidrule(lr){8-13}
$K_{\mathrm{MI}}$
 & \multicolumn{2}{c}{Highway} & \multicolumn{2}{c}{FourRooms} & \multicolumn{2}{c}{RecSim}
 & \multicolumn{2}{c}{Highway} & \multicolumn{2}{c}{FourRooms} & \multicolumn{2}{c}{RecSim} \\
\midrule
$1$\rlap{\,\tiny$^{\dagger}$}
 & $0.04$ & $0.03$ & $0.01$ & $0.04$ & $0.01$ & $0.01$
 & $0.09$ & $0.01$ & $0.15$ & $0.00$ & $-0.00$ & $0.00$ \\
$2$
 & $\mathbf{0.23}$ & $\mathbf{0.13}$ & $0.13$ & $0.08$ & $0.04$ & $0.06$
 & $0.54$ & $0.03$ & $0.37$ & $0.06$ & $0.18$ & $0.01$ \\
$3$
 & $0.04$ & $0.07$ & $0.18$ & $0.07$ & $0.08$ & $0.06$
 & $0.60$ & $0.13$ & $0.41$ & $0.03$ & $0.30$ & $0.03$ \\
$4$
 & $-0.11$ & $0.06$ & $0.22$ & $0.05$ & $0.05$ & $0.04$
 & $0.68$ & $0.13$ & $0.43$ & $0.04$ & $\mathbf{0.34}$ & $\mathbf{0.02}$ \\
$6$
 & \multicolumn{2}{c}{---} & $0.27$ & $0.04$ & $0.10$ & $0.03$
 & \multicolumn{2}{c}{---} & $0.47$ & $0.03$ & $0.32$ & $0.01$ \\
$N$\rlap{\,\tiny$^{\ddagger}$}
 & $-0.17$ & $0.05$ & $\mathbf{0.34}$ & $\mathbf{0.02}$ & $\mathbf{0.13}$ & $\mathbf{0.02}$
 & $\mathbf{0.77}$ & $\mathbf{0.09}$ & $\mathbf{0.53}$ & $\mathbf{0.00}$ & $0.29$ & $0.01$ \\
\midrule
LowRank (ours)
 & $0.71$ & $0.02$ & $0.42$ & $0.08$ & $0.40$ & $0.02$
 & $0.84$ & $0.01$ & $0.72$ & $0.01$ & $0.72$ & $0.02$ \\
\bottomrule
\end{tabular}
\caption{MI-Cluster across cluster count $K_{\mathrm{MI}}$ (off-support Pearson $r$, mean $\pm$ s.d.\ over 10 seeds); $N{=}5$ for Highway, $8$ otherwise. Bold marks each column's best $K$. Ours beats MI at its best $K$ in every domain at both budgets. Shapes are budget-dependent: Highway rises to $K{=}N$ at $n{=}10^4$ but falls to it at $n{=}100$, and RecSim's peak at the generative rank $K{=}4$ disappears.}
\label{tab:mi-ksweep}
\end{table}

%% file: theory_appendix.tex
\section{Theory appendix}\label{sec:proofs}

\paragraph{Notations.}
For matrices, $\triplenorm{\cdot}_F$, $\triplenorm{\cdot}_*$, and $\triplenorm{\cdot}_{\op}$ denote the Frobenius, nuclear, and Euclidean operator norms, respectively. For matrices of the same size, $\traceinner{X}{Y}:=\tr(X^\top Y)$ denotes the Frobenius inner product. Vector norms use double bars; $\norm{x}_2$ is the Euclidean norm. We also write $\norm{\Theta}_\infty:=\max_{i,s,a}|\Theta_{i,(s,a)}|$ for the entrywise maximum norm. For $Z\in\R^{N\times SA}$, write $Z_{i,(s,\cdot)}:=(Z_{i,(s,a)})_{a\in\mathcal A}\in\R^A$ for its action block at task $i$ and state $s$.

\subsection{Setting}
\label{dn:sec:setup}

Let $\mathcal S$ and $\mathcal A$ be finite state and action spaces, with
$|\mathcal S|=S$ and $|\mathcal A|=A$. Assume there are $N$ tasks.
For each task $i\in[N]$, consider the MDPs:
\begin{equation*}
M_{\ell,i}:=(\mathcal S,\mathcal A,\mathsf P_\ell,r_i,d_0,\gamma_\ell),\qquad \ell\in\{1,2\}.
\end{equation*}
We call $M_{1,i}$ the source environment for task $i$ and call $M_{2,i}$ the target environment for task $i$.

We assume the MDPs share the initial state distribution $d_0$. For each $i$, the mean reward function $r_i$ is shared across $M_{1,i}$ and $M_{2,i}$. We assume the transition kernels $\mathsf P_1, \mathsf P_2$, the initial state distribution $d_0$, and the discount factors $0\leq\gamma_1,\gamma_2<1$ are known. 

We write $r_i(s,a)$ for the mean reward for task $i$ at state $s$ and action $a$. We assume each mean reward is finite.

Throughout the paper, we say a policy is soft optimal if it maximizes the Shannon entropy regularized value from every state with entropy coefficient $1$, as characterized by the soft Bellman equations in~\eqref{dn:eq:source-soft-Bellman}. Let $\pi_{1,i}^\star:\mathcal S\rightarrow \Delta(\mathcal{A})$ be the soft-optimal policy in the source environment $M_{1,i}$.

\paragraph{Observations from source environments $M_{1,i}$.} 

We observe independent tuples $(I_t,S_t,A_t)$, $t=1,\ldots,n$, generated by the following i.i.d. sampling procedure 
\begin{align}
    \Pp(I_t=i,S_t=s)&=d_{i,s},
    \label{dn:eq:source-design}\\
A_t\mid(I_t=i,S_t=s)&\sim\pi_{1,i}^\star(\cdot\mid s)\notag.
\end{align}
We assume the sampling probabilities $d_{i,s}$ for all $i\in [N]$ and $s\in \mathcal{S}$ are known and satisfy $d_{i,s}\geq 0$, $\sum_{i,s}d_{i,s}=1$. We write $d:= (d_{i,s})_{i\in [N], s\in \mathcal S}$.

We assume that each task $i\in [N]$ has positive marginal probability, where 
\begin{equation}
    p_i:=\sum_s d_{i,s}>0. \label{dn:eq:type-marginal}
\end{equation}

\paragraph{Candidate policies.}
Let $\Pi$ be the set of all stationary target policies. For each $\pi\in\Pi$ where $\pi:\mathcal S\to\Delta(\mathcal A)$, define
\begin{align}
    d_2^\pi(s) &:=(1-\gamma_2)\E_{\mathsf P_2,\pi,d_0} \left[\sum_{t\geq0}\gamma_2^t\one\{S_t=s\}\right], \label{dn:eq:target-state-occupancy}\\
    \rho_2^\pi(s,a)&:=d_2^\pi(s)\pi(a\mid s).\label{dn:eq:target-sa-occupancy}
\end{align}
We call $d_2^\pi$ the normalized discounted target state occupancy, and $\rho_2^\pi$ the normalized discounted target state-action occupancy. Note that both $d_2^\pi$ and $\rho_2^\pi$ are
known because the target dynamics and initial distribution are known.

For a candidate policy $\pi\in \Pi$, we write $\E_{\mathsf P_2,\pi,d_0}$ for the expectation  over a trajectory $(S_t,A_t)_{t\geq 0}$ generated by $S_0\sim d_0, A_t\sim \pi(\cdot\mid S_t)$, and $S_{t+1}\sim \mathsf P_2(\cdot\mid S_t, A_t)$. Then, for any bounded function $f:\mathcal S\times \mathcal A\to \R$, we have
\begin{align}
    \E_{\mathsf P_2,\pi,d_0}
    [\sum_{t\geq0}\gamma_2^t f(S_t,A_t)] &= \sum_s\left ( \sum_{t\geq 0}\gamma_2^t\E_{\mathsf P_2,\pi,d_0}[\one\{S_t=s\}] \right)\sum_a\pi(a\mid s)f(s,a) \notag\\
    &= \frac{1}{1-\gamma_2}\sum_s d_2^\pi(s)\sum_a \pi(a\mid s)f(s,a) \notag \\
    &=\frac1{1-\gamma_2}\sum_{s,a}\rho_2^\pi(s,a)f(s,a).
    \label{dn:eq:occupancy-identity}
\end{align}

\paragraph{Source logits.}
We fix a reference action $a_0\in\mathcal A$. Define 
$\Theta^\star\in\R^{N\times SA}$ by
\begin{equation}
    \Theta^\star_{i,(s,a)}
    :=\log\frac{\pi_{1,i}^\star(a\mid s)}{\pi_{1,i}^\star(a_0\mid s)},
    \qquad \Theta^\star_{i,(s,a_0)}=0.
    \label{dn:eq:anchored-logit}
\end{equation}
We call $\Theta^\star$ the true anchored source logit matrix. Note that the source soft-optimal policies have full support, so the log ratios $\Theta^\star_{i,(s,a)}$ are finite. 

Fix $\rankbound\leq \min \{N, SA\}$ and $B>0$.

\begin{assumption}
\label{dn:ass:low-rank}
Assume that the true source logit matrix $\Theta^\star$ satisfies $\Theta^\star\neq 0$ and
\begin{equation}
    \rank(\Theta^\star)\leq \rankbound,\qquad
    \norm{\Theta^\star}_\infty\leq B.
    \label{dn:eq:low-rank-bound}
\end{equation}
\end{assumption}
Later in Section~\ref{dn:app:zero-truth} we explain how the case of $\Theta^\star=0$, that is, if all source policies are uniform over actions independent of state, is resolved easily.

\paragraph{Reward identification.}
We use the reward normalization of \citet{van2025inverse} and the recovery identity in their Theorem 1. We impose the following reward normalization and fix a known policy $\mu:\mathcal S\to\Delta(\mathcal A)$ and a known state function $g:\mathcal S\to\R$. 

\begin{assumption}
\label{dn:ass:normalization}
For the fixed known $\mu$ and $g$ used in the procedure, every task's actual
shared mean reward satisfies
\begin{equation}
    \sum_{a\in\mathcal A}\mu(a\mid s)r_i(s,a)=g(s)
    \qquad\forall i\in [N],s\in\mathcal S.
    \label{dn:eq:reward-normalization}
\end{equation}
\end{assumption}

For a state function $V:\mathcal{S}\to\R$ and a state-action function $f:\mathcal{S}\times \mathcal{A} \to \R$, define linear operators
\begin{align*}
    (\mathsf P_1V)(s,a)&:=\sum_{s'}\mathsf P_1(s'\mid s,a)V(s'),& (\mu f)(s)&:=\sum_b\mu(b\mid s)f(s,b).
\end{align*}
For a state function $g:\mathcal S\to \R$, use the same symbol for its state-action extension $g(s,a):=g(s)$. 

The composition $\mathsf P_1\mu$ and the centering operator $C_\mu$ act on state-action functions as
\begin{align}
    (\mathsf P_1\mu f)(s,a) &:=\sum_{s',b}\mathsf P_1(s'\mid s,a)\mu(b\mid s')f(s',b), \label{dn:eq:P1-mu}\\
    (C_\mu f)(s,a)&:=f(s,a)-\sum_b\mu(b\mid s)f(s,b). \label{dn:eq:C-mu}
\end{align}

Write $u^\star\in\R^{N\times SA}$ where $u_i^\star(s,a):= \log \pi_{1,i}^\star(a\mid s)$. In matrix formulas, state-action functions such as $u_i^\star$, $r_i$, and $\rho_2^\pi$ are represented as column vectors in state then action order.
Under Assumption \ref{dn:ass:normalization}, we can recover the reward by 
\begin{equation}
r_i=C_\mu(I-\gamma_1\mathsf P_1\mu)^{-1}(u_i^\star-g)+g.
\label{dn:eq:normalized-reward-recovery}
\end{equation}
Here $I$ is the $SA\times SA$ identity. The inverse exists because $\mathsf P_1\mu$ is row-stochastic and $\gamma_1<1$.

\paragraph{Target objective.}
Candidate policies may assign zero probability to some actions and we use convention $0\log0=0$ throughout. For any target policy $\pi$ and task $i$, define 
\begin{equation}
    \softvalue_{2,i}(\pi;r_i)
    :=\E_{\mathsf P_2,\pi,d_0}\left[
    \sum_{t\geq0}\gamma_2^t
    \{r_i(S_t,A_t)-\log\pi(A_t\mid S_t)\}\right].
    \label{dn:eq:target-soft-value}
\end{equation}
We call $\softvalue_{2,i}$ the target soft value of policy $\pi$, which is the expectation of the total discounted reward plus Shannon entropy by running policy $\pi$ in $M_{2,i}$. By \eqref{dn:eq:occupancy-identity}, we have that for any fixed $\pi\in \Pi$, and reward functions $\tilde{r}_i, r_i: \mathcal{S}\times \mathcal A\to \R$,
\begin{equation}
\softvalue_{2,i}(\pi; \tilde{r}_i) - \softvalue_{2,i}(\pi; r_i) = \frac{1}{1-\gamma_2}\sum_{s,a} \rho_2^\pi(s,a)[\tilde{r}_i(s,a) - r_i(s,a)].
\end{equation}

Let $\pi_{2,i}^\star$ be the soft-optimal policy for $M_{2,i}$ from every state, characterized by its soft Bellman equations. In particular, $\pi_{2,i}^\star \in \arg\max_{\pi'}\softvalue_{2,i}(\pi'; r_i)$ and $\pi_{2,i}^\star\in\Pi$.

\subsection{Procedure: single logit fit}\label{dn:sec:unweighted-procedure}
Our procedure first fits the source logits, reconstructs the reward of each task, and selects the candidate policy with the largest estimated target soft value.

\paragraph{Logit estimation.}
Define the feasible set
\begin{equation}
    \mathcal M_B:=\left\{\Theta\in\R^{N\times SA}:
    \norm{\Theta}_\infty\leq B,\quad
    \Theta_{i,(s,a_0)}=0\ \text{for every }i,s\right\}.
    \label{dn:eq:parameter-space}
\end{equation}

For observation $t$, the negative log likelihood is given by $l_t(\Theta)=\log\left(\sum_{a\in\mathcal A} e^{\Theta_{I_t,(S_t,a)}}\right)-\Theta_{I_t,(S_t,A_t)}$, and the empirical loss is given by $L_n(\Theta):=\frac1n\sum_{t=1}^n l_t(\Theta)$.

Fix a failure probability $0<\delta<1$ and define
\begin{align}
\nu 
&:=\max\left\{
\left(1-\frac1A\right)\max_i p_i,\;
\frac12\max_s\sum_i d_{i,s}
\right\},
\label{dn:eq:unweighted-variance}\\*
\lambda_n
&:=2\left\{
\sqrt{\frac{2\nu }{n}\log\frac{2(N+SA)}{\delta}}
+\frac{2\sqrt2}{3n}\log\frac{2(N+SA)}{\delta}
\right\}.
\label{dn:eq:unweighted-lambda}
\end{align}
Note that this tuning uses total failure probability $\delta$ for a single logit matrix fit, so its error bound holds simultaneously across tasks $i$ and candidate $\pi\in \Pi$. 

We estimate the source logit matrix by
\begin{equation}
\widehat\Theta \in\argmin_{\Theta\in\mathcal M_B} \left\{L_n(\Theta)+\lambda_{n}
    \triplenorm{\Theta}_*\right\},
    \label{dn:eq:unweighted-estimator}
\end{equation}

Note that a minimizer exists because the objective is continuous on the nonempty compact set $\mathcal M_B$. 

\paragraph{Estimate the target value of each candidate policy}
Given $\widehat\Theta$, we define 
\begin{align}
    \widehat u_{i,(s,a)}
    &:=\widehat\Theta_{i,(s,a)}
      -\log\sum_b e^{\widehat\Theta_{i,(s,b)}},
      \label{dn:eq:unweighted-U-hat}\\
    \widehat r_i
    &:=C_\mu(I-\gamma_1\mathsf P_1\mu)^{-1}
       (\widehat u_i-g)+g.
      \label{dn:eq:unweighted-r-hat}
\end{align}
where $\widehat u_{i,(s,a)}$ is an estimate of the log probabilities $\log \pi_{1,i}^\star(a\mid s)$ and $\widehat r_i$ is an estimate of the reward of task $i$. 

Next, for each $\pi \in \Pi$, define the estimated target soft value by evaluating at reward $\widehat r_i$, that is,
\begin{equation}
    \widehat \softvalue_{2,i}(\pi):=\softvalue_{2,i}(\pi;\widehat r_i)=\frac{1}{1-\gamma_2}\sum_{s,a} \rho_2^\pi (s,a)[\widehat r_i(s,a) - \log \pi(a\mid s)].
\label{dn:eq:unweighted-J-hat}
\end{equation}

\paragraph{Plug-in policy selection.}
For each task $i$, choose
\begin{equation}
    \widehat\pi_i\in\argmax_{\pi\in\Pi} \widehat \softvalue_{2,i}(\pi).
    \label{dn:eq:unweighted-plugin-selector}
\end{equation}

\subsection{Main result: single logit fit}
\begin{theorem}
\label{dn:cor:unweighted-regret}
Under the setting of Section~\ref{dn:sec:setup} and
Assumptions~\ref{dn:ass:low-rank},~\ref{dn:ass:normalization} and~\ref{dn:ass:curvature}, use the procedure
of Section~\ref{dn:sec:unweighted-procedure}. With probability at least
$1-\delta$, the following statements hold simultaneously for every
minimizer in~\eqref{dn:eq:unweighted-estimator}, every task $i$, and every
candidate $\pi\in\Pi$ and every maximizing choice in~\eqref{dn:eq:unweighted-plugin-selector}.

\emph{(i) Estimation and value.} The common logit estimate satisfies
\begin{equation}
\triplenorm{\widehat\Theta-\Theta^\star}_F\leq\varepsilon_n,
\label{dn:eq:unweighted-logit-bound}
\end{equation}
and each estimated target value satisfies
\begin{equation}
\left|\widehat \softvalue_{2,i}(\pi)-\softvalue_{2,i}(\pi;r_i)\right|
\leq R_n(\pi).
\label{dn:eq:unweighted-value-certificate}
\end{equation}

\emph{(ii) Regret.} The regret of $\widehat{\pi}_i$ satisfies
\begin{equation}
\softvalue_{2,i}(\pi_{2,i}^\star;r_i)-\softvalue_{2,i}(\widehat\pi_i;r_i)
\leq R_n(\pi_{2,i}^\star)+R_n(\widehat\pi_i).
\label{dn:eq:unweighted-comparator-regret}
\end{equation}

\emph{(iii) Policy error.} 
\begin{equation}
\begin{aligned}
&\sum_s d_2^{\widehat\pi_i}(s)
\operatorname{KL}\!\left(\widehat\pi_i(\cdot\mid s)\,\middle\|\,\pi_{2,i}^\star(\cdot\mid s)\right)\\
&\leq(1-\gamma_2)\{R_n(\pi_{2,i}^\star)+R_n(\widehat\pi_i)\}\\
&=\varepsilon_n L_B\left\{\begin{aligned}
&\left\|\bigl[C_\mu(I-\gamma_1\Transition_1\mu)^{-1}\bigr]^\top\rho_2^{\pi_{2,i}^\star}\right\|_2\\
&+\left\|\bigl[C_\mu(I-\gamma_1\Transition_1\mu)^{-1}\bigr]^\top\rho_2^{\widehat\pi_i}\right\|_2
\end{aligned}\right\}.
\end{aligned}
\label{dn:eq:unweighted-realizable-KL}
\end{equation}
\end{theorem}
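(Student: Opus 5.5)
The proof proceeds in three stages: a nuclear-norm restricted-strong-convexity argument for the logit error, a Lipschitz transfer of that error through the affine reward map and the occupancy identity, and a soft-optimality decomposition that converts regret into occupancy-weighted KL.

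\textbf{Step 1 (logit error).} Let $\Delta:=\widehat\Theta-\Theta^\star$, which is anchored. The key event $\mathcal E$ is $\triplenorm{\nabla L_n(\Theta^\star)}_{\op}\leq\lambda_n/2$. The gradient is $\frac1n\sum_t Z_t$, where $Z_t$ is the matrix with the single nonzero block $e_{I_t}\otimes(\pi^\star_{1,I_t}(\cdot\mid S_t)-e_{A_t})$ placed at row $I_t$ and state block $S_t$. Each $Z_t$ is mean zero with $\triplenorm{Z_t}_{\op}\leq\sqrt2$. The matrix variances are $\triplenorm{\E Z_tZ_t^\top}_{\op}\leq\max_i p_i(1-1/A)$ and $\triplenorm{\E Z_t^\top Z_t}_{\op}\leq\frac12\max_s\sum_i d_{i,s}$, up to the exact constants matching $\nu$. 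Matrix Bernstein in the dimension $N+SA$ then yields exactly the choice~\eqref{dn:eq:unweighted-lambda} with probability $1-\delta$.

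On $\mathcal E$, optimality of $\widehat\Theta$, convexity, and the standard decomposability of the nuclear norm over $T$ and $T^\perp$ give $\Delta\in\mathcal C(\Theta^\star)$ and
\[
L_n(\widehat\Theta)-L_n(\Theta^\star)-\langle\nabla L_n(\Theta^\star),\Delta\rangle\leq\tfrac32\lambda_n\triplenorm{P_T\Delta}_*\leq\tfrac32\lambda_n\sqrt{2\rankbound}\triplenorm{\Delta}_F .
\]

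\textbf{Step 2 (the main obstacle: restricted strong convexity without a sample-size condition).} The empirical Bregman remainder must be lower bounded by its population counterpart, and the paper claims no sample-size condition is needed. I would try the following route first. Each $l_t$ is a softmax log-partition function. On $\mathcal M_B$, whose entries are bounded by $B$, its Hessian restricted to anchored directions dominates $\kappa_B\norm{\Delta_{I_t,(S_t,\cdot)}}_2^2$ for a constant $\kappa_B>0$ depending on $A$ and $B$. This gives an empirical quadratic $\frac{\kappa_B}{n}\sum_t\norm{\Delta_{I_t,(S_t,\cdot)}}_2^2$.

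Getting from this to the population weighting $d$ is where difficulty is likely. The plan is to write the empirical quadratic as $\sum_{i,s}\widehat d_{i,s}\norm{\Delta_{i,(s,\cdot)}}_2^2$, to bound $|\widehat d-d|$ in the relevant form through the same concentration event, and to absorb the deviation using $\lambda_n$ together with the cone constraint. If that fails, I would fall back on using the expected loss directly, taking the likelihood-ratio form of the loss difference and its Bernstein deviation on the same event. Either route should give $\kappa_B c_0\triplenorm{\Delta}_F^2\lesssim\lambda_n\sqrt{\rankbound}\triplenorm{\Delta}_F$, hence $\triplenorm{\Delta}_F\leq\varepsilon_n$. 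This holds for every minimizer simultaneously, since only the event $\mathcal E$ is used.

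\textbf{Step 3 (value, regret, KL).} The map $\Theta_i\mapsto\Theta_i-\mathrm{logsumexp}$ is $L$-Lipschitz blockwise, so $\norm{\widehat u_i-u_i^\star}_2\leq L\norm{\widehat\Theta_i-\Theta^\star_i}_2\leq L\varepsilon_n$. By Lemma~\ref{lem:linearity}, $\widehat r_i-r_i=L_\mu(\widehat u_i-u_i^\star)$. The occupancy identity together with Cauchy--Schwarz then gives
\[
|\widehat J_{2,i}(\pi)-J_{2,i}(\pi;r_i)|\leq\frac{1}{1-\gamma_2}\norm{L_\mu^\top\rho_2^\pi}_2\,L\varepsilon_n=:R_n(\pi),
\]
with $L_B$ absorbing $L$; this is (i). For (ii), the standard decomposition
\[
J(\pi^\star)-J(\widehat\pi)=[J(\pi^\star)-\widehat J(\pi^\star)]+[\widehat J(\pi^\star)-\widehat J(\widehat\pi)]+[\widehat J(\widehat\pi)-J(\widehat\pi)]
\]
applies, and the middle term is at most $0$. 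For (iii), I would invoke the soft performance-difference identity (Lemma~\ref{lem:kl_regret_equiv}): $J_{2,i}(\pi^\star_{2,i})-J_{2,i}(\pi)=\frac{1}{1-\gamma_2}\sum_s d_2^\pi(s)\,\mathrm{KL}(\pi\|\pi^\star_{2,i})$, which follows because the soft $Q^\star$ satisfies $\pi^\star\propto e^{Q^\star}$. Combining it with (ii) gives the stated KL bound.
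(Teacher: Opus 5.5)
Your Steps 1 and 3 follow the paper: score concentration by rectangular matrix Bernstein, the cone condition by decomposability, the blockwise log-softmax bound pushed through $L_\mu$ and Cauchy--Schwarz, plug-in selection, and the soft regret--KL identity. Step~2, however, has a genuine gap, and it is the crux of the estimation bound. Both of your routes assume that restricted strong convexity can be certified on the same event $\mathcal E=\{\triplenorm{\nabla L_n(\Theta^\star)}_{\op}\le\lambda_n/2\}$. That cannot work. The score is built from the terms $\pi^\star_{1,I_t}(\cdot\mid S_t)-e_{A_t}$, and its $(i,s)$ block is $\frac{n_{i,s}}{n}$ times the gap between $\pi^\star_{1,i}(\cdot\mid s)$ and the empirical action frequencies there. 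It therefore vanishes whenever those frequencies match $\pi^\star_{1,i}(\cdot\mid s)$ in each visited block, whichever blocks were visited. In particular $\mathcal E$ is compatible with all draws falling in a single block, where $L_n$ is flat in every other block and no curvature can be certified. So $\mathcal E$ says nothing about $\widehat d-d$. Your fallback has the same problem: a Bernstein bound on the loss difference at a fixed $\Theta$ does not apply at the data-dependent $\widehat\Theta$, so you need a deviation bound that is uniform over candidate errors. A second high-probability event is unavoidable. This is why $\lambda_n$ carries $\log(2(N+SA)/\delta)$: Bernstein gives $\mathcal E$ with probability $1-\delta/2$, not $1-\delta$, and the remaining $\delta/2$ is spent on that second event.

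The paper supplies this event by comparing the empirical and population Fisher quadratics $Q_n$ and $Q$ (your variant with $\norm{\Delta_{I_t,(S_t,\cdot)}}_2^2$ would serve equally well). The comparison is uniform over $\mathcal B(t)=\{\Delta\in\mathcal C(\Theta^\star):\triplenorm{\Delta}_F\le t,\ \norm{\Delta}_\infty\le2B\}$ at one deterministic radius. It uses Talagrand's inequality, symmetrization, Maurer's vector contraction, nuclear/operator duality on the cone ($\triplenorm{\Delta}_*\le4\sqrt{2\rankbound}\,\triplenorm{\Delta}_F$), and the expectation form of matrix Bernstein for the Fisher--Rademacher sum. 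Two ingredients missing from your sketch are what remove any sample-size condition.

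First, the entrywise bound $\norm{\Delta}_\infty\le2B$, inherited from $\mathcal M_B$, makes $\Delta\mapsto f_\Delta$ Lipschitz. The deviation is therefore \emph{linear} in $t$, of order $B\sqrt{\rankbound}\,\lambda_nt+B^2\log(4/\delta)/n$, while the curvature $e^{-2B}c_0t^2/A^2$ is quadratic.

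Second, a localization dichotomy at $t_0:=\varepsilon_n\ge260A^2e^{2B}B\sqrt{\rankbound}\,\lambda_n/c_0$. Either $\triplenorm{\Delta}_F<t_0$ and you are done, or you rescale $\Delta$ to radius $t_0$ and use quadratic homogeneity to get $Q_n(\Delta)\ge Q(\Delta)/2$; the $1/n$ term is absorbed via $e^{-2B}c_0/A^2\le\nu$. After that, your Bregman comparison closes the argument. This dichotomy is what produces the $260B$ term in $\varepsilon_n$. Without it, your inequality $\kappa_Bc_0\triplenorm{\Delta}_F^2\lesssim\lambda_n\sqrt{\rankbound}\,\triplenorm{\Delta}_F$ would need something like $\max_{i,s}|\widehat d_{i,s}-d_{i,s}|\lesssim c_0$, which is a sample-size condition.

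A smaller point in Step~3: $R_n(\pi)$ is defined with the specific constant $L_B$, whereas the log-softmax Lipschitz constant is only $\sqrt A$ in general. You obtain $L_B$ because the segment between $\Theta^\star$ and $\widehat\Theta$ stays blockwise in $[-B,B]^A$, so the path-averaged softmax has coordinates at most $e^{2B}/(e^{2B}+A-1)$.
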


\begin{proof}[Proof sketch]
Part (i) follows from Lemmas~\ref{dn:lem:single-fit-estimation} and~\ref{dn:lem:general-value}; part (ii) from Lemma~\ref{dn:lem:general-selection}; and part (iii) from part (ii) and Lemma~\ref{dn:lem:soft-regret-KL}. All conclusions hold on the same estimation event.
\end{proof}

Note that the policy-error bounds \eqref{dn:eq:unweighted-realizable-KL} follow from \eqref{dn:eq:unweighted-comparator-regret} and the identity below
\begin{equation}
\softvalue_{2,i}(\pi_{2,i}^\star;r_i)-\softvalue_{2,i}(\pi;r_i)
=\frac{1}{1-\gamma_2}\sum_s d_2^\pi(s)
\operatorname{KL}\!\left(\pi(\cdot\mid s)\,\middle\|\,\pi_{2,i}^\star(\cdot\mid s)\right).
\label{dn:eq:unweighted-entropy-KL-identity}
\end{equation}

\subsection{Coverage condition}\label{dn:sec:unweighted-main}
\paragraph{Coverage condition.}
By Assumption~\ref{dn:ass:low-rank}, let $q:= \rank(\Theta^\star)\leq \rankbound$ and let $\Theta^\star=U\Sigma V^\top$ be a singular value decomposition of the true source logit matrix, with $U\in\R^{N\times q}$ and $V\in \R^{SA\times q}$ with orthonormal columns. We define the tangent space
\begin{align*}
T:=\{UX^\top+YV^\top:X\in\R^{SA\times q}, Y\in\R^{N\times q}\}.
\end{align*}
Let $P_{T}, P_{T^\perp}:\R^{N\times SA}\to \R^{N\times SA}$ denote orthogonal projection operators onto $T$ and its orthogonal complement $T^\perp$, that is, for every matrix $Z\in \R^{N\times SA}$, 
\begin{align*}
P_{T}Z :=UU^\top Z+ZVV^\top-UU^\top ZVV^\top, \qquad P_{T^\perp}Z=(I-UU^\top)Z(I-VV^\top).
\end{align*}
Note that $P_{T}, P_{T^\perp}$ are orthogonal projections with respect to the Frobenius inner product. 

Define 
\begin{equation}
\mathcal C (\Theta^\star):=\{\Delta:\Delta_{i,(s,a_0)}=0\ \forall i,s,\triplenorm{P_{T^\perp}\Delta}_*\leq3\triplenorm{P_{T}\Delta}_*\},
\label{dn:eq:unweighted-cone}
\end{equation}
and we call $\mathcal C (\Theta^\star)$ the anchored error cone. 

Define 
\begin{equation}
c_0(d,\Theta^\star) :=\inf_{\Delta\in\mathcal C (\Theta^\star)\setminus\{0\}} \frac{\sum_{i,s}d_{i,s}\norm{\Delta_{i,(s,\cdot)}}_2^2} {\triplenorm{\Delta}_F^2}.
\label{dn:eq:unweighted-restricted-coverage}
\end{equation}
We call $c_0$ the coverage coefficient. Note that the magnitude of $c_0$ is determined by the source design $d$ and the low-rank geometry of $\Theta^\star$. 

\begin{assumption}
\label{dn:ass:curvature}
$c_0(d,\Theta^\star)>0$.
\end{assumption}
Under Assumption \ref{dn:ass:curvature}, for every $\Delta\in\mathcal C(\Theta^\star)$, $\sum_{i,s}d_{i,s}\norm{\Delta_{i,(s,\cdot)}}_2^2 \geq c_0(d,\Theta^\star)\triplenorm{\Delta}_F^2$. 

\paragraph{Confidence radii for logit estimation.}
Let 
\begin{equation}
h_B:=\int_0^1(1-u)e^{-4Bu} du
=\frac{4B-1+e^{-4B}}{16B^2}>0.    \label{dn:eq:h-B}
\end{equation}
Define 
\begin{equation}
\varepsilon_n:=\frac{A^2 e^{2B}(260B+6/h_B)\sqrt \rankbound \lambda_n}{c_0(d,\Theta^\star) },\label{dn:eq:unweighted-logit-radius}
\end{equation}
The quantity $\varepsilon_n$ bounds the source-logit error in Frobenius norm in Theorem~\ref{dn:cor:unweighted-regret}. Substituting~\eqref{dn:eq:unweighted-lambda} gives
\begin{align*}
\varepsilon_n=\frac{A^2 e^{2B}(260B+6/h_B)}{c_0(d,\Theta^\star)}
\left\{2\sqrt{\frac{2\rankbound\nu \log(2(N+SA)/\delta)}n}
+\frac{4\sqrt{2\rankbound}\log(2(N+SA)/\delta)}{3n}\right\}.    
\end{align*}

\paragraph{Target value radius.}
Let 
\begin{equation}
L_B:=\sqrt{\frac{Ae^{2B}}{e^{2B}+A-1}},
\label{dn:eq:kappa-B}
\end{equation} and define 
\begin{equation}
R_n(\pi):=
\frac{\varepsilon_n}{1-\gamma_2}
L_B \left\| \left[C_\mu(I-\gamma_1\Transition_1\mu)^{-1}\right]^\top \rho_2^\pi \right\|_2.
\label{dn:eq:unweighted-value-radius}
\end{equation}
We call $R_n(\pi)$ a confidence radius for the target soft value of candidate policy $\pi$. 

Intuitively,  $\varepsilon_n$ measures uncertainty in the fitted logits, and the remaining factors in $R_n(\pi)$ describe how that error affects the target value of candidate policy $\pi$. 

\subsubsection{Interpreting the coverage condition}\label{dn:sec:coverage-interpretation}

For any $Z\in\R^{N\times SA}$, define 
\begin{equation}\label{dn:eq:observed-block-projection}
(P_d Z)_{i,(s,\cdot)}:=\one\{d_{i,s}>0\}Z_{i,(s,\cdot)}    
\end{equation}
which denotes the projection onto observed blocks, and let $z=\operatorname{vec}(Z)\in\R^{NSA}$ be a vector that stacks the rows of $Z$ in task, state, then action order. Define the symmetric matrices $\mathbf D_d,\mathbf P_T\in\R^{NSA\times NSA}$ by
\begin{align*}
\mathbf D_d:=\operatorname{diag}\bigl((d_{i,s})_{i,(s,a)}\bigr), \qquad \mathbf P_T\operatorname{vec}(Z):=\operatorname{vec}(P_TZ) \quad\forall Z\in\R^{N\times SA}.    
\end{align*}
where each $d_{i,s}$ is repeated over all $A$ actions and $\mathbf P_T$ is the matrix representation of the projection operator $P_T$.

\paragraph{Proofs of the coverage bounds}
\label{dn:app:coverage}

\paragraph{Restricted coverage and the observed blocks.}

\begin{lemma}
\label{dn:lem:general-coverage-nullspace}
Under Assumption~\ref{dn:ass:low-rank},
\begin{equation}
c_0>0
\quad\Longleftrightarrow\quad
\ker(P_d)\cap\mathcal C(\Theta^\star)=\{0\}.
\label{dn:eq:general-coverage-nullspace}
\end{equation}
where $P_d$ is the projection onto observed blocks defined in~\eqref{dn:eq:observed-block-projection}.
\end{lemma}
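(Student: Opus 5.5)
The argument rests on two facts. The numerator of $c_0$ vanishes exactly on $\ker(P_d)$. The cone $\mathcal C(\Theta^\star)$ is a closed cone, so compactness of its intersection with the unit Frobenius sphere turns ``no nonzero zero of the numerator'' into a positive infimum.

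First observe that
\[
\sum_{i,s}d_{i,s}\norm{\Delta_{i,(s,\cdot)}}_2^2=0 \iff \Delta_{i,(s,\cdot)}=0 \text{ whenever } d_{i,s}>0 \iff P_d\Delta=0,
\]
because $d_{i,s}\ge 0$. Write $Q(\Delta)$ for this quadratic form. Then $Q$ is continuous, nonnegative, and $2$-homogeneous, and its zero set is $\ker(P_d)$.

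($\Rightarrow$) Suppose $c_0>0$ and take $\Delta\in\ker(P_d)\cap\mathcal C(\Theta^\star)$ with $\Delta\neq 0$. Then $Q(\Delta)=0$, so the ratio in \eqref{dn:eq:unweighted-restricted-coverage} equals $0$ at $\Delta$. This forces $c_0\le 0$, a contradiction. Hence the intersection is $\{0\}$.

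($\Leftarrow$) Assume $\ker(P_d)\cap\mathcal C(\Theta^\star)=\{0\}$.

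1. The cone is closed. The anchoring constraints are linear equalities, and the condition $\triplenorm{P_{T^\perp}\Delta}_*-3\triplenorm{P_T\Delta}_*\le 0$ is a sublevel set of a continuous function, since norms and linear projections are continuous.

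2. The cone is invariant under positive scaling, and $Q$ and $\triplenorm{\cdot}_F^2$ are both $2$-homogeneous. The ratio is therefore scale invariant, so
\[
c_0=\inf\{Q(\Delta):\Delta\in K\}, \qquad K:=\mathcal C(\Theta^\star)\cap\{\triplenorm{\Delta}_F=1\}.
\]

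3. The set $K$ is compact, as a closed subset of the unit sphere in finite dimensions. Assumption~\ref{dn:ass:low-rank} gives $\Theta^\star\neq0$, so $q\ge1$ and $T\neq\{0\}$. The cone may still contain nonzero elements; if $K$ were empty, the infimum would be $+\infty>0$ trivially.

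4. On $K$ the infimum of $Q$ is attained at some $\Delta_0$. If $Q(\Delta_0)=0$, then $\Delta_0\in\ker(P_d)\cap\mathcal C(\Theta^\star)$ with $\triplenorm{\Delta_0}_F=1$, contradicting the hypothesis. Hence $c_0=Q(\Delta_0)>0$.

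The only step needing care is step 1, the closedness of the cone, which is what lets the minimum on $K$ be attained. It follows at once from continuity of the nuclear norm composed with the linear maps $P_T$ and $P_{T^\perp}$, so the whole argument is routine.
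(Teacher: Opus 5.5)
Your proof is correct and follows the paper's argument: normalize to the unit Frobenius sphere, use compactness of the closed cone slice to attain the minimum, and observe that the numerator vanishes exactly on $\ker(P_d)$. The only difference is that the paper settles nonemptiness of $K$ directly, by noting that $\Theta^\star/\triplenorm{\Theta^\star}_F$ lies in the cone (it is anchored and $P_{T^\perp}\Theta^\star=0$), so your $+\infty$ fallback for an empty $K$ is never needed.
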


\begin{proof}
\begin{align*}
c_0
&=\inf_{\Delta\in\mathcal C(\Theta^\star)\setminus\{0\}}
\frac{\sum_{i,s}d_{i,s}\|\Delta_{i,(s,\cdot)}\|_2^2}
{\triplenorm{\Delta}_F^2}\\*
&=\inf_{\substack{\Delta\in\mathcal C(\Theta^\star)\\
\triplenorm{\Delta}_F=1}}
\sum_{i,s}d_{i,s}\|\Delta_{i,(s,\cdot)}\|_2^2\\*
&=\min_{\substack{\Delta\in\mathcal C(\Theta^\star)\\
\triplenorm{\Delta}_F=1}}
\sum_{i,s}d_{i,s}\|\Delta_{i,(s,\cdot)}\|_2^2.
\end{align*}
The first equality is~\eqref{dn:eq:unweighted-restricted-coverage}. For the second, divide each nonzero $\Delta$ by $\triplenorm{\Delta}_F$ which leaves the ratio unchanged and preserves the cone constraints~\eqref{dn:eq:unweighted-cone}. For the last equality, the unit-norm constraint and~\eqref{dn:eq:unweighted-cone} define a closed set. It is nonempty because $\Theta^\star\ne0$ belongs to the cone. Every matrix in this set satisfies
\[
\triplenorm{\Delta}_F^2
=\sum_{i,s}\|\Delta_{i,(s,\cdot)}\|_2^2
=1<\infty.
\]
Thus the feasible set is closed and bounded in finite dimension, hence compact; the continuous objective therefore attains its minimum.

If $0\ne\Delta\in\ker(P_d)\cap\mathcal C(\Theta^\star)$, then
\[
0\leq c_0
\leq\frac{\sum_{i,s}d_{i,s}\|\Delta_{i,(s,\cdot)}\|_2^2}
{\triplenorm{\Delta}_F^2}=0.
\]
The second inequality follows from~\eqref{dn:eq:unweighted-restricted-coverage}; the equality follows from $P_d\Delta=0$, which makes every block with $d_{i,s}>0$ vanish by~\eqref{dn:eq:observed-block-projection}. Hence $c_0>0$ implies $\ker(P_d)\cap\mathcal C(\Theta^\star)=\{0\}$.

Conversely, if $c_0=0$, let $\Delta_0$ attain the minimum above, so $\Delta_0\in\mathcal C(\Theta^\star)$ and $\triplenorm{\Delta_0}_F=1$. Then
\[
\sum_{i,s}d_{i,s}\|(\Delta_0)_{i,(s,\cdot)}\|_2^2=0.
\]
Every summand is nonnegative, so each block with $d_{i,s}>0$ must vanish:
\[
(\Delta_0)_{i,(s,\cdot)}=0
\qquad\text{whenever }d_{i,s}>0.
\]
By the definition~\eqref{dn:eq:observed-block-projection} of $P_d$, this gives
\[
P_d\Delta_0=0.
\]
Since $\Delta_0$ has unit norm, it is nonzero. Thus $c_0=0$ implies that the kernel and cone have a nonzero intersection, proving the reverse implication in~\eqref{dn:eq:general-coverage-nullspace}.
\end{proof}

\paragraph{A spectral lower bound.}

Use the singular value decomposition from Section~\ref{dn:sec:unweighted-main} and the matrices $\mathbf D_d,\mathbf P_T$ from Section~\ref{dn:sec:coverage-interpretation}.
The entries of $\mathbf P_T$ are
\begin{align*}
&(\mathbf P_T)_{(i,(s,a)),(i',(s',a'))}\\
={}&(UU^\top)_{i,i'}\one\{s=s',\ a=a'\} +\one\{i=i'\}(VV^\top)_{(s,a),(s',a')}\\
&\quad-(UU^\top)_{i,i'}(VV^\top)_{(s,a),(s',a')}.
\end{align*}

\begin{lemma}\label{dn:lem:general-coverage-spectrum}
Under Assumption~\ref{dn:ass:low-rank},
\begin{equation}
c_0\geq
\sup_{\tau\geq0}\left\{
\lambda_{\min}(\mathbf D_d-\tau\mathbf P_T)
+\frac{\tau}{1+18\rankbound}\right\}
\geq\min_{i,s}d_{i,s}.
\label{dn:eq:general-coverage-eigenvalue}
\end{equation}
\end{lemma}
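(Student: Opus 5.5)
Two inequalities need to be shown. For the first, fix $\tau\geq0$ and a nonzero $\Delta\in\mathcal C(\Theta^\star)$ with $\delta:=\operatorname{vec}(\Delta)$. The Rayleigh quotient of $\mathbf D_d-\tau\mathbf P_T$ gives
\[
\sum_{i,s}d_{i,s}\norm{\Delta_{i,(s,\cdot)}}_2^2=\delta^\top\mathbf D_d\delta
\geq\lambda_{\min}(\mathbf D_d-\tau\mathbf P_T)\triplenorm{\Delta}_F^2+\tau\,\delta^\top\mathbf P_T\delta .
\]
Since $\mathbf P_T$ is an orthogonal projection, $\delta^\top\mathbf P_T\delta=\triplenorm{P_T\Delta}_F^2$. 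It therefore suffices to show that on the cone
\[
\triplenorm{P_T\Delta}_F^2\geq\frac{\triplenorm{\Delta}_F^2}{1+18\rankbound}.
\]
Dividing by $\triplenorm{\Delta}_F^2$, taking the infimum over the cone, and then the supremum over $\tau$ gives the first inequality. The sign of $\tau$ matters here: $\tau\geq0$ is what makes the lower bound on $\triplenorm{P_T\Delta}_F^2$ usable.

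The key cone estimate goes as follows. Every element of $T$ has rank at most $2q\leq2\rankbound$, so $\triplenorm{P_T\Delta}_*\leq\sqrt{2\rankbound}\,\triplenorm{P_T\Delta}_F$. Combining this with $\triplenorm{P_{T^\perp}\Delta}_F\leq\triplenorm{P_{T^\perp}\Delta}_*$ and the cone constraint gives
\[
\triplenorm{P_{T^\perp}\Delta}_F^2\leq9\triplenorm{P_T\Delta}_*^2\leq18\rankbound\,\triplenorm{P_T\Delta}_F^2 .
\]
By Frobenius orthogonality, $\triplenorm{\Delta}_F^2=\triplenorm{P_T\Delta}_F^2+\triplenorm{P_{T^\perp}\Delta}_F^2\leq(1+18\rankbound)\triplenorm{P_T\Delta}_F^2$, which is the needed estimate. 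This step carries the real content of the lemma, and it is where the constant $18\rankbound$ comes from. The anchoring constraint $\Delta_{i,(s,a_0)}=0$ plays no role and can be dropped, since enlarging the set only lowers the infimum.

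For the second inequality, I will not optimize $\tau$; evaluating the bracket at $\tau=0$ suffices. That gives $\lambda_{\min}(\mathbf D_d)$. Since $\mathbf D_d$ is diagonal with entries $d_{i,s}$ repeated over actions, $\lambda_{\min}(\mathbf D_d)=\min_{i,s}d_{i,s}$, so the supremum is at least this value. This holds trivially even when the minimum is zero, which is the case of interest with missing blocks; there a positive $\tau$ is needed to obtain a positive bound. Assumption~\ref{dn:ass:low-rank} is used only to ensure that $q\leq\rankbound$ and that $T$ and $P_T$ are well defined with $\Theta^\star\neq0$. No step should pose difficulty beyond checking that $\mathbf P_T$ is indeed the symmetric idempotent matrix representing $P_T$, which follows from $P_T$ being Frobenius-self-adjoint and idempotent.
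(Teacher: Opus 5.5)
Your proposal is correct and follows essentially the same route as the paper. Both proofs split the Rayleigh quotient of $\mathbf D_d-\tau\mathbf P_T$ for $\tau\geq0$, derive $\triplenorm{\Delta}_F^2\leq(1+18\rankbound)\triplenorm{P_T\Delta}_F^2$ on the cone from the rank-$2\rankbound$ bound on the tangent component, and then evaluate at $\tau=0$ to get $\min_{i,s}d_{i,s}$.
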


\begin{proof}
Expanding the entries of $\mathbf P_T$ gives, for every $Z\in\R^{N\times SA}$,
\[
\mathbf P_T\operatorname{vec}(Z)
=\operatorname{vec}\!\left(UU^\top Z+ZVV^\top-UU^\top ZVV^\top\right)
=\operatorname{vec}(P_TZ).
\]
Since $P_T$ is an orthogonal projection, we have
\[
\mathbf P_T^\top=\mathbf P_T,\qquad
\mathbf P_T^2=\mathbf P_T,\qquad
\operatorname{vec}(Z)^\top\mathbf P_T\operatorname{vec}(Z)
=\triplenorm{P_TZ}_F^2.
\]

Take any $\Delta\in\mathcal C(\Theta^\star)\setminus\{0\}$ and set
$Z=\Delta$ and $z=\operatorname{vec}(Z)$. Then
\[
\begin{aligned}
\|z\|_2^2&=\triplenorm{Z}_F^2=\triplenorm{\Delta}_F^2>0,\\
z^\top\mathbf D_d z
=\sum_{i,s}d_{i,s}\sum_a Z_{i,(s,a)}^2 &=\sum_{i,s}d_{i,s}
\,\sum_a\Delta_{i,(s,a)}^2
=\sum_{i,s}d_{i,s}\|\Delta_{i,(s,\cdot)}\|_2^2.
\end{aligned}
\]
The projected matrix $P_TZ$ has rank at most $2\rankbound$ since
\[
\begin{aligned}
P_TZ&=UU^\top Z+(I-UU^\top)ZVV^\top,\\
\rank(P_TZ)
&\leq\rank(UU^\top Z)+\rank((I-UU^\top)ZVV^\top)
\leq2\rank(\Theta^\star)\leq2\rankbound.
\end{aligned}
\]
Therefore we have
\[
\begin{aligned}
\triplenorm{P_{T^\perp}Z}_F^2
&\leq\triplenorm{P_{T^\perp}Z}_*^2
\leq9\triplenorm{P_TZ}_*^2 \leq9\rank(P_TZ)\triplenorm{P_TZ}_F^2
\leq18\rankbound\triplenorm{P_TZ}_F^2.
\end{aligned}
\]
By orthogonality,
\begin{equation}
\begin{aligned}
\|z\|_2^2=\triplenorm{Z}_F^2
&=\triplenorm{P_TZ}_F^2+\triplenorm{P_{T^\perp}Z}_F^2\\
&\leq(1+18\rankbound)\triplenorm{P_TZ}_F^2
=(1+18\rankbound)z^\top\mathbf P_T z.
\end{aligned}
\label{dn:eq:weighted-tangent-energy}
\end{equation}
For every $\tau\geq0$, 
\[
\begin{aligned}
\frac{\sum_{i,s}d_{i,s}\|\Delta_{i,(s,\cdot)}\|_2^2}
{\triplenorm{\Delta}_F^2}
&=\frac{z^\top\mathbf D_d z}{\|z\|_2^2}\\
&=\frac{z^\top(\mathbf D_d-\tau\mathbf P_T)z}{\|z\|_2^2}
+\tau\frac{z^\top\mathbf P_T z}{\|z\|_2^2}\\
&\geq\lambda_{\min}(\mathbf D_d-\tau\mathbf P_T)
+\frac{\tau}{1+18\rankbound}.
\end{aligned}
\]
The last inequality uses $z^\top Mz/\|z\|_2^2\geq\lambda_{\min}(M)$ with the symmetric matrix $M=\mathbf D_d-\tau\mathbf P_T$. The second term is at least $\tau/(1+18\rankbound)$ by~\eqref{dn:eq:weighted-tangent-energy} and $\tau\geq0$. The right-hand side is independent of $\Delta$, so taking the infimum in~\eqref{dn:eq:unweighted-restricted-coverage} gives

\[
c_0\geq\lambda_{\min}(\mathbf D_d-\tau\mathbf P_T)
+\frac{\tau}{1+18\rankbound}
\qquad\text{for every }\tau\geq0.
\]
Taking the supremum of the right-hand side over $\tau\geq0$ proves the first inequality in~\eqref{dn:eq:general-coverage-eigenvalue}. Choosing $\tau=0$ gives 

\[
\sup_{\tau\geq0}
\left\{\lambda_{\min}(\mathbf D_d-\tau\mathbf P_T)
+\frac{\tau}{1+18\rankbound}\right\}
\geq\lambda_{\min}(\mathbf D_d)
=\min_{i,s}d_{i,s}.
\]
\end{proof}

\paragraph{Coverage characterization.}
The next lemma characterizes Assumption \ref{dn:ass:curvature} and gives a lower bound on $c_0$.

\begin{lemma}\label{dn:lem:unweighted-coverage}
Under Assumption~\ref{dn:ass:low-rank},
\begin{equation}
c_0(d,\Theta^\star) \geq\sup_{\tau\geq0} \left\{\lambda_{\min} \bigl(\mathbf D_d-\tau\mathbf P_T\bigr) +\frac{\tau}{1+18\rankbound}\right\} \geq\min_{i,s}d_{i,s}. \label{dn:eq:unweighted-coverage-eigenvalue-bound}
\end{equation}
where $\lambda_{\min}$ denotes the smallest eigenvalue of a matrix, so $\lambda_{\min}(\mathbf D_d-\tau\mathbf P_T) =\min_{\substack{z\in\R^{NSA}\\\|z\|_2=1}} z^\top(\mathbf D_d-\tau\mathbf P_T)z$.
Moreover, $c_0(d,\Theta^\star)>0$ if and only if $\ker(P_d)\cap\mathcal C(\Theta^\star)=\{0\}$.
\end{lemma}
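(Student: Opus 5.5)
This lemma merely packages two results already proved in this appendix, so the plan is to invoke them rather than reprove anything. The eigenvalue chain \eqref{dn:eq:unweighted-coverage-eigenvalue-bound} is exactly \eqref{dn:eq:general-coverage-eigenvalue} of Lemma~\ref{dn:lem:general-coverage-spectrum}. The ``moreover'' equivalence is exactly \eqref{dn:eq:general-coverage-nullspace} of Lemma~\ref{dn:lem:general-coverage-nullspace}. Both lemmas are stated under Assumption~\ref{dn:ass:low-rank}, which is the hypothesis here, so the proof is a citation of each.

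For a self-contained account, I would recall the key steps of the spectral bound in order. First, fix any $\Delta\in\mathcal C(\Theta^\star)\setminus\{0\}$ and set $z=\operatorname{vec}(\Delta)$. The numerator of \eqref{dn:eq:unweighted-restricted-coverage} is then $z^\top\mathbf D_d z$, and the denominator is $\|z\|_2^2$.

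Second, split $z^\top\mathbf D_d z$ as $z^\top(\mathbf D_d-\tau\mathbf P_T)z+\tau z^\top\mathbf P_T z$. The first term is at least $\lambda_{\min}(\mathbf D_d-\tau\mathbf P_T)\|z\|_2^2$. For the second term, combine four facts:
\begin{itemize}
\item $\rank(P_T\Delta)\le 2\rankbound$;
\item the cone inequality $\triplenorm{P_{T^\perp}\Delta}_*\le 3\triplenorm{P_T\Delta}_*$;
\item $\triplenorm{X}_F\le\triplenorm{X}_*\le\sqrt{\rank X}\,\triplenorm{X}_F$;
\item Frobenius orthogonality of $P_T$ and $P_{T^\perp}$.
\end{itemize}
Together these give $\|z\|_2^2\le(1+18\rankbound)\,z^\top\mathbf P_T z$, so the second term is at least $\tau\|z\|_2^2/(1+18\rankbound)$.

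Third, take the infimum over $\Delta$ and then the supremum over $\tau\ge0$. Choosing $\tau=0$ gives the final inequality, because $\lambda_{\min}(\mathbf D_d)$ is the smallest diagonal entry, which is $\min_{i,s}d_{i,s}$.

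For the equivalence, rewrite $c_0$ as a minimum of a continuous function over the compact set $\mathcal C(\Theta^\star)\cap\{\triplenorm{\Delta}_F=1\}$. This set is closed because the cone constraints are closed conditions, and it is nonempty because $\Theta^\star\neq0$ lies in the cone.
\begin{itemize}
\item If $c_0>0$, the kernel of $P_d$ meets the cone only at $0$: any nonzero $\Delta$ with $P_d\Delta=0$ gives a zero ratio, forcing $c_0=0$.
\item If $c_0=0$, the attained minimizer has unit norm and all observed blocks equal to zero, so it is a nonzero element of $\ker(P_d)\cap\mathcal C(\Theta^\star)$.
\end{itemize}

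The only step needing care is the tangent-energy inequality $\|z\|_2^2\le(1+18\rankbound)\,z^\top\mathbf P_T z$. It requires the rank bound on $P_T\Delta$, which follows from writing $P_T\Delta=UU^\top\Delta+(I-UU^\top)\Delta VV^\top$, with each summand of rank at most $q\le\rankbound$. Since this is already established in Lemma~\ref{dn:lem:general-coverage-spectrum}, there is no real obstacle.
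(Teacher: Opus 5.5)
Your proposal is correct and matches the paper's proof, which likewise obtains the lemma by combining the nullspace characterization of Lemma~\ref{dn:lem:general-coverage-nullspace} with the spectral bound of Lemma~\ref{dn:lem:general-coverage-spectrum}. Your recap of the tangent-energy inequality $\|z\|_2^2\le(1+18\rankbound)\,z^\top\mathbf P_T z$ and of the compactness argument also follows how the paper proves those two lemmas.
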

\begin{proof}[Proof of Lemma~\ref{dn:lem:unweighted-coverage}]
The coverage characterization~\eqref{dn:eq:general-coverage-nullspace} and spectral lower bound~\eqref{dn:eq:general-coverage-eigenvalue} yield Lemma~\ref{dn:lem:unweighted-coverage}.
\end{proof}

Intuitively, the coverage coefficient $c_0(d,\Theta^\star)$ is strictly positive when every nonzero error in the cone changes at least one observed block. 

\begin{remark}
A positive lower bound in~\eqref{dn:eq:unweighted-coverage-eigenvalue-bound} gives a sufficient condition for identifying $\Theta^\star$ from the source population distribution against perturbations in $\mathcal C(\Theta^\star)$. Specifically, let $\Theta$ be an anchored source logit matrix
that induces the same source population distribution as $\Theta^\star$,
and suppose $\Delta:=\Theta-\Theta^\star\in\mathcal C(\Theta^\star)$.

First, on each block with $d_{i,s}>0$, equality of the population
distributions implies equality of the conditional action probabilities:
$\softmax(\Theta_{i,(s,\cdot)})=\pi_{1,i}^\star(\cdot\mid s)$.
Since $\Theta_{i,(s,a_0)}=0$, taking action-probability ratios gives
\begin{align*}
\Theta_{i,(s,a)}
=\log\frac{\pi_{1,i}^\star(a\mid s)}{\pi_{1,i}^\star(a_0\mid s)}
=\Theta^\star_{i,(s,a)}
\qquad\forall a,\quad d_{i,s}>0.    
\end{align*}
Thus $P_d\Delta=0$: the logits agree on every observed block.

Second, because $\Delta\in\mathcal C(\Theta^\star)$ and
$c_0>0$, definition~\eqref{dn:eq:unweighted-restricted-coverage}
gives
\begin{align*}
0=\sum_{i,s}d_{i,s}\norm{\Delta_{i,(s,\cdot)}}_2^2 \geq c_0\triplenorm{\Delta}_F^2.
\end{align*}

Hence $\Delta=0$ and $\Theta=\Theta^\star$, which establishes identification within the error cone. 
The single-fit estimation error belongs to this cone on the ``good event" which happens with high probability, as we later show in Step~2 of Section~\ref{dn:app:estimation}.
\end{remark}

\paragraph{Example with positive restricted coverage with an unobserved block.}
The following example shows how the lower bound certifies positive coverage coefficient for the single logit fit procedure even when some source task-state pairs are never observed.
\begin{example}
\label{dn:ex:missing-block}
Take $N=S=A=2$, $\rankbound=1$, $\ActionSpace=\{a_0,a_1\}$. Order matrix columns as $(1,a_0),(1,a_1),(2,a_0),(2,a_1)$ and set
\begin{align*}
(d_{i,s})_{i,s}=\begin{pmatrix}1/4&1/4\\1/2&0\end{pmatrix},
\qquad
\Theta^\star=\theta\begin{pmatrix}0&1&0&0\\0&0&0&0\end{pmatrix},
\qquad 0<\theta<B.    
\end{align*}
Then $p_1=p_2=1/2$. The task-$2$, state-$2$ block is unobserved.

For an anchored error matrix $\Delta$, let $Z=\Delta$. Its four potentially nonzero coordinates are $u,v,w,x$, one for action $a_1$ in each task-state block; $x$ corresponds to the unobserved block $(2,2)$. Write
\begin{align*}
Z=\begin{pmatrix}0&u&0&v\\0&w&0&x\end{pmatrix},
\qquad z=\operatorname{vec}(Z)=(0,u,0,v,0,w,0,x)^\top.    
\end{align*}
The tangent projection keeps the first row and the $(1,a_1)$ column:
\[
P_{T}Z=\begin{pmatrix}0&u&0&v\\0&w&0&0\end{pmatrix}.
\]
Consequently, both matrices in Lemma~\ref{dn:lem:unweighted-coverage} are diagonal:
\[
\mathbf D_d=\operatorname{diag}(1/4,1/4,1/4,1/4,1/2,1/2,0,0),
\qquad \mathbf P_{T}=\operatorname{diag}(1,1,1,1,0,1,0,0).
\]
The missing coordinate $x$ is perpendicular to the tangent space. For $\tau\geq0$,
\[
\begin{aligned}
\mathbf D_d-\tau\mathbf P_{T}
&=\operatorname{diag}(1/4-\tau,1/4-\tau,1/4-\tau,1/4-\tau,1/2,1/2-\tau,0,0),\\
\lambda_{\min}(\mathbf D_d-\tau\mathbf P_{T})
&=\min\{1/4-\tau,0\}.
\end{aligned}
\]
Since $\rankbound=1$, the expression in~\eqref{dn:eq:unweighted-coverage-eigenvalue-bound} is
\[
\lambda_{\min}(\mathbf D_d-\tau\mathbf P_{T})+\frac{\tau}{19}
=\begin{cases}
\tau/19,&0\leq\tau\leq1/4,\\[1mm]
1/4-18\tau/19,&\tau\geq1/4.
\end{cases}
\]
It increases up to $\tau=1/4$ and decreases thereafter. Lemma~\ref{dn:lem:unweighted-coverage} therefore gives
\[
c_0\geq\frac1{76}>0
\]

\end{example}

\subsection{Estimation for the single logit fit}
\label{dn:app:estimation}
We prove the statistical bound for the single-fit estimator in Section~\ref{dn:sec:unweighted-procedure}.

Write $D:=N+SA$.
\paragraph{Logit estimation.}
Use $\nu$, $\lambda_n$, and the estimator $\widehat\Theta$ from~\eqref{dn:eq:unweighted-variance}--\eqref{dn:eq:unweighted-estimator} in Section~\ref{dn:sec:unweighted-procedure}, with failure probability $0<\delta<1$.

\paragraph{Coverage condition.}
Use $q$, $T$, $P_T$, and $P_{T^\perp}$ from Section~\ref{dn:sec:unweighted-main}, the anchored error cone $\mathcal C(\Theta^\star)$ from~\eqref{dn:eq:unweighted-cone}, and the coverage coefficient $c_0(d,\Theta^\star)$ from~\eqref{dn:eq:unweighted-restricted-coverage}. We will use this anchored cone throughout Section~\ref{dn:app:estimation} and assume Assumption~\ref{dn:ass:curvature}. We abbreviate this coefficient as $c_0$ for fixed $d$ and $\Theta^\star$.

\paragraph{Curvature condition.}
To relate coverage coefficient $c_0$ to likelihood curvature, for each task state pair $i,s$, define matrix $H_{i,s}\in \R^{A\times A}$
\begin{equation}
H_{i,s}:=\diag\bigl(\pi_{1,i}^\star(\cdot\mid s)\bigr)
-\pi_{1,i}^\star(\cdot\mid s)\pi_{1,i}^\star(\cdot\mid s)^\top.
\label{dn:eq:Fisher-block}
\end{equation}
Define 
\begin{align}
Q(\Delta)&:=\sum_{i,s}d_{i,s}\Delta_{i,(s,\cdot)}^\top H_{i,s}\Delta_{i,(s,\cdot)},
\label{dn:eq:population-Fisher}\\*
Q_n(\Delta)&:=\frac1n\sum_{t=1}^n\Delta_{I_t,(S_t,\cdot)}^\top H_{I_t,S_t}\Delta_{I_t,(S_t,\cdot)}.
\label{dn:eq:empirical-Fisher}
\end{align}
We call $Q$ the Fisher population quadratic and $Q_n$ the Fisher empirical quadratic.

In Step~3 below we prove that bounded logits together with Assumption~\ref{dn:ass:curvature} imply
\begin{equation}\label{dn:eq:general-curvature}
Q(\Delta)\geq \frac{e^{-2B}c_0}{A^2} \triplenorm{\Delta}_F^2 \qquad\forall\Delta\in\mathcal C(\Theta^\star).
\end{equation}

\paragraph{Confidence radius for fitted logits.}
Use $h_B$ and $\varepsilon_n$ from~\eqref{dn:eq:h-B}--\eqref{dn:eq:unweighted-logit-radius}.

\paragraph{Logit estimation guarantee.}
\begin{lemma}
\label{dn:lem:single-fit-estimation}
Under the source sampling model in Section~\ref{dn:sec:setup} and
Assumption~\ref{dn:ass:low-rank}, suppose Assumption~\ref{dn:ass:curvature} holds. With probability at least $1-\delta$,
every minimizer in~\eqref{dn:eq:unweighted-estimator} satisfies
\begin{equation}
\begin{aligned}
&\triplenorm{\widehat\Theta-\Theta^\star}_F\\
&\leq\frac{A^2 e^{2B}(260B+6/h_B)}{c_0} \left\{2\sqrt{\frac{2\rankbound\nu\log(2 (N+SA)/\delta)}n}
+\frac{4\sqrt \rankbound\,\sqrt2\log(2 (N+SA)/\delta)}{3n}\right\}.
\end{aligned}
\label{dn:eq:general-error-bound}
\end{equation}
\end{lemma}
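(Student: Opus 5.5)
We follow the standard restricted-strong-convexity argument for nuclear-norm penalized M-estimators \citep{negahban2012restricted}, adapted to the box-constrained multinomial logit loss. Write $\Delta:=\widehat\Theta-\Theta^\star$ and $D:=N+SA$. The argument has four steps: a score concentration event, a cone membership step, a curvature lower bound on the cone, and a final comparison that solves a quadratic inequality.

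\emph{Step 1 (score concentration).} The gradient is $\nabla L_n(\Theta^\star)=\frac1n\sum_t G_t$, where $G_t$ is supported on the block $(I_t,S_t)$ and equals $\pi^\star_{1,I_t}(\cdot\mid S_t)-e_{A_t}$ there, with the $a_0$ coordinate dropped by the anchoring. These matrices are mean zero and have entries bounded in norm by $\sqrt2$. Their matrix variances are:
\begin{itemize}
\item $\E G_tG_t^\top$ is diagonal in $i$ with entries $\sum_s d_{i,s}\E\|\pi-e_A\|^2\le(1-1/A)p_i$;
\item $\E G_t^\top G_t$ is block diagonal in $s$ with $A\times A$ blocks of norm at most $\frac12\sum_i d_{i,s}$ (this is the covariance bound on $I-\pi e^\top$ type terms).
\end{itemize}
The matrix Bernstein inequality then gives, with probability at least $1-\delta$,
\[
\triplenorm{\nabla L_n(\Theta^\star)}_{\op}\le\lambda_n/2 .
\]
This is exactly why $\nu$ and the constant $2\sqrt2/3$ appear in $\lambda_n$.

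\emph{Step 2 (cone).} On the event of Step 1, optimality of $\widehat\Theta$ together with convexity of $L_n$ gives
\[
0\ge L_n(\widehat\Theta)-L_n(\Theta^\star)+\lambda_n(\triplenorm{\widehat\Theta}_*-\triplenorm{\Theta^\star}_*)\ge-\tfrac{\lambda_n}{2}\triplenorm\Delta_*+\lambda_n(\triplenorm{P_{T^\perp}\Delta}_*-\triplenorm{P_T\Delta}_*),
\]
using decomposability of the nuclear norm over $T$. Rearranging yields $\triplenorm{P_{T^\perp}\Delta}_*\le3\triplenorm{P_T\Delta}_*$. Since $\Delta$ is anchored, $\Delta\in\mathcal C(\Theta^\star)$, so Assumption~\ref{dn:ass:curvature} applies to it. As a consequence, $\triplenorm\Delta_*\le4\triplenorm{P_T\Delta}_*\le4\sqrt{2\rankbound}\triplenorm\Delta_F$.

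\emph{Step 3 (curvature).} We first prove \eqref{dn:eq:general-curvature}. On anchored vectors $x$ with $x_{a_0}=0$, the Fisher block satisfies
\[
x^\top H_{i,s}x\ge\pi_{\min}\|x\|^2/A,
\]
where $\pi_{\min}\ge e^{-2B}/A$ because the logits are bounded by $B$. Combining this with the definition of $c_0$ gives $Q(\Delta)\ge e^{-2B}c_0\triplenorm\Delta_F^2/A^2$.

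Next we pass from $Q$ to the actual loss gap. Use the exact integral Taylor remainder
\[
L_n(\Theta^\star+\Delta)-L_n(\Theta^\star)-\langle\nabla L_n,\Delta\rangle=\int_0^1(1-u)\,\Delta^\top\nabla^2L_n(\Theta^\star+u\Delta)\Delta\,du .
\]
The entries of $\Delta$ are at most $2B$, so the softmax Hessian at $\Theta^\star+u\Delta$ dominates $e^{-4Bu}$ times the Hessian at $\Theta^\star$ (by a log-ratio argument on the probabilities). Integrating gives a lower bound of $h_BQ_n(\Delta)$.

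\emph{Main obstacle: $Q_n$ versus $Q$.} The remaining issue is replacing $Q_n$ by $Q$ uniformly over the cone without any sample-size condition. I would write $Q_n(\Delta)-Q(\Delta)$ and bound it on the cone by a peeling argument. An alternative is to avoid it altogether: because entries are bounded by $2B$, $Q_n-Q$ is linear in the empirical measure, and a symmetrization plus contraction bound controls it by $\lambda_n$-type terms times $B\triplenorm\Delta_*$. Either route costs an additive $O(B\lambda_n\triplenorm\Delta_*)$ rather than a multiplicative factor. I expect this to be the source of the $260B$ term, while $6/h_B$ comes from the curvature term. This is the step that needs the most care.

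\emph{Step 4 (conclude).} Combine the three pieces:
\[
h_B\frac{e^{-2B}c_0}{A^2}\triplenorm\Delta_F^2\le \tfrac32\lambda_n\triplenorm\Delta_*+CB\lambda_n\triplenorm\Delta_*\le(\tfrac32+CB)\lambda_n4\sqrt{2\rankbound}\triplenorm\Delta_F .
\]
Dividing by $\triplenorm\Delta_F$ yields $\triplenorm\Delta_F\le A^2e^{2B}(260B+6/h_B)\sqrt\rankbound\lambda_n/c_0$ after tracking constants. Substituting the expression for $\lambda_n$ then gives \eqref{dn:eq:general-error-bound}. The bound holds for every minimizer because Steps 2–4 used only the optimality inequality on the single event of Step 1.
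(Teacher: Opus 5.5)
\textbf{A genuine gap.} Your Steps 1--3 match the paper: the score event via rectangular matrix Bernstein, cone membership by decomposability, the anchored Fisher lower bound, and the $e^{-4Bu}$ Hessian comparison giving $D_{L_n}\ge h_BQ_n(\Delta)$. The gap is the step you flag yourself, which is the heart of the lemma. You never establish a uniform lower bound on $Q_n$ over the cone, and the mechanism you sketch does not work as stated.

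\emph{Why the sketched route fails.} Symmetrization plus contraction only bounds $\E\sup|Q_n-Q|$ over a fixed class such as $\mathcal B(t)=\{\Delta\in\mathcal C(\Theta^\star):\triplenorm{\Delta}_F\le t,\ \norm{\Delta}_\infty\le2B\}$.
\begin{itemize}
\item Turning this into a high-probability statement needs a separate concentration inequality (Talagrand/Bousquet). That inequality contributes an additive term of order $B^2\log(1/\delta)/n$, which is not proportional to $\triplenorm{\Delta}_*$ and is missing from your Step~4 display.
\item It also consumes its own failure probability. So your closing claim that everything holds on the single score event of Step~1 is false; the paper splits the budget $\delta/2+\delta/2$.
\item Peeling does not rescue an additive bound linear in $\triplenorm\Delta_*$ with a fixed constant $C$. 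The tail parameter must grow across shells, so the linear coefficient is not uniform. What survives is only a \emph{relative} bound.
\end{itemize}

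\emph{What the paper does instead.} It works at a single deterministic radius $t_0=A^2e^{2B}(260B+6/h_B)\sqrt\rankbound\,\lambda_n/c_0$.
\begin{itemize}
\item Talagrand, ghost-sample symmetrization, and Maurer's vector contraction combine with the expectation form of matrix Bernstein for $\frac1n\sum_t\Xi_t$. The contraction works because $\Delta\mapsto f_\Delta(i,s)$ is $4B$-Lipschitz in $H_{i,s}^{1/2}\Delta_{i,(s,\cdot)}$, thanks to $\norm\Delta_\infty\le2B$. Together these give, with probability $1-\delta/2$,
\[
\sup_{\mathcal B(t_0)}|Q_n-Q|\le65B\sqrt\rankbound\,\lambda_nt_0+\tfrac{28B^2}{3n}\log\tfrac4\delta .
\]
\item The cone intersected with the $\ell_\infty$ ball is star-shaped, and $Q_n,Q$ are quadratically homogeneous. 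Rescaling any cone element with $\triplenorm\Gamma_F\ge t_0$ to radius $t_0$ therefore gives $|Q_n(\Gamma)-Q(\Gamma)|\le\frac12Q(\Gamma)$. This is a multiplicative bound, not an additive one, and it needs no peeling.
\item The constant term is absorbed without any sample-size condition. This uses $e^{-2B}c_0/A^2\le\nu$ (compare the lower and upper bounds on $Q$) and $\rankbound\lambda_n^2\ge8\rankbound\nu\log(2D/\delta)/n$.
\item A case split on $\triplenorm\Delta_F<t_0$ versus $\triplenorm\Delta_F\ge t_0$ then finishes the proof.
\end{itemize}

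\emph{Where the constants come from.} The term $260B$ comes from requiring $65A^2e^{2B}B\sqrt\rankbound\lambda_n/(c_0t_0)\le\frac14$. The term $6/h_B$ comes from $3\sqrt2/h_B$ after losing the factor $2$ in $Q_n\ge Q/2$.

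Your final display would not reproduce the stated constant, for two reasons.
\begin{itemize}
\item You bound the score part by $\frac32\lambda_n\triplenorm\Delta_*\le6\sqrt{2\rankbound}\lambda_n\triplenorm\Delta_F$, which leads to $6\sqrt2/h_B>6/h_B$. You need the sharper $D_{L_n}\le\frac32\lambda_n\triplenorm{P_T\Delta}_*\le\frac32\sqrt{2\rankbound}\lambda_n\triplenorm\Delta_F$.
\item You drop the factor $h_B$ in front of the deviation term, which produces a $B/h_B$ contribution incompatible with $260B$.
\end{itemize}
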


\begin{proof}[Proof of Lemma~\ref{dn:lem:single-fit-estimation}]
Use the tangent space $T$ defined in Section~\ref{dn:sec:unweighted-main}. For any error matrix $\Delta$, write
\[
\Delta_T:=P_T\Delta,
\qquad
\Delta_{T^\perp}:=P_{T^\perp}\Delta.
\]
\paragraph{Step 1: the good event and its probability.}
Let $e_i\in\R^N$ and $f_{(s,a)}\in\R^{SA}$ denote the canonical basis vectors and define
\begin{equation}
X(i,s,a):=e_if_{(s,a)}^\top.
\label{dn:eq:basis-matrix}
\end{equation}
The gradient of the empirical loss at the truth is
\begin{equation}
\nabla L_n(\Theta^\star)
=
\frac1n\sum_{t=1}^n
\sum_{a\in\ActionSpace}
\left\{
\pi_{1,I_t}^\star(a\mid S_t)-\one\{A_t=a\}
\right\}
X(I_t,S_t,a).
\label{dn:eq:score-summand}
\end{equation}
Conditional on $(I_t,S_t)=(i,s)$,
\[
\E[e_{A_t}\mid i,s]=\pi_{1,i}^\star(\cdot\mid s).
\]
Define the score summand
\begin{equation}
Y_t
:=
\sum_{a\in\ActionSpace}
\left\{
\pi_{1,I_t}^\star(a\mid S_t)-\one\{A_t=a\}
\right\}
X(I_t,S_t,a).
\label{dn:eq:weighted-score}
\end{equation}
Thus $\E[Y_t]=0$, and
\begin{equation}
\triplenorm{
\nabla L_n(\Theta^\star)
}_{\op}
=
\triplenorm{\frac1n\sum_{t=1}^nY_t}_{\op}.
\label{dn:eq:weighted-score-average}
\end{equation}

For any probability vector $q\in\R^A$ and a categorical random action $A\sim q$,
\begin{equation}
\E[(q-e_A)(q-e_A)^\top]
=\diag(q)-qq^\top,
\qquad
\E\|q-e_A\|_2^2
=1-\|q\|_2^2.
\label{dn:eq:categorical-identities}
\end{equation}
Because $Y_t$ has one nonzero row and one nonzero state block, direct multiplication gives
\begin{align*}
\triplenorm{\E[Y_tY_t^\top]}_{\op}
&=
\max_i\sum_s
d_{i,s}
\left(1-\|\pi_{1,i}^\star(\cdot\mid s)\|_2^2\right),\\
\triplenorm{\E[Y_t^\top Y_t]}_{\op}
&=
\max_s
\triplenorm{
\sum_i
d_{i,s}H_{i,s}
}_{\op}.
\end{align*}
For the first variance term, Cauchy--Schwarz gives, for every probability vector $q$,
\[
1=\left(\sum_aq_a\right)^2
\leq A\sum_aq_a^2,
\qquad
1-\|q\|_2^2\leq1-\frac1A.
\]
Applying this to $q=\pi_{1,i}^\star(\cdot\mid s)$ in each summand yields
\[
\triplenorm{\E[Y_tY_t^\top]}_{\op}
\leq
\left(1-\frac1A\right)
\max_i\sum_sd_{i,s}.
\]
For the second variance term, the symmetric matrix $\diag(q)-qq^\top$ has operator norm at most its largest absolute row sum.  Its diagonal entries are $q_a(1-q_a)$ and its off-diagonal entries are $-q_aq_b$, so
\begin{equation}
\triplenorm{\diag(q)-qq^\top}_{\op}
\leq
\max_a\left\{q_a(1-q_a)+\sum_{b\ne a}q_aq_b\right\}
=\max_a2q_a(1-q_a)
\leq\frac12.
\label{dn:eq:Fisher-operator-bound}
\end{equation}
Thus $\triplenorm{H_{i,s}}_{\op}\leq1/2$.  The triangle inequality and the nonnegative weights then give
\[
\triplenorm{\E[Y_t^\top Y_t]}_{\op}
\leq
\max_s\sum_i
d_{i,s}
\triplenorm{H_{i,s}}_{\op}
\leq
\frac12\max_s\sum_id_{i,s}.
\]
Consequently, both matrix variance terms are bounded by
\[
\nu
=
\max\left\{
\left(1-\frac1A\right)
\max_i\sum_sd_{i,s},
\quad
\frac12\max_s\sum_id_{i,s}
\right\}.
\]
Furthermore, for every probability vector $q$ and action $a$, $\|q-e_a\|_2^2=1-2q_a+\|q\|_2^2\leq2$.  Hence
\begin{align}
\triplenorm{Y_t}_{\op}
&=
\|\pi_{1,I_t}^\star(\cdot\mid S_t)-e_{A_t}\|_2
\notag \leq
\sqrt2.
\label{dn:eq:score-envelope}
\end{align}
Rectangular matrix Bernstein~\cite[Theorem~6.1.1]{Tropp2015} therefore gives, for every $x>0$,
\begin{equation}
\Pp\left[
\triplenorm{\frac1n\sum_{t=1}^nY_t}_{\op}
>
\sqrt{\frac{2\nu x}{n}}
+
\frac{2\sqrt2 x}{3n}
\right]
\leq De^{-x}.
\label{dn:eq:matrix-Bernstein-score}
\end{equation}
Following the terminology of Wainwright~\cite[Section~10.2.2, proof of Proposition~10.6]{wainwright2019high}, the \emph{good event} is the event that the likelihood gradient at the truth satisfies
\begin{align}
\triplenorm{
\nabla L_n(\Theta^\star)
}_{\op}
&\leq\frac{\lambda_n}2  = \sqrt{\frac{2\nu}{n}\log\frac{2(N+SA)}{\delta}} + \frac{2\sqrt2}{3n}\log\frac{2(N+SA)}{\delta}.
\label{dn:eq:good-event}
\end{align}
Blocks of zero source probability contribute zero to the variance sums; no division by $d_{i,s}$ was used. Taking $x=\log\frac{2(N+SA)}{\delta}$ in~\eqref{dn:eq:matrix-Bernstein-score} shows that the good event~\eqref{dn:eq:good-event} has probability at least $1-\delta/2$.

\paragraph{Step 2: cone condition and upper bound on $D_{L_n}$.}
Work on the good event~\eqref{dn:eq:good-event}. Set
\[
\Delta:=\widehat\Theta-\Theta^\star.
\]
Define the Bregman divergence
\begin{equation}
D_{L_n}(\Theta^\star+\Delta,\Theta^\star)
:=
L_n(\Theta^\star+\Delta)-L_n(\Theta^\star)
-
\traceinner{\nabla L_n(\Theta^\star)}{\Delta}.
\label{dn:eq:Bregman}
\end{equation}
Optimality of $\widehat\Theta$ gives
\begin{align}
D_{L_n}(\Theta^\star+\Delta,\Theta^\star) &\leq -\traceinner{\nabla L_n(\Theta^\star)}{\Delta} + \lambda_n \left\{ \triplenorm{\Theta^\star}_* - \triplenorm{\Theta^\star+\Delta}_* \right\}.
\label{dn:eq:basic-inequality}
\end{align}
The good event~\eqref{dn:eq:good-event} and nuclear/operator duality imply
\begin{equation}
\left|
\traceinner{\nabla L_n(\Theta^\star)}{\Delta}
\right|
\leq
\frac{\lambda_n}2
\triplenorm{\Delta}_*.
\label{dn:eq:score-duality}
\end{equation}
By the definition of $T^\perp$, the row and column spaces of $\Delta_{T^\perp}$ are orthogonal to those of $\Theta^\star$.  In orthonormal bases adapted to these spaces, their sum is block diagonal, so the nuclear norms add:
\[
\triplenorm{\Theta^\star+\Delta_{T^\perp}}_*
=
\triplenorm{\Theta^\star}_*
+
\triplenorm{\Delta_{T^\perp}}_*.
\]
Using $\Delta=\Delta_T+\Delta_{T^\perp}$ and the reverse triangle inequality,
\begin{align*}
\triplenorm{\Theta^\star+\Delta}_*
&=
\triplenorm{\Theta^\star+\Delta_{T^\perp}+\Delta_T}_*\\*
&\geq
\triplenorm{\Theta^\star+\Delta_{T^\perp}}_*
-\triplenorm{\Delta_T}_*\\*
&=
\triplenorm{\Theta^\star}_*
+\triplenorm{\Delta_{T^\perp}}_*
-\triplenorm{\Delta_T}_*.
\end{align*}
Rearranging gives
\begin{equation}
\triplenorm{\Theta^\star}_*
-
\triplenorm{\Theta^\star+\Delta}_*
\leq
\triplenorm{\Delta_T}_*
-
\triplenorm{\Delta_{T^\perp}}_*.
\label{dn:eq:decomposability}
\end{equation}
Combining~\eqref{dn:eq:basic-inequality}--\eqref{dn:eq:decomposability},
\begin{equation}
D_{L_n}(\Theta^\star+\Delta,\Theta^\star)
\leq
\frac{3\lambda_n}2
\triplenorm{\Delta_T}_*
-
\frac{\lambda_n}2
\triplenorm{\Delta_{T^\perp}}_*.
\label{dn:eq:cone-inequality}
\end{equation}
The upper bound~\eqref{dn:eq:cone-inequality} uses the good event~\eqref{dn:eq:good-event} through the gradient estimate~\eqref{dn:eq:score-duality}.  Since the Bregman divergence is nonnegative by convexity of $L_n$ and $\lambda_n>0$, we obtain, on this good event,
\begin{equation}
\triplenorm{\Delta_{T^\perp}}_*
\leq
3\triplenorm{\Delta_T}_*.
\label{dn:eq:cone-condition}
\end{equation}
A tangent-space matrix has rank at most $2\rankbound$, and therefore
\begin{align}
\triplenorm{\Delta}_*
&\leq4\triplenorm{\Delta_T}_*
\leq4\sqrt{2\rankbound}\triplenorm{\Delta}_F,
\label{dn:eq:cone-size}\\
D_{L_n}(\Theta^\star+\Delta,\Theta^\star)
&\leq
\frac{3\lambda_n}2
\sqrt{2\rankbound}\triplenorm{\Delta}_F.
\label{dn:eq:Bregman-upper}
\end{align}
On the good event~\eqref{dn:eq:good-event}, $\Delta\in\mathcal C(\Theta^\star)$, so
\[
\mathcal C(\Theta^\star)=\{0\}\quad\Longrightarrow\quad\Delta=0.
\]
The remaining steps concern $\mathcal C(\Theta^\star)\ne\{0\}$.

\paragraph{Step 3: likelihood and population restricted strong convexity.}
Recall $Q,Q_n$ from~\eqref{dn:eq:population-Fisher}--\eqref{dn:eq:empirical-Fisher}.

\begin{claim}
Every anchored matrix satisfies
\begin{equation}
Q(\Delta)\geq\frac{e^{-2B}}{A^2}\sum_{i,s}d_{i,s}\|\Delta_{i,(s,\cdot)}\|_2^2.
\label{dn:eq:population-curvature}
\end{equation}
\end{claim} 

\begin{proof}[Proof of the claim]
We first bound each block below, and then sum the inequalities under source weights.

Fix $(i,s)$.  The bounded logits give, for every action $a$,
\[
\pi_{1,i}^\star(a\mid s)
=\frac{e^{\Theta^\star_{i,(s,a)}}}{\sum_b e^{\Theta^\star_{i,(s,b)}}}
\geq\frac{e^{-B}}{Ae^B}
=\frac{e^{-2B}}A.
\]
The quadratic form $\Delta_{i,(s,\cdot)}^\top H_{i,s}\Delta_{i,(s,\cdot)}$ equals the variance of $\Delta_{i,(s,a)}$ under $a\sim\pi_{1,i}^\star(\cdot\mid s)$. Writing this variance as the minimum squared deviation from a constant gives
\begin{align*}
\Delta_{i,(s,\cdot)}^\top H_{i,s}\Delta_{i,(s,\cdot)}
&=\min_{c\in\R}\sum_a\pi_{1,i}^\star(a\mid s)(\Delta_{i,(s,a)}-c)^2\\*
&\geq\frac{e^{-2B}}A\min_{c\in\R}\sum_a(\Delta_{i,(s,a)}-c)^2\\*
&=\frac{e^{-2B}}A\left[\|\Delta_{i,(s,\cdot)}\|_2^2-\frac1A\left(\sum_a\Delta_{i,(s,a)}\right)^2\right].
\end{align*}
Anchoring controls the subtracted term: since $\Delta_{i,(s,a_0)}=0$, Cauchy--Schwarz gives
\[
\left(\sum_a\Delta_{i,(s,a)}\right)^2
=\left(\sum_{a\ne a_0}\Delta_{i,(s,a)}\right)^2
\leq(A-1)\sum_{a\ne a_0}\Delta_{i,(s,a)}^2
=(A-1)\|\Delta_{i,(s,\cdot)}\|_2^2.
\]
Substitution therefore yields the required bound for each block:
\begin{equation}
\Delta_{i,(s,\cdot)}^\top H_{i,s}\Delta_{i,(s,\cdot)}
\geq\frac{e^{-2B}}A\left(1-\frac{A-1}A\right)\|\Delta_{i,(s,\cdot)}\|_2^2
=\frac{e^{-2B}}{A^2}\|\Delta_{i,(s,\cdot)}\|_2^2.
\label{dn:eq:anchored-curvature}
\end{equation}
Finally, multiply by $d_{i,s}\geq0$ and sum to obtain~\eqref{dn:eq:population-curvature}. For a cone direction, definition~\eqref{dn:eq:unweighted-restricted-coverage} supplies the stated comparison with the Frobenius norm.
\end{proof}

When $\mathcal C(\Theta^\star)\ne\{0\}$, every $\Delta\in\mathcal C(\Theta^\star)$ therefore satisfies
\[
Q(\Delta)
\geq\frac{e^{-2B}c_0}{A^2}\triplenorm{\Delta}_F^2,
\]
by the definition~\eqref{dn:eq:unweighted-restricted-coverage}. This proves~\eqref{dn:eq:general-curvature}.

We next prove the Hessian comparison~\eqref{dn:eq:Hessian-path} for the log-sum-exp function along any line segment in $[-B,B]^A$. Let $z,z'\in[-B,B]^A$, $q_u=\softmax(z+u(z'-z))$, and $q_0=\softmax(z)$.  Since every coordinate of $z'-z$ lies in $[-2B,2B]$,
\begin{equation}
\frac{q_u(a)}{q_0(a)}
=
\frac{e^{u(z'_a-z_a)}}
{\sum_bq_0(b)e^{u(z'_b-z_b)}}
\geq e^{-4Bu}.
\label{dn:eq:softmax-path-ratio}
\end{equation}
Using
\[
v^\top[\diag(q)-qq^\top]v
=
\min_c\sum_aq_a(v_a-c)^2,
\]
coordinatewise domination in~\eqref{dn:eq:softmax-path-ratio} implies
\begin{equation}
\diag(q_u)-q_uq_u^\top
\succeq
 e^{-4Bu}\{\diag(q_0)-q_0q_0^\top\}.
\label{dn:eq:Hessian-path}
\end{equation}
To apply this comparison to the empirical loss, fix an observation $t$ and take $z=\Theta^\star_{I_t,(S_t,\cdot)}$ and $z'=\Theta^\star_{I_t,(S_t,\cdot)}+\Delta_{I_t,(S_t,\cdot)}$.  Both blocks lie in $[-B,B]^A$ because the truth and the estimate are feasible.  The linear term in $l_t$ has zero second derivative, while the Hessian of log-sum-exp is the categorical covariance.  Thus, with $q_u$ corresponding to this block,
\[
\frac{d^2}{du^2}l_t(\Theta^\star+u\Delta)
=\Delta_{I_t,(S_t,\cdot)}^\top[\diag(q_u)-q_uq_u^\top]\Delta_{I_t,(S_t,\cdot)}
\geq e^{-4Bu}\Delta_{I_t,(S_t,\cdot)}^\top H_{I_t,S_t}\Delta_{I_t,(S_t,\cdot)}.
\]
Averaging over $t$ gives
\[
\frac{d^2}{du^2}L_n(\Theta^\star+u\Delta)
\geq\frac{e^{-4Bu}}n\sum_{t=1}^n
\Delta_{I_t,(S_t,\cdot)}^\top H_{I_t,S_t}\Delta_{I_t,(S_t,\cdot)}
=e^{-4Bu}Q_n(\Delta).
\]
The Bregman divergence is exactly the integral remainder after subtracting the first-order term.  Taylor's theorem therefore gives
\begin{align}
D_{L_n}(\Theta^\star+\Delta,\Theta^\star)
&=L_n(\Theta^\star+\Delta)-L_n(\Theta^\star)
-\traceinner{\nabla L_n(\Theta^\star)}{\Delta}
\notag\\*
&=\int_0^1(1-u)\frac{d^2}{du^2}L_n(\Theta^\star+u\Delta)\,du
\notag\\*
&\geq
\left[\int_0^1(1-u)e^{-4Bu}\,du\right]Q_n(\Delta)
\notag\\*
&=
\frac{4B-1+e^{-4B}}{16B^2}Q_n(\Delta).
\label{dn:eq:Bregman-lower}
\end{align}
The inequality holds by the preceding second-derivative bound and $1-u\geq0$; the last equality is $h_B$ from~\eqref{dn:eq:h-B}. 
For each fixed feasible $\Delta$, taking expectations gives $D_L(\Theta^\star+\Delta,\Theta^\star)\geq h_BQ(\Delta)$ for the population loss $L:=\E L_n$. Together with~\eqref{dn:eq:general-curvature}, this yields
\[
D_L(\Theta^\star+\Delta,\Theta^\star)
\geq\frac{h_Be^{-2B}c_0}{A^2}\triplenorm{\Delta}_F^2,
\qquad \Delta\in\mathcal C(\Theta^\star),\quad\Theta^\star+\Delta\in\mathcal M_B,
\]
which establishes population restricted strong convexity (RSC) on feasible cone directions; see Wainwright~\cite[Chapter~9, Section~9.3.1]{wainwright2019high}.

\paragraph{Step 4: empirical-process control at one deterministic radius.}
We bound the empirical-process error uniformly over eligible errors of Frobenius norm at most a fixed deterministic $t$, with failure probability at most $\delta/2$. Fix a deterministic $t>0$. Define
\begin{equation}
\mathcal B(t):=\{\Delta\in\mathcal C(\Theta^\star):
\triplenorm{\Delta}_F\leq t,\quad\norm{\Delta}_\infty\leq2B\}.
\label{dn:eq:fixed-radius-class}
\end{equation}

Write $f_\Delta(i,s)=\Delta_{i,(s,\cdot)}^\top H_{i,s}\Delta_{i,(s,\cdot)}$. Let $q_a:=\pi_{1,i}^\star(a\mid s)$. By the definition
$H_{i,s}=\diag(q)-qq^\top$,
\[
\begin{aligned}
f_\Delta(i,s)
&=\Delta_{i,(s,\cdot)}^\top[\diag(q)-qq^\top]\Delta_{i,(s,\cdot)}\\
&=\sum_a q_a\Delta_{i,(s,a)}^2
-\left(\sum_a q_a\Delta_{i,(s,a)}\right)^2\\
&=\E_{a\sim q}[\Delta_{i,(s,a)}^2]-\bigl(\E_{a\sim q}[\Delta_{i,(s,a)}]\bigr)^2=\Var_{a\sim q}(\Delta_{i,(s,a)}).
\end{aligned}
\]
Here $(i,s)$ and $\Delta$ are fixed. Thus $f_\Delta(i,s)\geq0$, and dropping the
nonnegative squared mean gives
$f_\Delta(i,s)\leq\sum_a q_a\Delta_{i,(s,a)}^2$.
For $\Delta\in\mathcal B(t)$, the constraint
$\norm{\Delta}_\infty\leq2B$ implies
$\Delta_{i,(s,a)}^2\leq4B^2$ for every action. Since $\sum_a q_a=1$,
\[
\sum_a q_a\Delta_{i,(s,a)}^2\leq4B^2\sum_a q_a=4B^2.
\]
Combining these three observations yields
\begin{equation}
0\leq f_\Delta(i,s)\leq\sum_a\pi_{1,i}^\star(a\mid s)\Delta_{i,(s,a)}^2\leq4B^2.
\label{dn:eq:fixed-radius-envelope}
\end{equation}
\emph{Bounding the Fisher operator norm.}
We next justify $\triplenorm{H_{i,s}}_{\op}\leq1/2$ directly.
For a fixed block, retain $q_a=\pi_{1,i}^\star(a\mid s)$. The entries of
$H_{i,s}=\diag(q)-qq^\top$ are
\[
(H_{i,s})_{aa}=q_a(1-q_a),\qquad
(H_{i,s})_{ab}=-q_aq_b\quad(a\ne b).
\]
Thus the sum of absolute values in row $a$ satisfies
\[
\begin{aligned}
\sum_b|(H_{i,s})_{ab}|
&=q_a(1-q_a)+\sum_{b\ne a}q_aq_b\\
&=q_a(1-q_a)+q_a(1-q_a)\\
&=2q_a(1-q_a)\leq\frac12.
\end{aligned}
\]
The second equality uses $\sum_{b\ne a}q_b=1-q_a$; the last inequality
uses $q_a(1-q_a)=1/4-(q_a-1/2)^2\leq1/4$.

\begin{samepage}
For any real matrix $M$ and any vector $x$ of the appropriate dimension,
the triangle inequality and Cauchy--Schwarz give, for each row $a$,
\[
\begin{aligned}
|(Mx)_a|^2
&=\left|\sum_b M_{ab}x_b\right|^2\\
&\leq\left(\sum_b|M_{ab}|\,|x_b|\right)^2\\
&=\left(\sum_b\sqrt{|M_{ab}|}\,
\sqrt{|M_{ab}|}\,|x_b|\right)^2\\
&\leq
\left(\sum_b|M_{ab}|\right)
\left(\sum_b|M_{ab}|\,|x_b|^2\right).
\end{aligned}
\]
\end{samepage}
Summing over rows,
\[
\begin{aligned}
\|Mx\|_2^2
&=\sum_a|(Mx)_a|^2\\
&\leq\sum_a
\left(\sum_b|M_{ab}|\right)
\left(\sum_b|M_{ab}|\,|x_b|^2\right)\\
&\leq
\left(\max_a\sum_b|M_{ab}|\right)
\sum_a\sum_b|M_{ab}|\,|x_b|^2\\
&=
\left(\max_a\sum_b|M_{ab}|\right)
\sum_b|x_b|^2\left(\sum_a|M_{ab}|\right)\\
&\leq
\left(\max_a\sum_b|M_{ab}|\right)
\left(\max_b\sum_a|M_{ab}|\right)\|x\|_2^2.
\end{aligned}
\]
For the Euclidean operator norm, taking the supremum over $x\ne0$ yields
\[
\triplenorm{M}_{\op}^2
=\sup_{x\ne0}\frac{\|Mx\|_2^2}{\|x\|_2^2}
\leq
\left(\max_a\sum_b|M_{ab}|\right)
\left(\max_b\sum_a|M_{ab}|\right).
\]
Since $H_{i,s}$ is symmetric,
\[
\triplenorm{H_{i,s}}_{\op}
\leq\max_a\sum_b|(H_{i,s})_{ab}|\leq\frac12.
\]

\emph{Converting source weights to the Frobenius norm.}
For any matrix $\Delta$, Cauchy--Schwarz and the definition of the
operator norm now give, block by block,
\[
\begin{aligned}
\Delta_{i,(s,\cdot)}^\top H_{i,s}\Delta_{i,(s,\cdot)}
&\leq\|\Delta_{i,(s,\cdot)}\|_2\,
\|H_{i,s}\Delta_{i,(s,\cdot)}\|_2\\
&\leq\triplenorm{H_{i,s}}_{\op}\|\Delta_{i,(s,\cdot)}\|_2^2
\leq\frac12\|\Delta_{i,(s,\cdot)}\|_2^2.
\end{aligned}
\]
Multiply these inequalities by $d_{i,s}$ and take the sum, we get
\begin{align*}
Q(\Delta)
&=\sum_{i,s}d_{i,s}\Delta_{i,(s,\cdot)}^\top H_{i,s}\Delta_{i,(s,\cdot)}\\*
&\leq\frac12\sum_{i,s}d_{i,s}\|\Delta_{i,(s,\cdot)}\|_2^2\\*
&\leq\frac12\left(\max_{i,s}d_{i,s}\right)
\sum_{i,s}\|\Delta_{i,(s,\cdot)}\|_2^2\\*
&=\frac12\left(\max_{i,s}d_{i,s}\right)
\triplenorm{\Delta}_F^2.
\end{align*}

Finally, for each fixed state $s$, nonnegativity of the design probabilities gives
\[
d_{i,s}
\leq\sum_jd_{j,s}
\qquad\text{for every task }i.
\]
Taking maxima over $i,s$, and then using the second term in the maximum
defining $\nu$ in~\eqref{dn:eq:unweighted-variance}, yields
\[
\frac12\max_{i,s}d_{i,s}
\leq\frac12\max_s\sum_id_{i,s}
\leq\nu.
\]
Combining the preceding bounds proves, for every matrix $\Delta$,
\begin{equation}
Q(\Delta)\leq\frac12\left(\max_{i,s}d_{i,s}\right)
\triplenorm{\Delta}_F^2
\leq\nu\triplenorm{\Delta}_F^2.
\label{dn:eq:population-upper}
\end{equation}
Consequently, for every $\Delta\in\mathcal B(t)$,
\[
Q(\Delta)\leq\nu t^2,\qquad
\Var(f_\Delta(I,S))\leq\E f_\Delta(I,S)^2
\leq4B^2 Q(\Delta)\leq4B^2\nu t^2.
\]
In addition, $0\leq Q(\Delta)\leq4B^2$, so 
$|f_\Delta(i,s)-Q(\Delta)|\leq4B^2$.

For $\eta\in\{-1,+1\}$ set
\[
U_\eta:=\sup_{\Delta\in\mathcal B(t)}\eta\{Q_n(\Delta)-Q(\Delta)\}\geq0.
\]
For each fixed sign $\eta$ and $\Delta\in\mathcal B(t)$, under $(I,S)\sim d$ the random variable $\eta\{f_\Delta(I,S)-Q(\Delta)\}$ has mean zero, absolute value at most $4B^2$, and variance at most $4B^2\nu t^2$.
Moreover,
\[
U_\eta=
\sup_{\Delta\in\mathcal B(t)}
\frac1n\sum_{\ell=1}^n\eta\{f_\Delta(I_\ell,S_\ell)-Q(\Delta)\}.
\]
Apply Talagrand's concentration inequality in
Wainwright~\cite[Eq.~(3.86), p.~90]{wainwright2019high}
with $Z=U_\eta$, $b=4B^2$, $\sigma^2\leq4B^2\nu t^2$,
$\epsilon=1$, and tail parameter $x/n$.
Using the constants stated there, for each fixed $\eta\in\{-1,+1\}$ and
$x>0$, with probability at least $1-e^{-x}$,
\begin{equation}
U_\eta\leq2\E U_\eta+2Bt\sqrt{\frac{2\nu x}{n}}+\frac{28B^2x}{3n}.
\label{dn:eq:Bousquet-local}
\end{equation}
For each $x>0$, a union bound over $\eta\in\{-1,+1\}$ yields, with probability at least $1-2e^{-x}$,
\begin{align*}
\sup_{\Delta\in\mathcal B(t)}|Q_n(\Delta)-Q(\Delta)|
&=\max_{\eta\in\{-1,+1\}}U_\eta \leq2\max_{\eta\in\{-1,+1\}}\E U_\eta
+2Bt\sqrt{\frac{2\nu x}{n}}+\frac{28B^2x}{3n}.
\end{align*}

We next control the expectations in~\eqref{dn:eq:Bousquet-local}. Let $\sigma_t$ be independent scalar Rademacher variables and $\epsilon_t\in\{-1,+1\}^A$ independent vectors with independent Rademacher coordinates, all independent of the sample. Let $(I_t',S_t')_{t=1}^n$ be an independent copy of the sample, and let $\E'$ denote expectation over this copy. For each fixed $\eta\in\{-1,+1\}$, since $Q(\Delta)=\E'f_\Delta(I_t',S_t')$,
\begin{align}
\E U_\eta
&=\E\sup_{\Delta\in\mathcal B(t)}
\E'\left[\frac1n\sum_t\eta\{f_\Delta(I_t,S_t)-f_\Delta(I_t',S_t')\}\right]\notag\\
&\leq\E\E'\sup_{\Delta\in\mathcal B(t)}
\frac1n\sum_t\eta\{f_\Delta(I_t,S_t)-f_\Delta(I_t',S_t')\}.
\label{dn:eq:fixed-radius-ghost-sample}
\end{align}
The inequality uses $\sup_\Delta\E'[\,\cdot\,]\leq\E'[\sup_\Delta(\,\cdot\,)]$ with the original sample held fixed.

For each $t$, interchanging $(I_t,S_t)$ and $(I_t',S_t')$ preserves the sampling law and reverses the difference for every $\Delta$ simultaneously. Hence the collection of differences has the same joint distribution after multiplication by the independent signs $\sigma_t$, giving
\begin{align}
&\E\sup_{\Delta\in\mathcal B(t)}\frac1n\sum_t
\eta\{f_\Delta(I_t,S_t)-f_\Delta(I_t',S_t')\}\notag\\*
=&\E\sup_{\Delta\in\mathcal B(t)}\frac1n\sum_t
\eta\sigma_t\{f_\Delta(I_t,S_t)-f_\Delta(I_t',S_t')\}\notag\\*
\leq &2\E\sup_{\Delta\in\mathcal B(t)}\frac1n\sum_t\sigma_t f_\Delta(I_t,S_t).
\label{dn:eq:fixed-radius-symmetrization}
\end{align}

For the final inequality, bound the supremum of the two sums by the sum of their suprema. Their expectations coincide because the samples have the same law and both $\eta\sigma_t$ and $-\eta\sigma_t$ are Rademacher signs.

For every $\Delta\in\mathcal B(t)$ and $(i,s)$,~\eqref{dn:eq:fixed-radius-envelope} gives
\[
\|H_{i,s}^{1/2}\Delta_{i,(s,\cdot)}\|_2^2=f_\Delta(i,s)\leq4B^2.
\]
Consequently, for any $\Delta,\Gamma\in\mathcal B(t)$, Cauchy--Schwarz yields
\[
\begin{aligned}
|f_\Delta(i,s)-f_\Gamma(i,s)|
&\leq\left(\|H_{i,s}^{1/2}\Delta_{i,(s,\cdot)}\|_2
+\|H_{i,s}^{1/2}\Gamma_{i,(s,\cdot)}\|_2\right)
\|H_{i,s}^{1/2}(\Delta-\Gamma)_{i,(s,\cdot)}\|_2\\
&\leq4B\|H_{i,s}^{1/2}(\Delta-\Gamma)_{i,(s,\cdot)}\|_2.
\end{aligned}
\]

Conditional on the sample, apply the vector contraction inequality~\cite[Theorem~3]{Maurer2016} to the scalar-valued functions $\Delta\mapsto f_\Delta(I_\ell,S_\ell)$ and vector-valued functions $\Delta\mapsto4B H_{I_\ell,S_\ell}^{1/2}\Delta_{I_\ell,(S_\ell,\cdot)}$ on $\mathcal B(t)$. This gives

\begin{equation}
\E_\sigma\sup_{\Delta\in\mathcal B(t)} \frac1n\sum_{\ell=1}^n\sigma_\ell f_\Delta(I_\ell,S_\ell) \leq4\sqrt2 B\,\E_\epsilon\sup_{\Delta\in\mathcal B(t)} \frac1n\sum_{\ell=1}^n\langle\epsilon_\ell,
H_{I_\ell,S_\ell}^{1/2}\Delta_{I_\ell,(S_\ell,\cdot)}\rangle.
\label{dn:eq:fixed-radius-vector-contraction}
\end{equation}

Here $\E_\sigma$ and $\E_\epsilon$ denote expectations over the Rademacher variables $(\sigma_\ell)_{\ell=1}^n$ and vectors $(\epsilon_\ell)_{\ell=1}^n$, respectively, with the sample $(I_\ell,S_\ell)_{\ell=1}^n$ held fixed. We denote
\begin{equation}
\Xi_t:=
\sum_a[H_{I_t,S_t}^{1/2}\epsilon_t]_a X(I_t,S_t,a).
\label{dn:eq:Fisher-Rademacher-summand}
\end{equation}

Combining~\eqref{dn:eq:fixed-radius-ghost-sample}--\eqref{dn:eq:fixed-radius-vector-contraction} yields
\begin{align}
\E U_\eta
&\leq8\sqrt2 B\,\E\sup_{\Delta\in\mathcal B(t)}
\frac1n\sum_t\langle\epsilon_t,H_{I_t,S_t}^{1/2}\Delta_{I_t,(S_t,\cdot)}\rangle
\notag\\*
&=8\sqrt2 B\,\E\sup_{\Delta\in\mathcal B(t)}
\left|\traceinner{\frac1n\sum_t\Xi_t}{\Delta}\right|
\notag\\*
&\leq64B\sqrt \rankbound\,t\,\E\triplenorm{\frac1n\sum_t\Xi_t}_{\op}.
\label{dn:eq:fixed-radius-contraction}
\end{align}
The last inequality uses~\eqref{dn:eq:cone-size}. Now fix the observations $(I_\ell,S_\ell)_{\ell=1}^n$ and the Rademacher vectors $(\epsilon_\ell)_{\ell=1}^n$, and define
\[
\mathcal L(\Delta):=
\traceinner{\frac1n\sum_{\ell=1}^n\Xi_\ell}{\Delta}.
\]
The constraints in~\eqref{dn:eq:fixed-radius-class} are unchanged under
$\Delta\mapsto-\Delta$. Thus
\[
\Delta\in\mathcal B(t)\iff-\Delta\in\mathcal B(t),
\qquad \mathcal L(-\Delta)=-\mathcal L(\Delta).
\]
Consequently,
\[
\begin{aligned}
\sup_{\Delta\in\mathcal B(t)}|\mathcal L(\Delta)|
&=\max\left\{
\sup_{\Delta\in\mathcal B(t)}\mathcal L(\Delta),\,
\sup_{\Delta\in\mathcal B(t)}[-\mathcal L(\Delta)]\right\}\\
&=\max\left\{
\sup_{\Delta\in\mathcal B(t)}\mathcal L(\Delta),\,
\sup_{\Delta\in\mathcal B(t)}\mathcal L(-\Delta)\right\}\\
&=\sup_{\Delta\in\mathcal B(t)}\mathcal L(\Delta).
\end{aligned}
\]

We next bound $\E\triplenorm{\frac1n\sum_t\Xi_t}_{\op}$ in~\eqref{dn:eq:fixed-radius-contraction} using matrix Bernstein. Conditional on $(I_t,S_t)=(i,s)$,
\[
\E_\epsilon\Xi_t=0,\qquad
\E_\epsilon[(H_{i,s}^{1/2}\epsilon_t)(H_{i,s}^{1/2}\epsilon_t)^\top]=H_{i,s}.
\]
By~\eqref{dn:eq:Fisher-Rademacher-summand}, the entries of $\Xi_t$ are
\[
[\Xi_t]_{j,(\tilde{s},a)}
=\begin{cases}
[H_{i,s}^{1/2}\epsilon_t]_a,&j=i\ \text{and}\ \tilde{s}=s,\\
0,&\text{otherwise}.
\end{cases}
\]
Thus only row $i$ and columns $(s,1),\ldots,(s,A)$ can contain nonzero entries.
The two matrix products are
\[
\begin{aligned}
[\Xi_t\Xi_t^\top]_{j,k}
&=\sum_{\tilde{s},a}[\Xi_t]_{j,(\tilde{s},a)}[\Xi_t]_{k,(\tilde{s},a)}=\one\{j=i,\ k=i\}\epsilon_t^\top H_{i,s}\epsilon_t,\\
[\Xi_t^\top\Xi_t]_{(\tilde{s},a),(\tilde{s}',b)}
&=\sum_j[\Xi_t]_{j,(\tilde{s},a)}[\Xi_t]_{j,(\tilde{s}',b)} =\one\{\tilde{s}=s,\ \tilde{s}'=s\}[H_{i,s}^{1/2}\epsilon_t]_a[H_{i,s}^{1/2}\epsilon_t]_b.
\end{aligned}
\]
Taking expectations over the signs uses
\[
\E_\epsilon[\epsilon_t^\top H_{i,s}\epsilon_t]=\tr(H_{i,s}),\qquad
\E_\epsilon\!\left[[H_{i,s}^{1/2}\epsilon_t]_a[H_{i,s}^{1/2}\epsilon_t]_b\right]=(H_{i,s})_{a,b}.
\]
Using the basis vectors $e_i,f_{(s,a)}$ from~\eqref{dn:eq:basis-matrix} and
$\Pp\{(I_t,S_t)=(i,s)\}=d_{i,s}$, we obtain
\begin{align*}
\E[\Xi_t\Xi_t^\top]
&=\sum_{i,s}d_{i,s}\tr(H_{i,s})e_i e_i^\top =\operatorname{diag}_{i=1}^N\left(\sum_sd_{i,s}\tr(H_{i,s})\right),\\
\E[\Xi_t^\top\Xi_t]
&=\sum_{i,s}d_{i,s}
\sum_{a,b}(H_{i,s})_{a,b}f_{(s,a)}f_{(s,b)}^\top =\operatorname{blockdiag}_{s=1}^S\left(\sum_id_{i,s}H_{i,s}\right).
\end{align*}

Using $\tr(H_{i,s})=1-\sum_a\pi_{1,i}^\star(a\mid s)^2\leq1-1/A$ gives
\[
\begin{aligned}
\triplenorm{\E[\Xi_t\Xi_t^\top]}_{\op}
&=\max_i\sum_sd_{i,s}\tr(H_{i,s}) \leq\left(1-\frac1A\right)\max_i\sum_sd_{i,s}
\leq\nu.
\end{aligned}
\]
On the other hand,
\[
\begin{aligned}
\triplenorm{\E[\Xi_t^\top\Xi_t]}_{\op}
&=\max_s\triplenorm{\sum_id_{i,s}H_{i,s}}_{\op} \leq\max_s\sum_id_{i,s}\triplenorm{H_{i,s}}_{\op} \leq\frac12\max_s\sum_id_{i,s}
\leq\nu.
\end{aligned}
\]
Both final inequalities use the definition of $\nu$ in~\eqref{dn:eq:unweighted-variance}.
Also,
\begin{align*}
\triplenorm{\Xi_t}_{\op}^2
&=\epsilon_t^\top H_{I_t,S_t}\epsilon_t =1-\left(\sum_a\pi_{1,I_t}^\star(a\mid S_t)\epsilon_{t,a}\right)^2
\leq1\leq 2.
\end{align*}
Thus each summand $\Xi_t$ is independent, mean zero, bounded by $\sqrt2$, with sum variance statistic at most $n\nu$. The expectation part of rectangular matrix Bernstein~\cite[Theorem~6.1.1]{Tropp2015} gives
\begin{equation}
\E\triplenorm{\frac1n\sum_t\Xi_t}_{\op}
\leq\sqrt{\frac{2\nu\log D}{n}}+\frac{\sqrt2\log D}{3n}.
\label{dn:eq:Fisher-process-expectation}
\end{equation}

Combining~\eqref{dn:eq:Bousquet-local}--\eqref{dn:eq:Fisher-process-expectation}, with probability at least $1-2e^{-x}$,
\begin{equation}
\sup_{\Delta\in\mathcal B(t)}|Q_n(\Delta)-Q(\Delta)|
\leq128B\sqrt \rankbound\,t\left[\sqrt{\frac{2\nu\log D}{n}}+\frac{\sqrt2\log D}{3n}\right]
+2Bt\sqrt{\frac{2\nu x}{n}}+\frac{28B^2x}{3n}.
\label{dn:eq:fixed-radius-high-probability}
\end{equation}
Set $x=\log(4/\delta)$, so $1-2e^{-x}=1-\delta/2$.
Since $D=N+SA\geq 2$, we have $x=\log\frac4\delta\leq\log\frac{2D}{\delta}$ and $\log D\leq\log\frac{2D}{\delta}$. The definition of $\lambda_n$ in~\eqref{dn:eq:unweighted-lambda} therefore gives
\[
\begin{aligned}
\sqrt{\frac{2\nu\log D}{n}}+\frac{\sqrt2\log D}{3n}
&\leq\frac{\lambda_n}2,\\
\sqrt{\frac{2\nu x}{n}}
&\leq\frac{\lambda_n}2.
\end{aligned}
\]
Using $\rankbound\geq1$, the first two terms in~\eqref{dn:eq:fixed-radius-high-probability} are thus bounded by
\[
64B\sqrt \rankbound\,\lambda_nt+B\lambda_nt
\leq65B\sqrt \rankbound\,\lambda_nt.
\]
Therefore, for the fixed deterministic radius $t>0$, with probability at least $1-\delta/2$,
\begin{equation}
\sup_{\Delta\in\mathcal B(t)}|Q_n(\Delta)-Q(\Delta)|
\leq65B\sqrt \rankbound\lambda_nt+\frac{28B^2}{3n}\log\frac{4}{\delta}.
\label{dn:eq:localized-deviation}
\end{equation}
This is a statement at the one fixed deterministic $t$. It is not a simultaneous statement over radii.

\paragraph{Step 5: empirical curvature beyond a deterministic radius.}
We first derive a sufficient lower bound on a deterministic radius $t$, then choose $t_0$ accordingly. Fix $t>0$ and consider any $\Gamma$ satisfying
\[
\Gamma\in\mathcal B(t),
\qquad \triplenorm{\Gamma}_F=t.
\]
By~\eqref{dn:eq:general-curvature},
\[
Q(\Gamma)\geq\frac{e^{-2B}c_0}{A^2}\triplenorm{\Gamma}_F^2
=\frac{e^{-2B}c_0}{A^2}t^2>0.
\]
On the fixed-radius event~\eqref{dn:eq:localized-deviation}, with $x=\log(4/\delta)$,
\[
\begin{aligned}
\frac{|Q_n(\Gamma)-Q(\Gamma)|}{Q(\Gamma)}
&\leq\frac{65B\sqrt \rankbound\,\lambda_nt+28B^2x/(3n)}{Q(\Gamma)}
&&\text{by~\eqref{dn:eq:localized-deviation}},\\
&\leq\frac{A^2e^{2B}\{65B\sqrt \rankbound\,\lambda_nt+28B^2x/(3n)\}}{c_0t^2}
&&\text{by~\eqref{dn:eq:general-curvature} and }\triplenorm{\Gamma}_F=t,\\
&=\frac{65A^2e^{2B}B\sqrt \rankbound\,\lambda_n}{c_0t}
+\frac{28A^2e^{2B}B^2x}{3nc_0t^2}.
\end{aligned}
\]

Setting each term to be at most $1/4$ gives
\[
\begin{aligned}
\frac{65A^2e^{2B}B\sqrt \rankbound\,\lambda_n}{c_0t}\leq\frac14
&\iff t\geq\frac{260A^2e^{2B}B\sqrt \rankbound\,\lambda_n}{c_0},\\
\frac{28A^2e^{2B}B^2x}{3nc_0t^2}\leq\frac14
&\iff t\geq\sqrt{\frac{112A^2e^{2B}B^2x}{3nc_0}}.
\end{aligned}
\]

For any $\Gamma\in\mathcal C(\Theta^\star)\setminus\{0\}$, equations~\eqref{dn:eq:general-curvature} and~\eqref{dn:eq:population-upper} imply
\[
\frac{e^{-2B}c_0}{A^2}\triplenorm{\Gamma}_F^2
\leq Q(\Gamma)\leq\nu\triplenorm{\Gamma}_F^2
\quad\Longrightarrow\quad \frac{e^{-2B}c_0}{A^2}\leq\nu.
\]
The definition of $\lambda_n$ in~\eqref{dn:eq:unweighted-lambda}, $\rankbound\geq1$, and $x=\log(4/\delta)\leq\log(2(N+SA)/\delta)$ give
\[
\rankbound\lambda_n^2
\geq\frac{8\rankbound\nu}{n}\log\frac{2(N+SA)}{\delta}
\geq\frac{8e^{-2B}c_0x}{A^2n}.
\]
Consequently, the second lower bound on $t$ satisfies
\[
\sqrt{\frac{112A^2e^{2B}B^2x}{3nc_0}}
\leq\sqrt{\frac{14}{3}}\,
\frac{A^2e^{2B}B\sqrt \rankbound\,\lambda_n}{c_0}.
\]
Since $\sqrt{14/3}<260$, the first condition implies the second. We therefore impose
\[
t\geq\frac{260 A^2e^{2B}B\sqrt \rankbound\,\lambda_n}{c_0}.
\]

We also require that $t$ bound the estimation error $\Delta=\widehat\Theta-\Theta^\star$.
On the good event~\eqref{dn:eq:good-event}, once $Q_n(\Delta)\geq Q(\Delta)/2$ has been established for $\triplenorm{\Delta}_F\geq t$, the bounds~\eqref{dn:eq:general-curvature}, \eqref{dn:eq:Bregman-lower}, and~\eqref{dn:eq:Bregman-upper} give
\[
\frac{h_Be^{-2B}c_0}{2A^2}\triplenorm{\Delta}_F^2
\leq D_{L_n}(\Theta^\star+\Delta,\Theta^\star)
\leq\frac{3\sqrt2}{2}\sqrt \rankbound\,\lambda_n\triplenorm{\Delta}_F.
\]
For $\triplenorm{\Delta}_F\geq t>0$, division by $\triplenorm{\Delta}_F$ shows that the additional requirement
\[
t\geq\frac{3\sqrt2 A^2e^{2B}\sqrt \rankbound\,\lambda_n}{h_Bc_0}
\]
ensures $\triplenorm{\Delta}_F\leq t$. The case $\triplenorm{\Delta}_F<t$ already satisfies this bound.
Since $3\sqrt2\leq6$, the following radius satisfies both requirements:
\begin{equation}
t_0:=\frac{A^2e^{2B}(260B+6/h_B)\sqrt \rankbound\,\lambda_n}{c_0}.
\label{dn:eq:localization-radius}
\end{equation}
Apply~\eqref{dn:eq:localized-deviation} once with $t=t_0$; its event holds uniformly over $\mathcal B(t_0)$.

For any $\Gamma\in\mathcal C(\Theta^\star)$ with $\norm{\Gamma}_\infty\leq2B$ and $\triplenorm{\Gamma}_F\geq t_0$, scale to that radius:
\[
\Gamma_{t_0}:=\frac{t_0}{\triplenorm{\Gamma}_F}\Gamma
\in\mathcal C(\Theta^\star),\qquad
\triplenorm{\Gamma_{t_0}}_F=t_0,\qquad
\norm{\Gamma_{t_0}}_\infty\leq\norm{\Gamma}_\infty\leq2B.
\]
Thus $\Gamma_{t_0}\in\mathcal B(t_0)$.
We evaluate the Step~4 bound at this scaled matrix $\Gamma_{t_0}$:
\[
\begin{aligned}
|Q_n(\Gamma_{t_0})-Q(\Gamma_{t_0})|
&\leq\sup_{\Delta\in\mathcal B(t_0)}|Q_n(\Delta)-Q(\Delta)|\\
&\leq65B\sqrt \rankbound\,\lambda_nt_0
+\frac{28B^2}{3n}\log\frac{4}{\delta}
\qquad\text{by~\eqref{dn:eq:localized-deviation}}.
\end{aligned}
\]
The empirical-process bound holds simultaneously for every $\Delta\in\mathcal B(t_0)$ on this event.
Also,~\eqref{dn:eq:general-curvature} gives $Q(\Gamma_{t_0})\geq\frac{e^{-2B}c_0}{A^2} t_0^2$.
To transfer the resulting relative-error bound to the original matrix $\Gamma$, use quadratic homogeneity:
\[
Q_n(\Gamma_{t_0})=\frac{t_0^2}{\triplenorm{\Gamma}_F^2}Q_n(\Gamma),
\qquad
Q(\Gamma_{t_0})=\frac{t_0^2}{\triplenorm{\Gamma}_F^2}Q(\Gamma).
\]
Consequently, on the same fixed-radius event,
\[
\frac{|Q_n(\Gamma)-Q(\Gamma)|}{Q(\Gamma)}
=\frac{|Q_n(\Gamma_{t_0})-Q(\Gamma_{t_0})|}{Q(\Gamma_{t_0})}
\leq\frac14+\frac14=\frac12.
\]
We obtain the empirical curvature bound
\begin{equation}
Q_n(\Gamma)\geq\frac12Q(\Gamma)
\geq\frac{e^{-2B}c_0}{2A^2}\triplenorm{\Gamma}_F^2,
\qquad
\Gamma\in\mathcal C(\Theta^\star),\quad
\norm{\Gamma}_\infty\leq2B,\quad
\triplenorm{\Gamma}_F\geq t_0.
\label{dn:eq:localized-empirical-curvature}
\end{equation}
On the event in~\eqref{dn:eq:localized-deviation} with $t=t_0$, which has probability at least $1-\delta/2$, inequality~\eqref{dn:eq:localized-empirical-curvature} holds simultaneously for every $\Gamma\in\mathcal C(\Theta^\star)$ satisfying $\norm{\Gamma}_\infty\leq2B$ and $\triplenorm{\Gamma}_F\geq t_0$.

\paragraph{Step 6: apply optimality and obtain the error bound.}
On the intersection of~\eqref{dn:eq:good-event} and~\eqref{dn:eq:localized-deviation} at $t=t_0$, fix any minimizer $\widehat\Theta$ and set $\Delta=\widehat\Theta-\Theta^\star$.
If $\triplenorm{\Delta}_F<t_0$, the desired bound already holds by the choice~\eqref{dn:eq:localization-radius} of $t_0$.
Otherwise,~\eqref{dn:eq:cone-condition} and feasibility give $\Delta\in\mathcal C(\Theta^\star)$ and $\norm{\Delta}_\infty\leq2B$. Combining~\eqref{dn:eq:localized-empirical-curvature} with the Bregman bounds~\eqref{dn:eq:Bregman-lower} and~\eqref{dn:eq:Bregman-upper} yields
\[
\frac{h_Be^{-2B}c_0}{2A^2}\triplenorm{\Delta}_F^2
\leq h_B Q_n(\Delta)
\leq D_{L_n}(\Theta^\star+\Delta,\Theta^\star)
\leq\frac{3\sqrt2}{2}\sqrt \rankbound\,\lambda_n
\triplenorm{\Delta}_F.
\]
Dividing by $h_Be^{-2B}c_0\triplenorm{\Delta}_F/(2A^2)>0$ and using~\eqref{dn:eq:localization-radius}, we get
\[
\triplenorm{\Delta}_F
\leq\frac{3\sqrt2 A^2e^{2B}\sqrt \rankbound\,\lambda_n}{h_Bc_0}
\leq t_0.
\]

We next verify that the good event and the fixed-radius event hold simultaneously with probability at least $1-\delta$.

Denote the good event~\eqref{dn:eq:good-event} by
\[
\mathcal E_{\mathrm{good}}
:=
\left\{
\triplenorm{\nabla L_n(\Theta^\star)}_{\op}
\leq\frac{\lambda_n}2
\right\}.
\]
By~\eqref{dn:eq:matrix-Bernstein-score} and~\eqref{dn:eq:unweighted-lambda},
\[
\Pp(\mathcal E_{\mathrm{good}}^c)
\leq
D\exp\!\left[-\log\frac{2(N+SA)}{\delta}\right]
=\frac{\delta}{2}.
\]
Use the deterministic $t_0$ from~\eqref{dn:eq:localization-radius} and define
\[
x:=\log\frac4\delta,\qquad
\mathcal E_{\mathrm{loc}}
:=\left\{
\sup_{\Gamma\in\mathcal B(t_0)}
|Q_n(\Gamma)-Q(\Gamma)|
\leq65B\sqrt \rankbound\,\lambda_nt_0+\frac{28B^2x}{3n}
\right\}.
\]
Equations~\eqref{dn:eq:fixed-radius-high-probability}--\eqref{dn:eq:localized-deviation}, at $t=t_0$, give
\[
\Pp(\mathcal E_{\mathrm{loc}}^c)\leq2e^{-x}
=2\frac{\delta}{4}=\frac{\delta}{2}.
\]

The union bound gives
\[
\Pp(\mathcal E_{\mathrm{good}}\cap\mathcal E_{\mathrm{loc}})
\geq1-\Pp(\mathcal E_{\mathrm{good}}^c)-\Pp(\mathcal E_{\mathrm{loc}}^c)
\geq1-\delta.
\]

Write the full set of minimizers in~\eqref{dn:eq:unweighted-estimator} as
\[
\widehat{\mathcal M}
:=\argmin_{\Theta\in\mathcal M_B}
\left\{L_n(\Theta)+\lambda_n\triplenorm{\Theta}_*\right\}.
\]
On $\mathcal E_{\mathrm{good}}\cap\mathcal E_{\mathrm{loc}}$, equations~\eqref{dn:eq:cone-condition} and~\eqref{dn:eq:Bregman-upper} give, for every $\widehat\Theta\in\widehat{\mathcal M}$,
\[
\Delta:=\widehat\Theta-\Theta^\star\in\mathcal C(\Theta^\star),
\qquad
D_{L_n}(\Theta^\star+\Delta,\Theta^\star)
\leq\frac{3\lambda_n}2\sqrt{2\rankbound}\,\triplenorm{\Delta}_F.
\]
Because $\Theta^\star\neq0$ and $\Theta^\star\in T$, the cone is nontrivial, so Step~6 applies to each such minimizer and gives $\triplenorm{\Delta}_F\leq t_0$. Thus, using~\eqref{dn:eq:localization-radius} and~\eqref{dn:eq:unweighted-logit-radius}, on $\mathcal E_{\mathrm{good}}\cap\mathcal E_{\mathrm{loc}}$,
\[
\sup_{\widehat\Theta\in\widehat{\mathcal M}}
\triplenorm{\widehat\Theta-\Theta^\star}_F
\leq\varepsilon_n
=\frac{A^2e^{2B}(260B+6/h_B)\sqrt \rankbound\,\lambda_n}{c_0}.
\]
Consequently,
\[
\Pp\left[
\sup_{\widehat\Theta\in\widehat{\mathcal M}}
\triplenorm{\widehat\Theta-\Theta^\star}_F
\leq\varepsilon_n
\right]
\geq\Pp(\mathcal E_{\mathrm{good}}\cap\mathcal E_{\mathrm{loc}})\geq1-\delta,
\]
which is~\eqref{dn:eq:general-error-bound}.
\end{proof}

\subsection{Reward identification and value transfer}
\label{dn:app:transfer}

Section~\ref{dn:app:estimation} bounds the error in the fitted source logits. To turn this into a target regret bound, we first identify the rewards and then bound the resulting errors in target-policy values.
Under the normalization in Assumption~\ref{dn:ass:normalization}, each task's source log-policy uniquely determines its reward (Lemma~\ref{dn:lm:irl_reward}). The Lipschitz log-softmax map and the affine reward-recovery formula thus propagate logit error into reward error. For a fixed target policy, the value error is the expected discounted sum of reward errors along its trajectories. Lemma~\ref{dn:lem:general-value} combines these relations to obtain simultaneous value-error bounds for all tasks and candidate policies.

Lemma~\ref{dn:lem:general-selection} in Section~\ref{dn:app:selection} applies these simultaneous value-error bounds to plug-in selection. It bounds the value loss relative to a comparator by the sum of that comparator's radius and the selected policy's radius. Taking the comparator to be $\pi_{2,i}^\star$ gives the regret bound. 

\subsubsection{Reward recovery under the normalization}

\begin{lemma}[Normalized reward is affine in the log-policy]
\label{dn:lm:irl_reward}
Under Assumption~\ref{dn:ass:normalization}, let
$u_i^\star(s,a):=\log\pi_{1,i}^\star(a\mid s)$ and
\[
L_\mu:=C_\mu(I-\gamma_1\Transition_1\mu)^{-1}.
\]
The unique reward inducing the source soft-optimal policy
$\pi_{1,i}^\star$ and satisfying
\eqref{dn:eq:reward-normalization} is
\[
r_i=L_\mu(u_i^\star-g)+g
   =L_\mu u_i^\star+(I-L_\mu)g.
\]
Thus, for fixed source dynamics, discount, and normalization
$(\mu,g)$, reward recovery is affine in the source log-policy.
\end{lemma}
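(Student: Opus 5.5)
The plan is to write out the source soft Bellman equations, rearrange them into a linear equation for the soft value in terms of $u_i^\star$, and use the normalization to pin down the one remaining free function. Let $Q_i$ and $V_i$ be the soft action-value and state-value functions of task $i$ in $M_{1,i}$, with entropy coefficient $1$. The soft Bellman equations~\eqref{dn:eq:source-soft-Bellman} read
\[
Q_i=r_i+\gamma_1\mathsf P_1V_i,\qquad V_i(s)=\log\sum_a e^{Q_i(s,a)},\qquad \pi_{1,i}^\star(a\mid s)=e^{Q_i(s,a)-V_i(s)}.
\]
Hence $u_i^\star=Q_i-V_i$, with $V_i$ broadcast across actions, and so
\[
r_i=u_i^\star+V_i-\gamma_1\mathsf P_1V_i .
\]
Thus a reward is consistent with $\pi_{1,i}^\star$ exactly when it has this form for some state function $V$. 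The converse direction also holds: given any $V$, define $r$ by the display. Then $Q:=u_i^\star+V$ satisfies $\log\sum_a e^{Q(s,a)}=V(s)$, because $\sum_a\pi_{1,i}^\star(a\mid s)=1$. So $(Q,V)$ solves the soft Bellman system for $r$, and by uniqueness of the soft fixed point (a $\gamma_1$-contraction) the induced soft-optimal policy is $\pi_{1,i}^\star$.

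The next step imposes Assumption~\ref{dn:ass:normalization} to pin down $V$. Applying $\mu$ to the display gives
\[
g=\mu u_i^\star+V-\gamma_1\mu\mathsf P_1V .
\]
Rather than inverting the state-level operator $I-\gamma_1\mu\mathsf P_1$, I would pass to state-action functions to match $L_\mu$. Define the state-action function $w:=V-\gamma_1\mathsf P_1V$, so that $r_i=u_i^\star+w$. Note that $\mathsf P_1\mu\, w=\mathsf P_1(\mu w)$ and that $\mu w$ is a state function. The normalization then becomes $\mu w=g-\mu u_i^\star$, and so $\mathsf P_1\mu\, w=\mathsf P_1(g-\mu u_i^\star)$.

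The cleaner route is to verify directly that $V:=\mu h$, where $h:=(I-\gamma_1\mathsf P_1\mu)^{-1}(g-u_i^\star)$, solves the normalized equation. By definition $h=(g-u_i^\star)+\gamma_1\mathsf P_1\mu h$. Taking $\mu$ of this identity gives $V=\mu g-\mu u_i^\star+\gamma_1\mu\mathsf P_1V$, where $\mu g=g$. This is exactly the normalization equation. Then
\[
r_i=u_i^\star+V-\gamma_1\mathsf P_1\mu h=u_i^\star+\mu h-(h-g+u_i^\star)=g-C_\mu h=L_\mu(u_i^\star-g)+g ,
\]
using linearity of $C_\mu$ and $(I-\gamma_1\mathsf P_1\mu)^{-1}$. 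Rearranging gives the second form $L_\mu u_i^\star+(I-L_\mu)g$.

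Uniqueness is the step needing care. Suppose two state functions $V,V'$ both satisfy the normalized equation. Their difference $\Delta$ satisfies $\Delta=\gamma_1\mu\mathsf P_1\Delta$. The operator $\mu\mathsf P_1$ is row-stochastic on state functions, so it is a sup-norm contraction after multiplying by $\gamma_1<1$; hence $\Delta=0$, and the reward is unique. The main obstacle is the bookkeeping between state and state-action functions and the broadcast convention, that is, checking $\mu g=g$ and $\mathsf P_1 V=\mathsf P_1\mu h$ for $V=\mu h$. I would also double-check the converse direction: it relies on the soft Bellman fixed point being unique for a given reward, and that uniqueness is what justifies the claim that every $V$ yields a reward inducing $\pi_{1,i}^\star$.
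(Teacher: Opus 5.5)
Your proof is correct and follows the paper's structure: parametrize every reward consistent with $\pi_{1,i}^\star$ by a state function $V$, then use the normalization and the $\gamma_1$-contraction of $\mu\Transition_1$ to pin $V$ down uniquely. The only difference is in the final algebra. The paper solves $V_{1,i}=(I-\gamma_1\mu\Transition_1)^{-1}(g-\mu u_i^\star)$ and then needs the push-through identity $(I-\gamma_1\Transition_1\mu)[I+\gamma_1\Transition_1(I-\gamma_1\mu\Transition_1)^{-1}\mu]=I$. You instead propose $V=\mu(I-\gamma_1\Transition_1\mu)^{-1}(g-u_i^\star)$ and verify it, which gives $r_i=g-C_\mu h$ directly. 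Your appeal to uniqueness of the soft Bellman fixed point in the converse direction also makes explicit a step the paper leaves implicit.
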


\begin{proof}
Fix task $i$.  First determine all reward--value pairs $(r_i,V_{1,i})$ consistent with the source policy $\pi_{1,i}^\star$.  The unit-temperature soft Bellman equations are
\begin{align}
Q_{1,i}&=r_i+\gamma_1\Transition_1V_{1,i},\notag\\*
V_{1,i}(s)&=\log\sum_a e^{Q_{1,i}(s,a)},
\qquad
\pi_{1,i}^\star(a\mid s)=e^{Q_{1,i}(s,a)-V_{1,i}(s)}.
\label{dn:eq:source-soft-Bellman}
\end{align}
Taking logarithms of the policy equation gives $Q_{1,i}=u_i^\star+V_{1,i}$.  Substituting into the first Bellman equation shows that every solution belongs to the family
\[
\left\{\bigl(u_i^\star+V-\gamma_1\Transition_1V,\,V\bigr):V\in\R^S\right\}.
\]
Conversely, choose any state function $V$, set $r_i=u_i^\star+V-\gamma_1\Transition_1V$, and let $Q_{1,i}=u_i^\star+V$.  Then $Q_{1,i}=r_i+\gamma_1\Transition_1V$, and $\sum_a e^{u_i^\star(s,a)}=1$ gives
\begin{align*}
\log\sum_a e^{Q_{1,i}(s,a)}
&=V(s)+\log\sum_a e^{u_i^\star(s,a)}=V(s),\\*
\softmax(Q_{1,i}(s,\cdot))_a
&=e^{u_i^\star(s,a)}=\pi_{1,i}^\star(a\mid s).
\end{align*}
Thus every pair in the displayed family solves the soft Bellman equations for the observed policy.

Now impose $\mu r_i=g$.  Within this family, the constraint is
\begin{align*}
g=\mu r_i&=\mu u_i^\star+V_{1,i}-\gamma_1\mu\Transition_1V_{1,i}
\end{align*}
Rearranging the preceding equality gives
\begin{align*}
(I-\gamma_1\mu\Transition_1)V_{1,i}&=g-\mu u_i^\star.
\end{align*}
Since $\mu\Transition_1$ is a stochastic state-to-state kernel and $\gamma_1<1$, this determines a unique state value
\[
V_{1,i}=(I-\gamma_1\mu\Transition_1)^{-1}(g-\mu u_i^\star),
\]
and hence a unique reward $r_i=u_i^\star+V_{1,i}-\gamma_1\Transition_1V_{1,i}$.

Substituting this value of $V_{1,i}$ directly gives
\[
r_i
=u_i^\star+(I-\gamma_1\mu\Transition_1)^{-1}(g-\mu u_i^\star)
-\gamma_1\Transition_1(I-\gamma_1\mu\Transition_1)^{-1}(g-\mu u_i^\star).
\]
Since $V_{1,i}$ and $g$ depend only on the state and $\sum_a\mu(a\mid s)=1$, averaging over actions gives $\mu V_{1,i}=V_{1,i}$ and $\mu g=g$. The definition~\eqref{dn:eq:C-mu} of $C_\mu$ and linearity of $\mu$ then give
\[
\begin{aligned}
C_\mu V_{1,i}&=V_{1,i}-\mu V_{1,i}=0,
\qquad C_\mu g=g-\mu g=0,\\
g-\mu u_i^\star&=\mu g-\mu u_i^\star=-\mu(u_i^\star-g).
\end{aligned}
\]
Using these identities and $\mu r_i=g$, we obtain
\begin{align*}
r_i
&=g+C_\mu r_i
=g+C_\mu(u_i^\star-g-\gamma_1\Transition_1V_{1,i})\\*
&=g+C_\mu\left[(u_i^\star-g)
+\gamma_1\Transition_1(I-\gamma_1\mu\Transition_1)^{-1}\mu(u_i^\star-g)\right]\\*
&=g+C_\mu\left[I+\gamma_1\Transition_1(I-\gamma_1\mu\Transition_1)^{-1}\mu\right](u_i^\star-g).
\end{align*}

The operator in brackets equals $(I-\gamma_1\Transition_1\mu)^{-1}$ since direct multiplication gives
\begin{align*}
&(I-\gamma_1\Transition_1\mu)
\left[I+\gamma_1\Transition_1(I-\gamma_1\mu\Transition_1)^{-1}\mu\right]\\
&=I-\gamma_1\Transition_1\mu
+\gamma_1\Transition_1(I-\gamma_1\mu\Transition_1)(I-\gamma_1\mu\Transition_1)^{-1}\mu\\
&=I.
\end{align*}
Therefore
\[
r_i=C_\mu(I-\gamma_1\Transition_1\mu)^{-1}(u_i^\star-g)+g.
\]
Since $L_\mu$ is linear and $(I-L_\mu)g$ does not depend on $u_i^\star$, the recovery map is affine in $u_i^\star$.
\end{proof}

\subsubsection{From fitted logits to target-policy values}
The following lemma converts the logit-error bound into simultaneous bounds on the value errors of all candidate policies.

\begin{lemma} \label{dn:lem:general-value}
Suppose Assumption~\ref{dn:ass:normalization} holds and $\Theta^\star\in\mathcal M_B$. For any feasible matrix $\widehat\Theta\in\mathcal M_B$, construct $\widehat u_i$, $\widehat r_i$, and $\widehat \softvalue_{2,i}$ by~\eqref{dn:eq:unweighted-U-hat}--\eqref{dn:eq:unweighted-J-hat}.
For any $\epsilon\geq0$, the implication
\begin{equation}
\begin{aligned}
&\triplenorm{\widehat\Theta-\Theta^\star}_F\leq \epsilon\\
&\quad\Longrightarrow\quad
\left|\widehat \softvalue_{2,i}(\pi)-\softvalue_{2,i}(\pi;r_i)\right| \leq\frac{L_B \epsilon}{1-\gamma_2}
\left\{\sum_s
\left\|\left[
\bigl[C_\mu(I-\gamma_1\Transition_1\mu)^{-1}\bigr]^\top
\rho_2^\pi\right]_{(s,\cdot)}\right\|_2^2
\right\}^{1/2}
\end{aligned}
\label{dn:eq:general-value-certificate}
\end{equation}
holds simultaneously for every task $i$ and candidate $\pi\in\Pi$.
Here $L_B$ is the constant in~\eqref{dn:eq:kappa-B}.
\end{lemma}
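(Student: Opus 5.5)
The value error of a fixed policy is linear in the reward error, so I would reduce it to a logit error and then apply Cauchy--Schwarz. Fix a task $i$ and a policy $\pi\in\Pi$. By \eqref{dn:eq:unweighted-J-hat} and the occupancy identity, the entropy terms cancel, giving
\[
\widehat\softvalue_{2,i}(\pi)-\softvalue_{2,i}(\pi;r_i)=\frac{1}{1-\gamma_2}\langle\rho_2^\pi,\widehat r_i-r_i\rangle .
\]
Lemma~\ref{dn:lm:irl_reward} and \eqref{dn:eq:unweighted-r-hat} give $\widehat r_i-r_i=L_\mu(\widehat u_i-u_i^\star)$ with $L_\mu=C_\mu(I-\gamma_1\Transition_1\mu)^{-1}$, because the $g$ terms cancel. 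Transposing,
\[
\langle\rho_2^\pi,\widehat r_i-r_i\rangle=\langle w,\widehat u_i-u_i^\star\rangle,
\qquad w:=L_\mu^\top\rho_2^\pi .
\]
Splitting by state and applying Cauchy--Schwarz twice (within each state block, then across states) yields
\[
|\langle w,\widehat u_i-u_i^\star\rangle|\le\Bigl(\sum_s\|w_{(s,\cdot)}\|_2^2\Bigr)^{1/2}\Bigl(\sum_s\|(\widehat u_i-u_i^\star)_{(s,\cdot)}\|_2^2\Bigr)^{1/2}.
\]

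Next I need a blockwise Lipschitz bound for the log-softmax map $\theta\mapsto\theta-\mathrm{logsumexp}(\theta)\one$ on anchored vectors in $[-B,B]^A$. The target inequality is
\[
\|(\widehat u_i-u_i^\star)_{(s,\cdot)}\|_2\le L_B\|(\widehat\Theta-\Theta^\star)_{i,(s,\cdot)}\|_2 .
\]
Summing the squares over $s$ and $i$ then gives $\sum_s\|(\widehat u_i-u_i^\star)_{(s,\cdot)}\|_2^2\le L_B^2\triplenorm{\widehat\Theta-\Theta^\star}_F^2\le L_B^2\epsilon^2$, and combining this with the previous step finishes the proof. The bound holds simultaneously for all $i$ and $\pi$, since the argument is deterministic given $\widehat\Theta$.

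The Lipschitz constant is the main obstacle. The Jacobian of log-softmax along the path $\theta_u=\Theta^\star+u(\widehat\Theta-\Theta^\star)$ is $J=I-\one q_u^\top$, where $q_u=\softmax(\theta_u)$ and $\theta_u\in[-B,B]^A$ by convexity. Its squared operator norm is $\lambda_{\max}\bigl(I-q\one^\top-\one q^\top+A\,qq^\top\bigr)$. On the orthogonal complement of $\mathrm{span}\{\one,q\}$ this form equals $1$, so the maximum is attained on the two-dimensional span. There a direct computation gives $\|J\|_{\op}^2=A\|q\|_2^2$.

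I would then maximize $\|q\|_2^2$ over softmax vectors with logits in $[-B,B]$. By convexity the maximum is at an extreme point, and a symmetrization argument over the bottom coordinates should reduce it to one coordinate at $+B$ and the rest at $-B$. That gives $q_1=e^{2B}/(e^{2B}+A-1)$, the other coordinates equal to $1/(e^{2B}+A-1)$, and
\[
A\|q\|_2^2=\frac{A(e^{4B}+A-1)}{(e^{2B}+A-1)^2}.
\]
This should be checked against $L_B^2=Ae^{2B}/(e^{2B}+A-1)$; I expect the sharper bound $\|J\|^2\le A\max_a q_a$ to match it exactly. Indeed $\|q\|_2^2\le\max_a q_a$ and $\max_a q_a\le e^{2B}/(e^{2B}+A-1)$, so I will use this route. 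Integrating the Jacobian along the segment (mean value theorem) then gives the block Lipschitz constant $L_B$.
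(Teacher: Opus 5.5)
Your proposal is correct and follows the paper's proof essentially step for step: the entropy terms cancel under the occupancy identity, the $g$ terms cancel in $\widehat r_i-r_i$, you pair with $\bigl[C_\mu(I-\gamma_1\Transition_1\mu)^{-1}\bigr]^\top\rho_2^\pi$, apply Cauchy--Schwarz within and then across state blocks, and bound log-softmax blockwise using $\triplenorm{I_A-\one q^\top}_{\op}\le\sqrt A\|q\|_2$ and $\|q\|_2^2\le\max_a q_a\le e^{2B}/(e^{2B}+A-1)$; the only difference is that the paper integrates the softmax into a single averaged probability vector $\bar\pi$ instead of integrating the Jacobian along the segment, and the two are equivalent. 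One small slip: $A\max_a q_a$ is a weaker bound than $A\|q\|_2^2$, not a sharper one, but it is exactly the bound that yields $L_B$, so your final route is fine.
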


\begin{proof}
Fix $i$ and $\pi$ and write
\[
\Delta:=\widehat\Theta-\Theta^\star.
\]
\paragraph{Step 1: the blockwise log-softmax difference.}
For each state $s$, the fundamental theorem of calculus applied to
log-sum-exp gives
\begin{align*}
&\log\sum_b e^{\widehat\Theta_{i,(s,b)}}
-\log\sum_b e^{\Theta^\star_{i,(s,b)}}=\int_0^1\sum_b
\softmax(\Theta^\star_{i,(s,\cdot)}+t\Delta_{i,(s,\cdot)})_b
\Delta_{i,(s,b)}\,dt\\
&=\bar\pi_{i,(s,\cdot)}^\top\Delta_{i,(s,\cdot)},
\end{align*}
where we denote
\[
\bar\pi_{i,(s,\cdot)}:=\int_0^1
\softmax(\Theta^\star_{i,(s,\cdot)}+t\Delta_{i,(s,\cdot)})\,dt.
\]
The vector $\bar\pi_{i,(s,\cdot)}$ is a probability vector. Subtracting the
log-normalizing constants from the logits therefore yields
\begin{equation}
\widehat u_{i,(s,\cdot)}-u_i^\star(s,\cdot)
=(I_A-\one\bar\pi_{i,(s,\cdot)}^\top)\Delta_{i,(s,\cdot)}.
\label{dn:eq:logsoftmax-path-difference}
\end{equation}

\paragraph{Step 2: the log-softmax operator bound.}
We now proceed to bound the operator norm of the matrix in~\eqref{dn:eq:logsoftmax-path-difference}. Fix a state $s$. For any $x\in\R^A$, expanding the squared norm and completing the square gives
\begin{align*}
\|(I_A-\one\bar\pi_{i,(s,\cdot)}^\top)x\|_2^2
&=\|x\|_2^2-2(\one^\top x)(\bar\pi_{i,(s,\cdot)}^\top x)
+A(\bar\pi_{i,(s,\cdot)}^\top x)^2\\
&=\|x\|_2^2-\frac{(\one^\top x)^2}{A}
+A\left[\left(\bar\pi_{i,(s,\cdot)}-\frac1A\one\right)^\top x\right]^2.
\end{align*}
Dropping the nonpositive term $-(\one^\top x)^2/A$ and applying Cauchy--Schwarz yields
\begin{align*}
\|(I_A-\one\bar\pi_{i,(s,\cdot)}^\top)x\|_2^2
&\leq\left(1+A\left\|\bar\pi_{i,(s,\cdot)}-\frac1A\one\right\|_2^2\right)\|x\|_2^2=A\|\bar\pi_{i,(s,\cdot)}\|_2^2\|x\|_2^2,
\end{align*}
where the last equality uses $\one^\top\bar\pi_{i,(s,\cdot)}=1$. Taking the supremum over $\|x\|_2=1$ gives
\begin{equation}
\triplenorm{I_A-\one\bar\pi_{i,(s,\cdot)}^\top}_{\op}
\leq\sqrt A\|\bar\pi_{i,(s,\cdot)}\|_2.
\label{dn:eq:logsoftmax-Jacobian}
\end{equation}

Since $\Theta^\star,\widehat\Theta\in\mathcal M_B$, every coordinate of $\Theta^\star+t\Delta=(1-t)\Theta^\star+t\widehat\Theta$ lies in $[-B,B]$ for all $t\in[0,1]$. 
\[
\softmax(z)_a
=\frac{1}{1+\sum_{b\ne a}e^{z_b-z_a}}
\leq\frac{1}{1+(A-1)e^{-2B}}
=\frac{e^{2B}}{e^{2B}+A-1}.
\]
The same bound holds for every coordinate of $\bar\pi_{i,(s,\cdot)}$. Since
$\|\bar\pi_{i,(s,\cdot)}\|_2^2\leq\|\bar\pi_{i,(s,\cdot)}\|_\infty
\sum_a\bar\pi_{i,(s,a)}=\|\bar\pi_{i,(s,\cdot)}\|_\infty$, applying~\eqref{dn:eq:logsoftmax-Jacobian} gives
\begin{equation}
\triplenorm{I_A-\one\bar\pi_{i,(s,\cdot)}^\top}_{\op}
\leq\sqrt{\frac{Ae^{2B}}{e^{2B}+A-1}}=L_B.
\label{dn:eq:Jacobian-bound}
\end{equation}
Combining~\eqref{dn:eq:logsoftmax-path-difference} and~\eqref{dn:eq:Jacobian-bound},
\begin{equation}
\|\widehat u_{i,(s,\cdot)}-u_i^\star(s,\cdot)\|_2
\leq L_B\|\Delta_{i,(s,\cdot)}\|_2.
\label{dn:eq:logsoftmax-block-bound}
\end{equation}
In particular, for each fixed task $i$,
\begin{align}
\sum_s 
\|\widehat u_{i,(s,\cdot)}-u_i^\star(s,\cdot)\|_2^2
&\leq L_B^2\sum_s \|\Delta_{i,(s,\cdot)}\|_2^2\notag\\*
&\leq L_B^2\sum_j\sum_s \|\Delta_{j,(s,\cdot)}\|_2^2
=L_B^2\triplenorm\Delta_F^2.
\label{dn:eq:logprob-weighted-bound}
\end{align}

\paragraph{Step 3: reward recovery and the occupancy identity.}
Subtracting~\eqref{dn:eq:normalized-reward-recovery} from
\eqref{dn:eq:unweighted-r-hat} gives
\begin{equation}
\widehat r_i-r_i
=C_\mu(I-\gamma_1\Transition_1\mu)^{-1}
(\widehat u_i-u_i^\star).
\label{dn:eq:general-reward-difference}
\end{equation}
For a fixed target policy, its entropy term is the same for both rewards, so it cancels in their value difference. By the occupancy identity~\eqref{dn:eq:occupancy-identity}, we therefore have
\begin{align}
\widehat \softvalue_{2,i}(\pi)-\softvalue_{2,i}(\pi;r_i)
&=\frac{1}{1-\gamma_2}\sum_{s,a}\rho_2^\pi(s,a)
\{\widehat r_i(s,a)-r_i(s,a)\}\notag\\*
&=\frac{1}{1-\gamma_2}
\left\langle
\bigl[C_\mu(I-\gamma_1\Transition_1\mu)^{-1}\bigr]^\top\rho_2^\pi,
\widehat u_i-u_i^\star
\right\rangle.
\label{dn:eq:general-value-difference}
\end{align}

We upper bound the inner product by
\begin{align*}
&\left|\left\langle
\bigl[C_\mu(I-\gamma_1\Transition_1\mu)^{-1}\bigr]^\top\rho_2^\pi,
\widehat u_i-u_i^\star\right\rangle\right|\\
&=\left|\sum_s\left\langle
\left[\bigl[C_\mu(I-\gamma_1\Transition_1\mu)^{-1}\bigr]^\top\rho_2^\pi\right]_{(s,\cdot)},
\widehat u_{i,(s,\cdot)}-u_i^\star(s,\cdot)
\right\rangle\right|\\
&\leq\sum_s
\left\|\left[\bigl[C_\mu(I-\gamma_1\Transition_1\mu)^{-1}\bigr]^\top
\rho_2^\pi\right]_{(s,\cdot)}\right\|_2
\|\widehat u_{i,(s,\cdot)}-u_i^\star(s,\cdot)\|_2\\
&\leq
\left\{\sum_s\left\|\left[
\bigl[C_\mu(I-\gamma_1\Transition_1\mu)^{-1}\bigr]^\top
\rho_2^\pi\right]_{(s,\cdot)}\right\|_2^2\right\}^{1/2}
\left\{\sum_s
\|\widehat u_{i,(s,\cdot)}-u_i^\star(s,\cdot)\|_2^2
\right\}^{1/2}\\
&\leq L_B\triplenorm\Delta_F
\left\{\sum_s\left\|\left[
\bigl[C_\mu(I-\gamma_1\Transition_1\mu)^{-1}\bigr]^\top
\rho_2^\pi\right]_{(s,\cdot)}\right\|_2^2\right\}^{1/2}.
\end{align*}

The first equality decomposes the inner product by state. The first inequality uses the triangle inequality and Cauchy--Schwarz on the actions. The second inequality is Cauchy--Schwarz over $s$, and the last inequality follows by ~\eqref{dn:eq:logprob-weighted-bound}. Now use $\triplenorm\Delta_F\leq \epsilon$ and divide by $1-\gamma_2$ to obtain~\eqref{dn:eq:general-value-certificate}. 
\end{proof}

\subsection{Plug-in selection and soft regret}
\label{dn:app:selection}

We apply the simultaneous value-error bounds from Section~\ref{dn:app:transfer} to plug-in selection, then use the soft regret--KL identity to bound policy error.

\subsubsection{Selection from simultaneous value certificates}
The following lemma bounds the loss from maximizing estimated values by the sum of the comparator's and selected policy's error radii.

\begin{lemma}
\label{dn:lem:general-selection}
Fix a task $i$. Suppose $\widetilde \softvalue_{2,i}:\Pi\to\R$ and nonnegative radii $R_i(\pi)$ satisfy
\begin{equation}
|\widetilde \softvalue_{2,i}(\pi)-\softvalue_{2,i}(\pi;r_i)|\leq R_i(\pi)
\qquad\forall\pi\in\Pi.
\label{dn:eq:selection-value-certificates}
\end{equation}
Then for any $\widetilde\pi_i$ such that 
\[
\widetilde\pi_i\in\argmax_{\pi\in\Pi}\widetilde \softvalue_{2,i}(\pi),
\]
every $\pi\in\Pi$ satisfies
\begin{equation}
\softvalue_{2,i}(\pi;r_i)-\softvalue_{2,i}(\widetilde\pi_i;r_i)
\leq R_i(\pi)+R_i(\widetilde\pi_i).
\label{dn:eq:general-comparator-bound}
\end{equation}
\end{lemma}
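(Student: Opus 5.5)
The plan is a three-inequality chain that uses only the certificate \eqref{dn:eq:selection-value-certificates} and the optimality of $\widetilde\pi_i$ for $\widetilde\softvalue_{2,i}$. Fix an arbitrary comparator $\pi\in\Pi$ and split the true value gap through the surrogate values:
\[
\softvalue_{2,i}(\pi;r_i)-\softvalue_{2,i}(\widetilde\pi_i;r_i)
=\bigl[\softvalue_{2,i}(\pi;r_i)-\widetilde\softvalue_{2,i}(\pi)\bigr]
+\bigl[\widetilde\softvalue_{2,i}(\pi)-\widetilde\softvalue_{2,i}(\widetilde\pi_i)\bigr]
+\bigl[\widetilde\softvalue_{2,i}(\widetilde\pi_i)-\softvalue_{2,i}(\widetilde\pi_i;r_i)\bigr].
\]

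Bound the three brackets in turn:
\begin{itemize}
\item The first bracket is at most $R_i(\pi)$, by the certificate applied at $\pi$.
\item The middle bracket is nonpositive, because $\widetilde\pi_i$ maximizes $\widetilde\softvalue_{2,i}$ over $\Pi$ and $\pi\in\Pi$.
\item The third bracket is at most $R_i(\widetilde\pi_i)$, by the certificate applied at $\widetilde\pi_i$. This is legitimate because $\widetilde\pi_i\in\Pi$ and the certificate holds for every $\pi\in\Pi$.
\end{itemize}
Summing the three bounds gives \eqref{dn:eq:general-comparator-bound}.

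There is no real obstacle. The one point to check is that all quantities are finite, so that the subtractions are legitimate. The true and estimated values are finite sums over $\rho_2^\pi$ of finite rewards plus entropy terms, using the convention $0\log0=0$. The maximizer $\widetilde\pi_i$ is assumed to exist by hypothesis.

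The conclusion holds for every $\pi$ and every maximizing choice, because nothing beyond the certificate and argmax membership was used.
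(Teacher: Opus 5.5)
Your proof is correct and matches the paper's. The paper writes it as the chain $\softvalue_{2,i}(\pi;r_i)\leq\widetilde\softvalue_{2,i}(\pi)+R_i(\pi)\leq\widetilde\softvalue_{2,i}(\widetilde\pi_i)+R_i(\pi)\leq\softvalue_{2,i}(\widetilde\pi_i;r_i)+R_i(\widetilde\pi_i)+R_i(\pi)$, which is your three-bracket decomposition in the same order: the certificate at $\pi$, then the argmax property, then the certificate at $\widetilde\pi_i$.
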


\begin{proof}
Fix a comparator $\pi\in\Pi$, then we have
\begin{align*}
\softvalue_{2,i}(\pi;r_i)
&\leq\widetilde \softvalue_{2,i}(\pi)+R_i(\pi)\\*
&\leq\widetilde \softvalue_{2,i}(\widetilde\pi_i)+R_i(\pi)\\*
&\leq \softvalue_{2,i}(\widetilde\pi_i;r_i)+R_i(\widetilde\pi_i)+R_i(\pi).
\end{align*}
The first and last inequalities use~\eqref{dn:eq:selection-value-certificates} for $\pi$ and $\widetilde\pi_i$, respectively. The middle inequality holds because $\widetilde\pi_i$ maximizes $\widetilde \softvalue_{2,i}$. Subtracting $\softvalue_{2,i}(\widetilde\pi_i;r_i)$ proves~\eqref{dn:eq:general-comparator-bound}.
\end{proof}

\subsubsection{Soft regret as KL divergence}
Take $\pi=\pi_{2,i}^\star$ in~\eqref{dn:eq:general-comparator-bound} and apply the following identity to $\widetilde\pi_i$ to bound its occupancy-weighted KL divergence from the optimal target policy.

\begin{lemma} \label{dn:lem:soft-regret-KL}\label{lem:kl_regret_equiv}
For every task $i$ and every stationary target policy $\pi$, let $d_2^\pi$ be its normalized occupancy as in~\eqref{dn:eq:target-state-occupancy}, then
\begin{equation}
(1-\gamma_2)\left[
\softvalue_{2,i}(\pi_{2,i}^\star;r_i)
-\softvalue_{2,i}(\pi;r_i)
\right]
=\sum_s d_2^\pi(s)
\operatorname{KL}\!\left(\pi(\cdot\mid s)\,\middle\|\,\pi_{2,i}^\star(\cdot\mid s)\right).
\label{dn:eq:soft-regret-KL}
\end{equation}
\end{lemma}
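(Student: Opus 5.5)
The plan is the standard soft-RL performance-difference argument, using the soft Bellman characterization of $\pi_{2,i}^\star$. Let $Q^\star$ and $V^\star$ be the target soft-optimal action and state values for task $i$. They satisfy
\[
Q^\star=r_i+\gamma_2\mathsf P_2V^\star,\qquad
V^\star(s)=\log\sum_a e^{Q^\star(s,a)},\qquad
\pi_{2,i}^\star(a\mid s)=e^{Q^\star(s,a)-V^\star(s)}.
\]
Consequently $\softvalue_{2,i}(\pi_{2,i}^\star;r_i)=\sum_s d_0(s)V^\star(s)$. This identity follows by unrolling the Bellman equation under $\pi_{2,i}^\star$: the entropy term $-\log\pi_{2,i}^\star=V^\star-Q^\star$ exactly telescopes against the reward. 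Finiteness and boundedness of $r_i$ on the finite spaces make all series absolutely convergent.

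The key pointwise identity holds for any $\pi$ and any $s$:
\[
\sum_a\pi(a\mid s)\{Q^\star(s,a)-\log\pi(a\mid s)\}
=V^\star(s)-\operatorname{KL}\bigl(\pi(\cdot\mid s)\,\|\,\pi_{2,i}^\star(\cdot\mid s)\bigr).
\]
It is obtained by substituting $Q^\star(s,a)=V^\star(s)+\log\pi_{2,i}^\star(a\mid s)$. The convention $0\log0=0$ handles actions where $\pi$ puts zero mass, and the KL is finite because $\pi_{2,i}^\star$ has full support.

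Next I would write $r_i(s,a)-\log\pi(a\mid s)=Q^\star(s,a)-\log\pi(a\mid s)-\gamma_2(\mathsf P_2V^\star)(s,a)$ and take discounted expectations along $\pi$-trajectories in $\mathsf P_2$ from $d_0$. By the pointwise identity, each term equals $V^\star(S_t)-\operatorname{KL}_t-\gamma_2\E[V^\star(S_{t+1})\mid S_t,A_t]$. Summing over $t$ with weights $\gamma_2^t$, the $V^\star$ terms telescope to $\E_{d_0}V^\star(S_0)$; boundedness of $V^\star$ justifies the rearrangement. This gives
\[
\softvalue_{2,i}(\pi;r_i)=\softvalue_{2,i}(\pi_{2,i}^\star;r_i)
-\E_{\mathsf P_2,\pi,d_0}\Bigl[\sum_t\gamma_2^t\operatorname{KL}\bigl(\pi(\cdot\mid S_t)\,\|\,\pi_{2,i}^\star(\cdot\mid S_t)\bigr)\Bigr].
\]
Finally, I would apply the occupancy identity~\eqref{dn:eq:occupancy-identity} to the state function $f(s,a)=\operatorname{KL}(\pi(\cdot\mid s)\|\pi_{2,i}^\star(\cdot\mid s))$, which is bounded and does not depend on $a$. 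This converts the expectation to $\frac{1}{1-\gamma_2}\sum_s d_2^\pi(s)\operatorname{KL}(\cdot)$, and multiplying by $1-\gamma_2$ yields~\eqref{dn:eq:soft-regret-KL}.

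The main obstacle is only bookkeeping. One must justify the telescoping and the interchange of sum and expectation, which is immediate from boundedness in the finite-state discounted setting. One must also confirm that policies with zero-probability actions are covered by the convention $0\log 0=0$.
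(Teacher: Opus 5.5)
Your proposal is correct and follows essentially the same route as the paper. Both arguments use the soft Bellman characterization to write $\log\pi_{2,i}^\star = Q^\star - V^\star$, substitute this into the regularized reward to produce the statewise KL, telescope the $V^\star$ terms along $\pi$-trajectories in $\mathsf P_2$ (with the tail $\gamma_2^T\E V^\star(S_T)\to 0$), and finish with the occupancy identity. Your extra justification that $\softvalue_{2,i}(\pi_{2,i}^\star;r_i)=\sum_s d_0(s)V^\star(s)$, which the paper only asserts, is just the zero-KL case $\pi=\pi_{2,i}^\star$ of that same telescoping.
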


\begin{proof}
Fix $i$, and let $V_{2,i}^\star(s)$ be the optimal target soft value starting from state $s$ under reward $r_i$.  The soft Bellman equations with entropy coefficient $1$ give
\begin{align*}
V_{2,i}^\star(s)
&=\log\sum_a
\exp\!\left\{r_i(s,a)+\gamma_2\sum_{s'}\Transition_2(s'\mid s,a)V_{2,i}^\star(s')\right\},\\*
\pi_{2,i}^\star(a\mid s)
&=
\exp\!\left\{r_i(s,a)+\gamma_2\sum_{s'}\Transition_2(s'\mid s,a)V_{2,i}^\star(s')-V_{2,i}^\star(s)\right\}.
\end{align*}
The rewards are finite, $\gamma_2<1$, and entropy is bounded by $\log A$, so $V_{2,i}^\star$ is finite and the policy formula gives $\pi_{2,i}^\star(a\mid s)>0$.  Taking logarithms yields
\[
\log\pi_{2,i}^\star(a\mid s)
=r_i(s,a)+\gamma_2\sum_{s'}\Transition_2(s'\mid s,a)V_{2,i}^\star(s')-V_{2,i}^\star(s).
\]
For every action with $\pi(a\mid s)>0$, substituting this expression into the regularized reward under $\pi$ gives
\begin{align*}
r_i(s,a)-\log\pi(a\mid s)
&=V_{2,i}^\star(s)-\gamma_2\sum_{s'}\Transition_2(s'\mid s,a)V_{2,i}^\star(s')
-\log\frac{\pi(a\mid s)}{\pi_{2,i}^\star(a\mid s)}.
\end{align*}
Averaging the last term over $a\sim\pi(\cdot\mid s)$ gives
\[
\sum_a\pi(a\mid s)\log\frac{\pi(a\mid s)}{\pi_{2,i}^\star(a\mid s)}
=\operatorname{KL}\!\left(\pi(\cdot\mid s)\,\middle\|\,\pi_{2,i}^\star(\cdot\mid s)\right),
\]
where actions of zero probability contribute zero.

Now generate a trajectory using $\Transition_2$, $\pi$, and initial distribution $\InitialDist$.  For each finite horizon $T$, conditional expectation gives
\begin{align*}
&\E_{\Transition_2,\pi,\InitialDist}\sum_{t=0}^{T-1}\gamma_2^t \left[V_{2,i}^\star(S_t)-\gamma_2\sum_{s'}\Transition_2(s'\mid S_t,A_t)V_{2,i}^\star(s')\right]\\*
=&\sum_{t=0}^{T-1}\gamma_2^t\E_{\Transition_2,\pi,\InitialDist}V_{2,i}^\star(S_t) -\sum_{t=0}^{T-1}\gamma_2^{t+1}\E_{\Transition_2,\pi,\InitialDist}V_{2,i}^\star(S_{t+1})\\*
=&\sum_s\InitialDist(s)V_{2,i}^\star(s) -\gamma_2^T\E_{\Transition_2,\pi,\InitialDist}V_{2,i}^\star(S_T).
\end{align*}
Since the state and action spaces are finite, the rewards are finite, and $\pi_{2,i}^\star$ is positive, $V_{2,i}^\star$ and the statewise KL terms are bounded.  Thus the second term vanishes as $T\to\infty$, and the discounted sums converge.  Also,
\[
\sum_s\InitialDist(s)V_{2,i}^\star(s)
=\softvalue_{2,i}(\pi_{2,i}^\star;r_i).
\]
Substituting the regularized-reward identity above and rearranging therefore gives
\begin{align*}
&\softvalue_{2,i}(\pi_{2,i}^\star;r_i)
-\softvalue_{2,i}(\pi;r_i)\\
&=\E_{\Transition_2,\pi,\InitialDist}\sum_{t\geq0}\gamma_2^t
\operatorname{KL}\!\left(\pi(\cdot\mid S_t)\,\middle\|\,\pi_{2,i}^\star(\cdot\mid S_t)\right)\\
&=\frac1{1-\gamma_2}\sum_s d_2^\pi(s) \operatorname{KL}\!\left(\pi(\cdot\mid s)\,\middle\|\,\pi_{2,i}^\star(\cdot\mid s)\right).
\end{align*}
The last equality follows by the occupancy identity~\eqref{dn:eq:occupancy-identity}.  Multiplying by $1-\gamma_2$ proves~\eqref{dn:eq:soft-regret-KL}.
\end{proof}

\subsection{The zero-truth case}
\label{dn:app:zero-truth}

This section treats the case when $\Theta^\star =0$ that was excluded in
Assumption~\ref{dn:ass:low-rank}. 
Then $\pi_{1,i}^\star(a\mid s)=\frac1A$
for all $i,s,a$.

\begin{proposition}[Exact recovery when $\Theta^\star=0$]
\label{dn:prop:zero-truth}
Suppose $\Theta^\star=0$. Choose $\lambda_n$
by~\eqref{dn:eq:unweighted-lambda} with $0<\delta<1$.
With probability at least $1-\delta/2$, every minimizer
$\widehat\Theta$ of~\eqref{dn:eq:unweighted-estimator} equals zero.
No restricted-coverage assumption is required.
If Assumption~\ref{dn:ass:normalization} holds, the estimates
in~\eqref{dn:eq:unweighted-U-hat}--\eqref{dn:eq:unweighted-J-hat} also satisfy
\[
\widehat r_i=r_i,
\qquad
\widehat \softvalue_{2,i}(\pi)=\softvalue_{2,i}(\pi;r_i)
\qquad\forall i\in[N],\ \pi\in\Pi
\]
on the same event.
\end{proposition}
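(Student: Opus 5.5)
The plan is to reuse Steps 1--2 of the proof of Lemma~\ref{dn:lem:single-fit-estimation}, which need neither curvature nor $\Theta^\star\neq0$. Work on the good event~\eqref{dn:eq:good-event}, $\triplenorm{\nabla L_n(0)}_{\op}\leq\lambda_n/2$. Its probability is at least $1-\delta/2$ by the same rectangular matrix Bernstein bound~\eqref{dn:eq:matrix-Bernstein-score} with $x=\log(2(N+SA)/\delta)$. The variance and envelope computations there used only the categorical structure of the score, so they hold verbatim at the uniform policy.

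Fix any minimizer $\widehat\Theta$ and compare with the feasible point $0\in\mathcal M_B$. Optimality gives $L_n(\widehat\Theta)+\lambda_n\triplenorm{\widehat\Theta}_*\leq L_n(0)$. By convexity of $L_n$, $L_n(\widehat\Theta)-L_n(0)\geq\traceinner{\nabla L_n(0)}{\widehat\Theta}\geq-\frac{\lambda_n}{2}\triplenorm{\widehat\Theta}_*$, using nuclear/operator duality. Hence $\frac{\lambda_n}{2}\triplenorm{\widehat\Theta}_*\leq0$, so $\widehat\Theta=0$ because $\lambda_n>0$. This is the special case $T=\{0\}$ of the cone argument, where $\Delta_T=0$ forces $\Delta_{T^\perp}=0$. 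No coverage constant enters, and the conclusion holds simultaneously for every minimizer on the single event.

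For the reward and value conclusions, $\widehat\Theta=0=\Theta^\star$ implies $\widehat u_{i,(s,a)}=-\log A=u_i^\star(s,a)$ for all $i,s,a$. Then~\eqref{dn:eq:unweighted-r-hat} and Lemma~\ref{dn:lm:irl_reward} give the same formula, so $\widehat r_i=r_i$. Substituting into~\eqref{dn:eq:unweighted-J-hat} gives $\widehat\softvalue_{2,i}(\pi)=\softvalue_{2,i}(\pi;r_i)$ for all $\pi$, by the occupancy identity.

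The only step needing care is checking that the good-event probability bound from Step~1 applies unchanged when $\Theta^\star=0$. I expect this to go through directly, since that step never used $\Theta^\star\ne0$ or Assumption~\ref{dn:ass:curvature}.
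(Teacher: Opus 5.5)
Your proposal is correct and follows the paper's proof essentially step for step: you obtain the good event $\triplenorm{\nabla L_n(0)}_{\op}\leq\lambda_n/2$ from the matrix Bernstein bound at $x=\log(2(N+SA)/\delta)$, and compare any minimizer with the feasible point $0$ using optimality, convexity, and nuclear/operator duality to get $\frac{\lambda_n}{2}\triplenorm{\widehat\Theta}_*\leq0$. You then conclude, as the paper does, from $\widehat u_i=u_i^\star=-\log A$ that $\widehat r_i=r_i$ and the values coincide.
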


\begin{proof}
At $\Theta^\star=0$, the concentration bound~\eqref{dn:eq:matrix-Bernstein-score}
with $x=\log(2(N+SA)/\delta)$ and the tuning~\eqref{dn:eq:unweighted-lambda}
show that the event~\eqref{dn:eq:good-event} holds with probability at least $1-\delta/2$:
\[
\triplenorm{\nabla L_n(0)}_{\op}
\leq\frac{\lambda_n}2.
\]
On this event, fix any minimizer $\widehat\Theta$
of~\eqref{dn:eq:unweighted-estimator}. Since $0\in\mathcal M_B$,
\[
\begin{aligned}
0
&\geq L_n(\widehat\Theta)-L_n(0)
+\lambda_n\triplenorm{\widehat\Theta}_*\\
&\geq\traceinner{\nabla L_n(0)}{\widehat\Theta}
+\lambda_n\triplenorm{\widehat\Theta}_*\\
&\geq\frac{\lambda_n}2
\triplenorm{\widehat\Theta}_*
\geq0.
\end{aligned}
\]
The first inequality is optimality of $\widehat\Theta$;
the second is convexity of $L_n$; the third follows
from~\eqref{dn:eq:score-duality} with $\Theta^\star=0$ and
$\Delta=\widehat\Theta$.
Since $\lambda_n>0$,
these inequalities force $\widehat\Theta=0$.

Equation~\eqref{dn:eq:unweighted-U-hat} now gives
$\widehat u_i(s,a)=-\log A=u_i^\star(s,a)$
for every $i,s,a$.
Under Assumption~\ref{dn:ass:normalization},
Lemma~\ref{dn:lm:irl_reward} and~\eqref{dn:eq:unweighted-r-hat} therefore give
$\widehat r_i=r_i$.
Substituting into~\eqref{dn:eq:unweighted-J-hat} proves the claimed
value equality for every $i\in[N]$ and $\pi\in\Pi$.
\end{proof}

Proposition~\ref{dn:prop:zero-truth} applies to the single-fit estimator in~\eqref{dn:eq:unweighted-estimator}.